\providecommand{\U}[1]{\protect\rule{.1in}{.1in}}
\newcommand{\R}{\mathbb{R}}
\def\conv{\textrm{conv}}
\DeclareMathOperator*{\argmin}{argmin}
\DeclareMathOperator*{\argmax}{argmax}
\newtheorem{thm}{Theorem}[section]
\newtheorem{prop}[thm]{Proposition}
\newtheorem{lem}[thm]{Lemma}
\newtheorem{cor}[thm]{Corollary}
\theoremstyle{definition}
\newtheorem{ass}{Assumption}
\newtheorem{exe}[thm]{Example}
\newtheorem{rem}[thm]{Remark}
\numberwithin{equation}{section}
\definecolor{dgreen}{rgb}{0.00,0.49,0.00}
\definecolor{Brown}{rgb}{0.45,0.0,0.05}
\title{Representation and Regression Problems in Neural Networks: Relaxation, Generalization, and Numerics}
\date{\today}
\author[K. Liu and E. Zuazua]{Kang Liu$^1$}
\address{$^1$Institut de Mathematiques de Bourgogne, Universite Bourgogne Europe, CNRS, 21000 Dijon, France.}
\author{Enrique Zuazua$^2$$^3$$^4$}
\address{$^2$Chair for Dynamics, Control, Machine Learning, and Numerics (Alexander von Humboldt Professorship), Department of Mathematics, Friedrich\ -\ Alexander\ -\ Universit\"at Erlangen\ -\ N\"urnberg, 91058 Erlangen, Germany.}
\address{$^3$Departamento de Matem\'aticas, Universidad Aut\'onoma de Madrid, 28049 Madrid, Spain.}
\address{$^4$Chair of Computational Mathematics, Fundaci\'on Deusto, Av.\@ de las Universidades, 24, 48007 Bilbao, Basque Country, Spain.}
\email{kang.liu@u-bourgogne.fr, enrique.zuazua@fau.de}
\keywords{Neural network, mean-field relaxation, representer theorem, generalization, numerical algorithm}
\subjclass[2020]{68T07, 68T09, 90C06, 90C26}
\thanks{This work was supported by the Alexander von Humboldt-Professorship program, the European Union's Horizon Europe MSCA project ModConFlex, the Transregio 154 Project of the DFG, AFOSR 24IOE027 project, grants PID2020-112617GB-C22, TED2021-131390B-I00 of MINECO and PID2023-146872OB-I00 of MICIU (Spain),  and the Madrid Government - UAM Agreement for the Excellence of the University Research Staff in the context of the V PRICIT (Regional Programme of Research and Technological Innovation).}
\begin{document}

\begin{abstract}
In this work, we address three non-convex optimization problems associated with the training of shallow neural networks (NNs) for exact and approximate representation, as well as for regression tasks. Through a mean-field approach, we convexify these problems and, applying a representer theorem, prove the absence of relaxation gaps. We establish generalization bounds for the resulting NN solutions, assessing their predictive performance on test datasets and, analyzing the impact of key hyperparameters on these bounds, propose optimal choices. 

On the computational side, we examine the discretization of the convexified problems and derive convergence rates. For low-dimensional datasets, these discretized problems are efficiently solvable using the simplex method. For high-dimensional datasets, we propose a sparsification algorithm that, combined with gradient descent for over-parameterized shallow NNs, yields effective solutions to the primal problems.
\end{abstract}

\maketitle

\section{Introduction}

Machine learning has made significant progress and expanded across various fields over the last decade \cite{lecun2015deep}. The shallow neural networks (NNs), as the earliest prototypical NN architecture, have garnered great interest in the mathematical study of NNs (see \cite{cybenko1989approximation,pinkus1999approximation,bach2017breaking,chizat2018global} for instance) due to their simple formulation and nonlinear nature. Shallow NNs take the form:
\begin{equation}\label{eq:shallow}
f_{\text{shallow}} (x, \Theta) = \sum_{j=1}^P \omega_j \sigma(\langle a_j, x \rangle + b_j),
\end{equation}
where $x\in \R^d$ is the input, \( P \in \mathbb{N}_{+} \) is the number of neurons, and \( \Theta = \{(\omega_j, a_j, b_j)\}_{j=1}^P \) represent the parameters, with \(\omega_j \in \mathbb{R}\), \(a_j \in \mathbb{R}^d\), and \(b_j \in \mathbb{R}\). The function \(\sigma\) is a Lipschitz nonlinear activation function.

The supervised learning task is as follows:
Given a set of features \( X = \{x_i \in \mathbb{R}^d\}_{i=1}^N \) and corresponding labels \( Y = \{y_i \in \mathbb{R}\}_{i=1}^N \), the goal is to find \( P\) and \( \Theta \) such that
\begin{equation}\label{pb:ml}
    f_{\text{shallow}}(x_i, \Theta) \approx y_i, \quad \text{for } i = 1, \ldots, N.
\end{equation}
Problem \eqref{pb:ml} is tied to the universal approximation property (UAP) of shallow NNs, a topic extensively explored in the literature. The UAP asserts that functions obeying the ansatz \eqref{eq:shallow}  have the capacity to approximate any continuous function with arbitrary precision (in compact sets) as the number of neurons $P$ approaches infinity (with a suitable choice of $\Theta$), see \cite{cybenko1989approximation,pinkus1999approximation}.

Beyond the UAP, with a finite sample set, we can often achieve an exact mapping from features to their corresponding labels, i.e.,
\begin{equation}\label{eq:representation}
 f_{\text{shallow}}(x_i, \Theta) = y_i, \quad \text{for } i = 1, \ldots, N.
\end{equation}
If such a parameter \( \Theta \) exists for any distinct dataset of size \( N \), we say that this NN architecture has the \( N \)-sample representation property (also referred to as finite-sample expressivity in some references). This property is well-known to hold for non-polynomial activation functions in \cite[Thm.\@ 5.1]{pinkus1999approximation}.

\subsection{Motivation and problems statement} The choice of \(\Theta\) satisfying \eqref{pb:ml} (or \eqref{eq:representation}) is typically non-unique and may even be an infinite set.
However, in practice, most of these parameters $\Theta$ suffer from the so-called ``overfitting" issue,  as discussed in \cite{hoefler2021sparsity} and reference therein. Such $\Theta$ result in poor predictive performance on test datasets distinct from the training one, indicating the inadequate generalization of the trained model.  
A practical approach to mitigate overfitting is to pursue ``sparse" solutions, as suggested in \cite{srivastava2014dropout,mocanu2018scalable}. 
The sparsity of shallow NNs is related to the \(\|\cdot\|_{\ell^0}\)-pseudo-norm of the vector \(\omega = (\omega_1, \ldots, \omega_P)\), since the \(j\)-th neuron is activated if and only if \(\omega_j \neq 0\). However, the nonconvexity of \(\|\omega\|_{\ell^0}\) makes it very difficult to handle in optimization problems. A natural idea from compressed sensing is to replace \(\|\omega\|_{\ell^0}\) with \(\|\omega\|_{\ell^1}\), see \cite[Eq.\@ 1.9 and Eq.\@ 1.12]{candes2006quantitative} for instance. This approach is used in the following three optimization problems addressed in this article. 
Another motivation for the use of \( \|\omega\|_{\ell^1} \) from a generalization perspective of Shallow NNs is discussed in Remark \ref{rem:motivation}.

Fix a training dataset $(X,Y)=\{ (x_i, y_i)\in \R^{d+1} \}_{i=1}^N$.

First, we consider the \textit{sparse exact representation} problem corresponding to \eqref{eq:representation}:
\begin{equation}\label{intro_pb:NN_exact}\tag{\text{P$_0$}}
    \inf_{\{(\omega_j,a_j,b_j)\in \R\times \Omega\}_{j=1}^P } \sum_{j=1}^P |\omega_j|, \qquad \text{s.t. }  \sum_{j=1}^P \omega_j \sigma(\langle a_j, x_i \rangle + b_j ) = y_i  , \quad \text{for }i=1,\ldots,N,
\end{equation}
where $\Omega$ is a compact subset of $\R^{d+1}$ for $(a_j,b_j)$. The necessity of the compactness of \(\Omega\) and its suitable choices are discussed in Remarks \ref{rem:compact} and \ref{rem:omega}.

When the observed labels \(y_i\) are subject to noise, problem \eqref{pb:ml} is more appropriate. Introducing a tolerance parameter \(\epsilon \geq 0\) in \eqref{intro_pb:NN_exact}, this leads to the following \textit{sparse approximate representation} problem:
\begin{equation}\label{intro_pb:NN_epsilon}\tag{\text{P$_\epsilon$}}
    \inf_{\{(\omega_j,a_j,b_j)\in \R\times \Omega\}_{j=1}^P } \sum_{j=1}^P |\omega_j|, \qquad \text{s.t. } \left| \sum_{j=1}^P \omega_j \sigma(\langle a_j, x_i \rangle + b_j ) - y_i \right| \leq \epsilon , \quad \text{for }i=1,\ldots,N.
\end{equation}
 
Another common approach in machine learning to handle noisy data is to consider the following \textit{sparse regression} problem:
\begin{equation}\label{intro_pb:NN_reg}\tag{P$^{\text{reg}}_{\lambda}$}
   \inf_{\{(\omega_j,a_j,b_j)\in \R\times \Omega\}_{j=1}^P } \sum_{j=1}^P |\omega_j| + \frac{\lambda }{N}\sum_{i=1}^N \ell \left( \sum_{j=1}^P \omega_j \sigma(\langle a_j, x_i \rangle + b_j ) - y_i  \right),
\end{equation}
where $\lambda> 0$ and $\ell\colon \R\to \R_{+}$ represents a general fidelity error function. As is common in machine learning, parameters \(P \), \(\epsilon\), and \(\lambda\) in the above problems are referred to as ``hyperparameters". 

These sparse learning problems exhibit non-convexity properties (either in the feasible set or the objective function), making them challenging to handle. This article is devoted to the theoretical and numerical analysis of these sparse learning problems.

\subsection{Methodology and main results}
The main results and techniques employed in this article are presented in the following.

\medskip
\textit{Existence.}
We begin by presenting the \( N \)-sample representation property of \eqref{eq:shallow} in Theorem \ref{thm:NN_exists}, under the condition \( P \geq N \). Subsequently, the existence of solutions of \eqref{intro_pb:NN_exact}, \eqref{intro_pb:NN_epsilon}, and \eqref{intro_pb:NN_reg} follows as a corollary of this property, as stated in Corollary \ref{lm:NN_existence}.
We emphasize that Theorem \ref{thm:NN_exists} generalizes the classical result \cite[Thm.\@ 5.1]{pinkus1999approximation} in two ways: (1) the labels \( y_i \) can lie in a multidimensional space (see Remark \ref{rem:high-d}), and (2) the representation property remains valid even when the bias terms \( b_j \) are bounded. Furthermore, we demonstrate the link between \eqref{intro_pb:NN_exact} and \eqref{intro_pb:NN_epsilon} and \eqref{intro_pb:NN_reg}, by letting the hyperparameters \(\epsilon \to 0\) and \(\lambda \to \infty\), respectively, as discussed in Remarks \ref{rem:exact-regression} and \ref{rem:exact-regression2}.

\medskip
\textit{Relaxation.} The condition \(P \geq N\) renders \eqref{intro_pb:NN_exact}, \eqref{intro_pb:NN_epsilon}, and \eqref{intro_pb:NN_reg} high-dimensional non-convex optimization problems, since \(N\) is assumed to be large and \(\sigma\) is nonlinear. A popular approach to address this nonlinearity in shallow NNs is to consider their ``mean-field" relaxation (or infinitely wide counterpart), as discussed in \cite{mei2018mean,chizat2018global,sirignano2020mean}. This leads to convex relaxations of primal problems, formulated in \eqref{pb:NN_exact_rel}, \eqref{pb:NN_epsilon_rel}, and \eqref{pb:NN_reg_rel}.
To show the general idea of this relaxation, we present \eqref{pb:NN_epsilon_rel}, the mean-field relaxation of \eqref{intro_pb:NN_epsilon}, as an example:
\begin{equation*}
    \inf_{\mu\in \mathcal{M}(\Omega)} \|\mu\|_{\text{TV}}, \qquad \text{s.t. }  \left|\int_{\Omega} \sigma( \langle a,x_i \rangle  + b)  d\mu(a,b) - y_i\right| \leq \epsilon, \quad \text{for }i=1,\ldots,N,
\end{equation*}
where $\mathcal{M}(\Omega)$ is the Radon measure space supported in $\Omega$ and  $\|\cdot\|_{\text{TV}}$ is the total variation norm. It is worth noting that the previous relaxed problem is convex, as the objective function is convex and the constraint functions are linear with respect to $\mu$. By leveraging the ``Representer Theorem" from \cite{fisher1975spline},  in Theorem \ref{thm:NN_exists_0} we show that there is no relaxation gap between sparse learning problems and their relaxations: if $P\geq N$, then
\begin{equation}\label{intro_eq:equivalence}
\text{val}\eqref{intro_pb:NN_exact} = \text{val}\eqref{pb:NN_exact_rel},\quad \text{val}\eqref{intro_pb:NN_epsilon} = \text{val}\eqref{pb:NN_epsilon_rel}, \quad \text{val}\eqref{intro_pb:NN_reg} = \text{val}\eqref{pb:NN_reg_rel}.
\end{equation}
Moreover,  all extreme points of the solution sets of these relaxed problems are empirical measures associated with solutions of their primal problems with $P=N$. The equivalence \eqref{intro_eq:equivalence} implies that the value of the primal problems remains unchanged when \(P\) exceeds \(N\). Consequently, the optimal choice for \(P\) is \(N\), as outlined in Remark \ref{rem:P1}.

\medskip
\textit{Generalization bounds.} 
We examine the generalization properties of the shallow NN \eqref{eq:shallow} with parameters $\Theta$ on some testing dataset $(X', Y') = \{(x_i', y_i') \in \mathbb{R}^{d+1}\}_{i=1}^{N'}$. Specifically, we analyze the discrepancies between the distribution of the actual data and the predicted outputs on the test set. Consider the following empirical distributions:
\begin{equation}\label{eq:distributions}
\begin{split}
    m_{\textnormal{train}} = \frac{1}{N}\sum_{i=1}^N \delta_{(x_i, y_i)},& \quad  m_{\textnormal{test}} = \frac{1}{N'}\sum_{i=1}^{N'} \delta_{(x'_i, y'_i)},  \quad m_{\textnormal{pred}} (\Theta) = \frac{1}{N'}\sum_{i=1}^{N'} \delta_{(x'_i, f_{\textnormal{shallow}}(x_i',\Theta))},\\
  & m_{X} = \frac{1}{N}\sum_{i=1}^N \delta_{x_i}, \quad m_{X'} = \frac{1}{N'} \sum_{i=1}^{N'} \delta_{x_i'}.
\end{split}
\end{equation}
Let $d_{\text{KR}}$ denote the Kantorovich\ -\ Rubinstein distance \cite[Rem.\@ 6.5]{villani2009} for probability measures. In Theorem \ref{thm:generalization}, we prove that 
   \begin{equation}\label{intro_eq:generalization}
        d_{\textnormal{KR}}(m_{\textnormal{test}}, m_{\textnormal{pred}} (\Theta)) \leq \underbrace{d_{\textnormal{KR}}(m_{\textnormal{train}}, m_{\textnormal{test}}) + d_{\textnormal{KR}}(m_X, m_{X'})}_{\textnormal{Irreducible error from datasets}} + r(\Theta),
    \end{equation}
    where $L$ is the Lipschitz constant of $\sigma$, and the residual term $r$ is as follows:
    \begin{equation}\label{intro_eq:r}
        r(\Theta) = \underbrace{\frac{1}{N} \sum_{i=1}^N \left|f_{\textnormal{shallow}} (x_i, \Theta) - y_i\right|}_{\textnormal{Bias from training}} + \underbrace{d_{\textnormal{KR}}(m_X, m_{X'}) L  \sum_{j=1}^P |\omega_j|\|a_j\|}_{\textnormal{Standard deviation}}.
    \end{equation}
The irreducible error in \eqref{intro_eq:generalization} is independent of the parameters \(\Theta\), determined by the training and testing data sets. 
In contrast, the residual term \(r(\Theta)\) consists of two components depending \( \Theta \): (1) the fidelity error on the training set, referred to as the training bias, and (2),  related to the sensitivity of the trained NN, which plays the role of the standard deviation. A similar generalization bound result for mean-$l^p$ error is presented in Remark \ref{rem:mean-lp} for the particular case $N'=N$.

\medskip
\textit{Optimal hyperparameters.}
In Proposition \ref{prop:lambda}, we establish upper bounds on the residual term \(r(\Theta)\) (see \eqref{intro_eq:r}). These upper bounds are determined by $d_{\text{KR}}(X, X')$ and the hyperparameters \(\lambda\) and \(\epsilon\). Minimizing these bounds provides robust and optimal choices for the hyperparameters (see Remark \ref{rem:robust} and Subsection \ref{sec:parameters}). The optimal hyperparameter choices are summarized in Table \ref{table:parameters}.
Specifically, the optimal number of neurons is $N$, and the optimal value for $\lambda$ is given by $d_{\text{KR}}(X,X')^{-1}$. Determining the optimal value of $\epsilon$ is more technical. When $d_{\text{KR}}(X,X')^{-1}$ is less than some threshold $c_0$, the optimal $\epsilon^*$ is $0$, suggesting that \eqref{intro_pb:NN_exact} is preferable to \eqref{intro_pb:NN_epsilon} for any $\epsilon > 0$.
The threshold $c_0$, defined in \eqref{c0}, is the minimum $\ell^1$-norm of the solutions to the dual problem associated with the relaxation of \eqref{intro_pb:NN_exact}. On the other hand, when $d_{\text{KR}}(X,X') > c_0^{-1}$, there is no explicit expression for the optimal $\epsilon^*$. From the first-order optimality condition \eqref{first-order} we can deduce  that $\epsilon^*$ is increasing with respect to $d_{\text{KR}}(X,X')$ and asymptotically approach $\|Y\|_{\ell^{\infty}}$, where $Y$ is the vector of training labels, as illustrated in Figure \ref{fig:optimal_epsilon}. 

\begin{table}[h]
\begin{tabular}{|c|c|c|cc|}
\hline
\multirow{2}{*}{Hyperparameter} & \multirow{2}{*}{$P$} & \multirow{2}{*}{$\lambda$} & \multicolumn{2}{c|}{$\epsilon$}       \\ \cline{4-5} 
        &           &                    & \multicolumn{1}{c|}{ $d_{\text{KR}}(X,X') \leq c_0^{-1}$ } & $d_{\text{KR}}(X,X') >  c_0^{-1} $ \\  \hline
Optimal choice      & 
N                  & $d_{\text{KR}}(X,X')^{-1}$                 & \multicolumn{1}{c|}{0}  & increasing w.r.t.\@ $d_{\text{KR}}(X,X')$  \\ \hline
\end{tabular}
\smallskip
\caption{Optimal hyperparameters derived from Remarks \ref{rem:P1}, \ref{rem:P2}, \ref{rem:lam}, and \ref{rem:eps}.}
\label{table:parameters}
\end{table}

\begin{figure}[htp]
\centering
\includegraphics[width=0.6\linewidth]{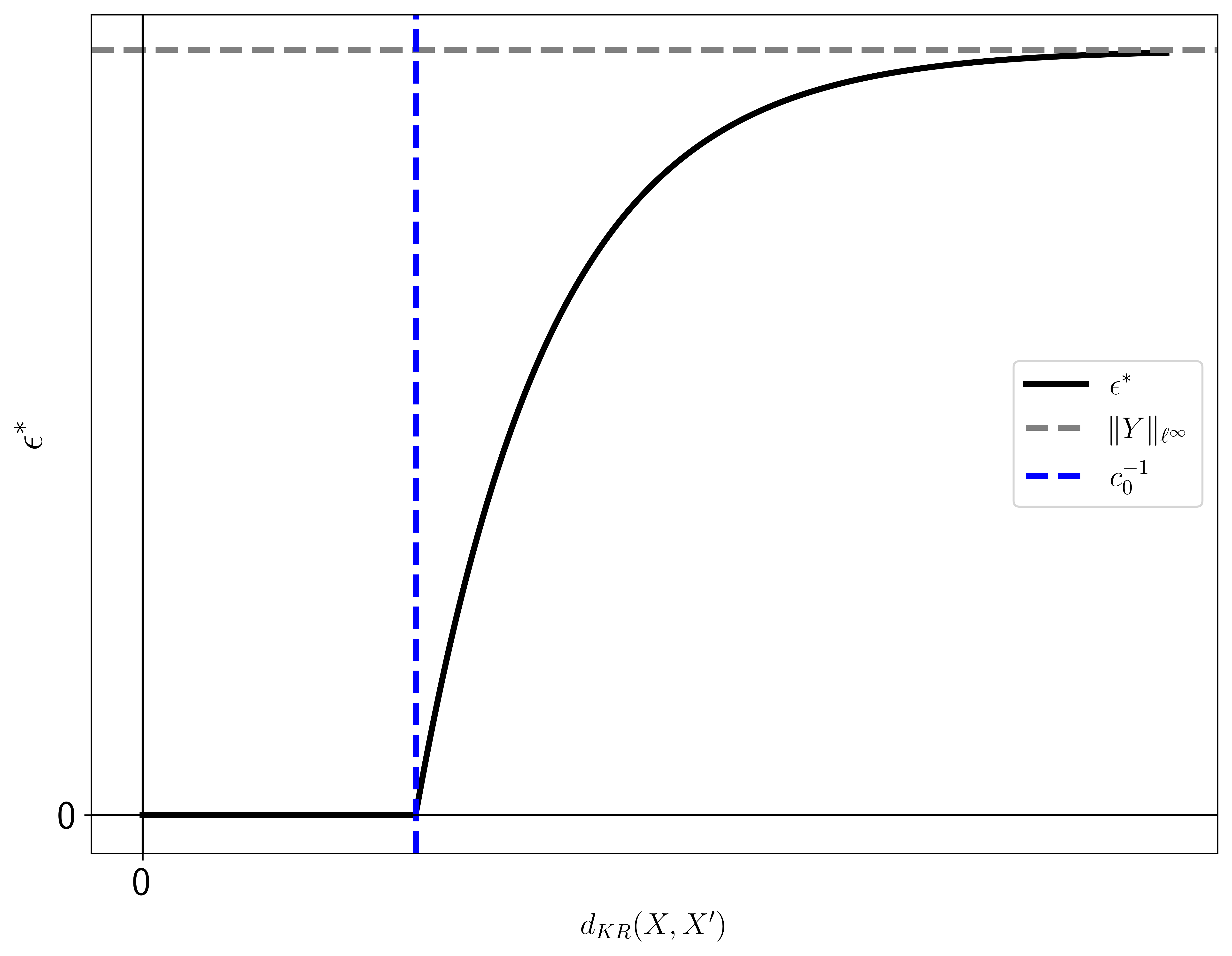}
\caption{Qualitative curve for optimal choice of $\epsilon$ with respect to $d_{KR}(X,X')$.}
\label{fig:optimal_epsilon}
\end{figure}

\textit{Discretization and algorithms.}
Relaxed problems \eqref{pb:NN_epsilon_rel} and \eqref{pb:NN_reg_rel} constitute convex optimization problems in an infinite-dimensional space $\mathcal{M}(\Omega)$. When the dimension of $\Omega$ is small (see Remark \ref{rem:memory}), we discretize $\mathcal{M}(\Omega)$ into $\mathcal{M}(\Omega_{\text{dis}})$, where $\Omega_{\text{dis}}$ represents a set of discrete points within $\Omega$. The related discretized problems are given by \eqref{pb:NN_M_eq} and \eqref{pb:NN_reg_M_eq}, with error estimates established in Theorem \ref{thm:discrete}: 
\begin{equation*}
    |\text{val}\eqref{pb:NN_M_eq}-\text{val}\eqref{pb:NN_epsilon_rel}|,\,  |\text{val}\eqref{pb:NN_reg_M_eq}-\text{val}\eqref{pb:NN_reg_rel}| = \mathcal{O} (d_{\text{Hausdorff}}(\Omega,\Omega_{\text{dis}})),
\end{equation*}
where $d_{\text{Hausdorff}}(\cdot,\cdot)$ represents 
the Hausdorff distance. The discretized problems \eqref{pb:NN_M_eq} and \eqref{pb:NN_reg_M_eq} are equivalent to the linear programming problems \eqref{pb:lp} and \eqref{pb:lp2}, which can be efficiently solved using the simplex algorithm \cite[Sec.\@ 3]{bertsimas1997introduction}. More importantly, the simplex method will return a solution in the set of extreme points of the feasible set. These extreme solutions serve directly as good approximate solutions for the primal problems \eqref{intro_pb:NN_epsilon} and \eqref{intro_pb:NN_reg}, as mentioned in Theorem \ref{thm:simplex}.

The previous discretization method suffers from the curse of dimensionality. Consequently, when $d$ is large, we propose to directly apply the stochastic (sub)gradient descent (SGD) algorithm to address the regression problem \eqref{intro_pb:NN_reg}, utilizing an overparameterized shallow NNs framework with \( P \) significantly exceeding \( N \). The SGD in such overparameterized settings can be seen as a discretization of the gradient flow of relaxed problems, see Remark \ref{rem:gradient flow} for more details.
Its effectiveness has been extensively studied in recent years, see \cite[Sec.\@ 12]{bach2024learning} for instance.
Our contribution is the development of a sparsification method, outlined in Algorithm \ref{alg1}, which filters the solution obtained via SGD into one with fewer than \( N \) activated neurons, see Proposition \ref{prop:sparse}. From the numerical experiments we examine in Section \ref{sec:numerical}, this simplification reduces the storage requirements of the trained model while maintaining prediction accuracy.

\medskip
\textit{$L^2$-regularization and double descent.} In Section \ref{sec:discussion}, we compare $L^2$-regularization (see problem \eqref{pb:L2}) with the total variation regularization used in our study. The solution of problem \eqref{pb:L2} is explicit (see Proposition \ref{prop:L2}) and resides in the reproducing kernel Hilbert space (RKHS) \cite[Def.\@ 2.9]{scholkopf2002learning} associated with the kernel generated by the activation function $\sigma$. Consequently, it cannot be represented in the form of a finite shallow NN as in \eqref{eq:shallow}, which is fundamentally different from the sparse solution provided by Theorem \ref{thm:NN_exists_0}. For further details, we refer to Remark \ref{rem:L1-L2}.

The double descent phenomenon \cite{belkin2019reconciling} is an intriguing behavior observed in neural network training, where the testing error initially decreases with increasing model complexity, then rises, and finally decreases again. In Proposition \ref{prop:double_descent} and Remark \ref{rem:double_descent}, we interpret this phenomenon by using a similar generalization error formula of \eqref{intro_eq:generalization}-\eqref{intro_eq:r} and analyzing the monotonicity of both the training error and total variation with respect to $P$. The numerical results presented in Figure \ref{fig:Double_descent} support our findings.

\subsection{Main contributions}
We provide a rigorous and unified analysis of the mean-field relaxations for approximate (exact) representation and regression problems of shallow NNs, see Theorem \ref{thm:NN_exists_0}. While representer theorems have been applied to exact representation and regression problems in specific scenarios (see \cite{de2020sparsity,bach2017breaking} and \cite{unser2019representer}), we are the first to extend this technique to approximate representation problems, which are of particular interest in studying generalization properties.

The generalization bound established in Theorem \ref{thm:generalization} is novel compared to classical bounds, such as the Rademacher complexity \cite{bach2017breaking} and the Vapnik-Chervonenkis dimension \cite{opper1994learning} in the literature. By minimizing the generalization bounds associated with solutions of \eqref{intro_pb:NN_epsilon} and \eqref{intro_pb:NN_reg}, we obtain optimal choices of hyperparameters in learning problems, as detailed in Table \ref{table:parameters}. This generalization bound also helps us to investigate the double descent phenomena in neural networks.

On the numerical side, for the low-dimensional case, we bridge the discretized problems with the simplex method from operations research, which provides a global minimizer in a finite number of iterations, see Theorem \ref{thm:simplex}. For the high-dimensional case, we develop a sparsification algorithm. When combined with SGD for overparameterized shallow NNs \cite{chizat2018global}, this approach yields a simplified solution without compromising prediction accuracy, as verified by our numerical experiments.

\subsection{Related works} 
\textit{The UAP and the finite-sample representation property of NNs}. 
The first UAP result can be traced back to the Wiener Tauberian Theorem \cite[Thm.\@ II]{wiener1932tauberian} in 1932, which encompasses a large class of activation functions. The UAP for Sigmoidal shallow NNs was demonstrated in the seminal work of Cybenko in 1989 \cite{cybenko1989approximation}. Extensions to multilayer perceptrons were explored in \cite{hornik1991approximation}. Quantitative results based on Barron space are discussed in \cite{barron1993universal} and \cite{ma2022barron}. For comprehensive overviews on this topic, we refer to \cite{pinkus1999approximation} and \cite{devore2021neural}.
The finite-sample representation property (also referred to as finite-sample expressivity in the literature) has recently garnered significant attention. This property has been proven for shallow NNs with ReLU and smooth activation functions in \cite{zhang2021understanding} and \cite{zhang2021expressivity}, and for 2-hidden-layer ReLU networks in \cite{yun2019small}. A general result for non-polynomial activation functions is provided in \cite[Thm.\@ 5.1]{pinkus1999approximation}.  In \cite{ruiz2023neural}, the authors prove this property for Neural ODEs and bridge it to simultaneous controllability in control theory.
This work is extended in \cite{alvarez2024interplay} to Neural ODEs with a width greater than one, examining the relationship between width and the complexity of controls.

\medskip
\textit{Mean-field relaxation}. Training shallow NN presents theoretical challenges due to the non-convexity of their objective functions, a consequence of the non-linearity of $f_{\text{shallow}}$. To address this issue, the authors in \cite{mei2018mean, chizat2018global,sirignano2020mean} consider an infinitely wide shallow NN, also known as the mean-field relaxation of shallow NN. In this mean-field relaxation, the output is defined as the integral of activation functions over a distribution of parameters, rather than a sum of $P$ neurons. This can be expressed mathematically as $\int_{\Omega} \sigma(\langle a, x\rangle +b) d\mu$, which is linear w.r.t.\@ $\mu$.
An upper bound on the relaxation gap is shown to be \(\mathcal{O}(1/P)\) in \cite[Prop.\@ 1]{mei2018mean}. In our setting, using techniques from representer theorems, we show that there is no relaxation gap when $P\geq N$ and the objective function is related to the total variation (see Theorem \ref{thm:NN_exists_0}). 
For a good summary of the mean-field relaxation technique in shallow NNs, we refer to \cite{fernandez2022continuous}.

\medskip
\textit{Representer theorems}. 
Representer theorems characterize the structure of solutions to problems in a form similar to the relaxed problem \eqref{pb:NN_exact_rel}, where the constraint is replaced by general linear equations on $\mu$. The first result on representer theorems can be found in \cite{zuhovickit1962approximation}, where the author demonstrates that there exists a solution as the sum of $m$ Dirac measures, with $m$ being the dimension of the linear constraint. A more precise analysis of the extreme points of the solution set is found in \cite{fisher1975spline}, which we refer to in Theorem \ref{thm:representation}.
The dual version of representer theorems are investigated in \cite{duval2015exact}. 
 Recent works in \cite{boyer2019representer,bredies2020sparsity} develop results of \cite{fisher1975spline} for unbounded domains
$\Omega$.
We refer to \cite{unser2017splines} and \cite{duval2022faces} for a rich source of representer theorems in general settings. In the context of deep learning, \cite{unser2019representer} derives a representer theorem for deep NNs by adding a second-order total variation regularization to the fidelity error. It is worth mentioning that a key assumption for achieving representer theorems is that the feasible set is non-empty (e.g.\@ \cite[Thm.\@ 1]{fisher1975spline}, \cite[Thm.\@ 3.1]{boyer2019representer} and \cite[Ass.\@ H0]{bredies2020sparsity}). This assumption is easy to verify for regression problems (e.g.\@ \cite[Eq.\@ 15]{unser2019representer} and \eqref{intro_pb:NN_reg}), but is typically less obvious for constrained optimization problems like \eqref{intro_pb:NN_exact} and \eqref{intro_pb:NN_epsilon}.
In our context, this non-emptiness assumption is guaranteed by the $N$-sample representation property that we establish in Theorem \ref{thm:NN_exists}. We mention that the value of the relaxed exact representation problem \eqref{pb:NN_exact_rel} is referred to as the variation norm of shallow NNs \cite{bach2017breaking}. Similar works on the representer theorem for \eqref{pb:NN_exact_rel} are conducted in \cite{de2020sparsity,bach2017breaking} with the ReLU activation function. In comparison to these works, our work offers greater flexibility in the choice of activation functions and addresses the gap in the study of approximate representation problems \eqref{intro_pb:NN_epsilon}.

\medskip
\textit{Algorithms}. In this article, we employ the simplex method to solve discretized problems \eqref{pb:NN_M_eq} and \eqref{pb:NN_reg_M_eq} when $d$ is small. It is worth noting that when \(\epsilon = 0\), problem \eqref{pb:NN_M_eq} reduces to the classical basis pursuit problem, which attracted significant attention in the field of compressed sensing in the first decade of this century (see \cite{candes2006quantitative,donoho2006compressed} for instance). Numerous algorithms can be applied to solve the basis pursuit, including the Bregman algorithm \cite{osher2005iterative} and its variants \cite{yin2008bregman,goldstein2009split}, the ADMM algorithm \cite{glowinski1975approximation}, and the primal-dual algorithm \cite{chambolle2011first}, among others. We choose the simplex method because it can provide an exact solution located at the extreme points, directly giving a solution of the primal problems (see Theorem \ref{thm:simplex}). 

When the data dimensionality $d$ is high, we combine the SGD method for overparameterized models with our sparsification technique (Algorithm \ref{alg1}) to obtain solutions with at most \(N\) activated neurons. The convergence properties of SGD for the training of overparameterized NNs have been extensively studied in the literature, including \cite{chizat2018global,allen2019convergence,mei2018mean, bach2024learning}, etc. Additionally, the convergence of the Wasserstein gradient flow, which serves as a continuous counterpart to SGD, has been explored in \cite{chizat2018global} and \cite{sirignano2020mean} in the context of the mean-field relaxation mentioned earlier.

\subsection{Organization} 
In Section \ref{sec:notation}, we introduce some notations, definitions, and the representer theorem used in this article. Section \ref{sec:shallow} focuses on the introduction of relaxed problems and the proof of zero relaxation gaps. In Section \ref{sec:generalization}, we study the generalization property of the solutions of the primal problems.
The discretization and resolution of relaxed problems are presented in Section \ref{sec:algo}, along with a sparsification algorithm. Section \ref{sec:numerical} provides a numerical experiment to illustrate the performance of our numerical schemes and the relationship between hyperparameters and generalization performance.
Some technical proofs are moved to Section \ref{sec:proof}. The last Section \ref{sec:conclusion} summarizes our main findings and some relevant open problems and perspectives for future research.
Appendix \ref{app:dual} contains the dual analysis of the approximate representation problem, applied along the article.

\subsection{Notations}\label{sec:notation}
Let $q$ be a positive integer. For any $x\in \R^q$, we denote by $\|x\|_{\ell^p}$ the $\ell^p$-norm of $x$ ($p\geq 1$ or $p=\infty$), by $\|x\|$ the Euclidean norm ($\ell^2$-norm) of $x$, and by $\|x\|_{\ell^0}$ the number of non-zero elements of $x$. Let $A$ be an $ m \times n $ matrix, where $ m $ and $ n $ are positive integers. We denote by $\|A\|$ the operator norm of $A$, defined as $\|A\| = \sup_{\|u\| \leq 1} \|A u\|$. For any $I\subseteq \{1,2,\ldots, n\}$, the notation $A_{I}$ is the sub-matrix of $A$ consisting of the columns indexed by the elements in $I$. 

Let $(\mathcal{P})$ denote an abstract optimization problem (minimization or maximization) with an objective function $J$. The value (resp.\@ the set of solutions) of the $(\mathcal{P})$ is denoted by $\text{val}(\mathcal{P})$ (resp.\@ $S(\mathcal{P})$)  for convenience.  We say that a point $x_{\epsilon}$ (depending on $\epsilon > 0$) is an $\mathcal{O}(\epsilon)$-optimal solution of $(\mathcal{P})$ if it is feasible and 
$|J(x_{\epsilon}) - \text{val}(\mathcal{P})|\leq C \epsilon$, for 
some constant $C$ independent of $\epsilon$.

Let $\mathcal{B}$ be a compact subset of $\mathbb{R}^q$. Denote by $C(\mathcal{B})$ the Banach space of all real-valued continuous functions on $\mathcal{B}$, equipped with the supremum norm. Let $\mathcal{M}(\mathcal{B})$ be the dual space of $C(\mathcal{B})$, which is complete by equipping the total variation norm:  for all $ \mu \in \mathcal{M}(\mathcal{B})$,
\begin{equation*}
    \| \mu \|_{\text{TV}} \coloneqq \sup_{ \|v\|_{\mathcal{C}(\mathcal{B})} \leq 1} \int_{\mathcal{B}} v(\theta) \, d\mu(\theta).
\end{equation*}
Recall the weak-$*$ convergence in $\mathcal{M}(\mathcal{B})$: $\mu_n \overset{\ast}{\rightharpoonup} \mu$ if and only if for any $v\in \mathcal{C}(\mathcal{B})$, we have that $\lim_{n\to \infty}\int_{\mathcal{B}} v \, d\mu_n = \int_{\mathcal{B}} v \, d\mu  $. For any $N\in \mathbb{N}_{+} $, we denote by $\mathcal{M}_N(\mathcal{B})$ the subset of all elements in $\mathcal{M}(\mathcal{B})$ with their support size less than $N$, i.e.,
\begin{equation*}
\mathcal{M}_N(\mathcal{B})\coloneqq \left\{ \mu= \sum_{j=1}^N \lambda_j \delta_{\theta_j}  \, \Big| \, \lambda_j \in \R, \, \theta_j\in \mathcal{B}, \, \text{for } j=1,\ldots, N   \right\},
\end{equation*}
where $\delta_{\theta}$ is the Dirac measure at point $\theta\in \mathcal{B}$.

Here, we present a representer theorem from \cite[Thm.\@ 1]{fisher1975spline}. Before that, we recall the definition of extreme points of a nonempty and convex set $\mathcal{K}$ in a vector space: 
An element $x \in \mathcal{K}$ is called an extreme point of $\mathcal{K}$ if for any $\lambda \in [0, 1]$ and any $x_1, x_2 \in \mathcal{K}$ such that $x = \lambda x_1 + (1 - \lambda) x_2$, it holds that $x_1 = x_2 = x$. We denote the set of all extreme points of $\mathcal{K}$ by $\text{Ext}(\mathcal{K})$.

\begin{thm}[Fisher-Jerome 75]\label{thm:representation}
 Assume that $\mathcal{B}\subset \mathbb{R}^q$ is compact and $L_i \colon \mathcal{B} \to \mathbb{R}$ is continuous for $i=1,\ldots,N$. Consider the following optimization problem:
\begin{equation}\label{pb:total_variation}
    \inf_{\mu\in \mathcal{M}(\mathcal{B})} \|\mu\|_{\textnormal{TV}}, \qquad \text{s.t. } \int_{\mathcal{B}} L_i(\theta) d\mu(\theta) \in I_i, \quad \text{for }i=1,\ldots,N,
\end{equation}
where $I_i$ is a compact interval or a singleton in $\R$ for $i=1,\ldots,N$.
Assume that the feasible set of problem \eqref{pb:total_variation} is non-empty. Then, its solution set $S\eqref{pb:total_variation}$  is non-empty, convex, and compact in the weak-$*$ sense. Moreover,
      \( \textnormal{Ext}(S\eqref{pb:total_variation}) \subseteq \mathcal{M}_{N}(\mathcal{B}) \).
   
\end{thm}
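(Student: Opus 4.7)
The plan is to treat the four assertions -- non-emptiness, convexity, weak-$*$ compactness of $S\eqref{pb:total_variation}$, and the extreme-point inclusion -- in turn, with only the last requiring a substantive argument. For non-emptiness I would apply the direct method: take a minimising sequence $(\mu_n)$, which is TV-bounded by feasibility; Banach--Alaoglu on $\mathcal{M}(\mathcal{B}) = C(\mathcal{B})^*$ yields a weak-$*$ cluster point $\mu^\star$. Since each $L_i \in C(\mathcal{B})$, the constraint functionals $\mu \mapsto \int L_i\, d\mu$ are weak-$*$ continuous, so feasibility of $\mu_n$ passes to $\mu^\star$; weak-$*$ lower semicontinuity of $\|\cdot\|_{\text{TV}}$ then gives optimality. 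Convexity of $S\eqref{pb:total_variation}$ is immediate from the convexity of each $I_i$ and of $\|\cdot\|_{\text{TV}}$. Weak-$*$ compactness follows from containment in a weak-$*$ compact TV-ball combined with weak-$*$ closedness, by the same continuity/lsc considerations.

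The heart of the argument is the extreme-point inclusion, which I would establish by contraposition. Assume $\mu^\star \in S\eqref{pb:total_variation} \setminus \mathcal{M}_N(\mathcal{B})$; the goal is to exhibit a non-zero signed measure $\nu$ and $\epsilon > 0$ such that $\mu^\star \pm \epsilon \nu \in S\eqref{pb:total_variation}$, thereby displaying $\mu^\star$ as a proper convex combination. Write the polar decomposition $\mu^\star = h\cdot|\mu^\star|$ with $h \in \{\pm 1\}$ defined $|\mu^\star|$-a.e., and consider the bounded linear map
\begin{equation*}
T \colon L^\infty(|\mu^\star|) \to \R^N, \qquad g \mapsto \left(\int_{\mathcal{B}} L_i\, g\, d|\mu^\star|\right)_{i=1}^{N}.
\end{equation*}
The hypothesis $\mu^\star \notin \mathcal{M}_N(\mathcal{B})$ means $|\mu^\star|$ either has a non-atomic component or is purely atomic with strictly more than $N$ atoms, so $\dim L^\infty(|\mu^\star|) > N$ and $\ker T$ contains a non-zero element $g$. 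Set $\nu \coloneqq g\cdot|\mu^\star|$; by construction $\int L_i\, d\nu = 0$ for every $i$, so all equality and active-endpoint constraints are preserved exactly, while inactive interval constraints survive for sufficiently small $\epsilon$. Choosing $0 < \epsilon < \|g\|_{L^\infty(|\mu^\star|)}^{-1}$ ensures that $h + \epsilon g$ keeps the sign of $h$ $|\mu^\star|$-a.e., whence
\begin{equation*}
\|\mu^\star \pm \epsilon \nu\|_{\text{TV}} = \int_{\mathcal{B}} |h \pm \epsilon g|\, d|\mu^\star| = \|\mu^\star\|_{\text{TV}} \pm \epsilon \int_{\mathcal{B}} h g\, d|\mu^\star|.
\end{equation*}
Optimality of $\mu^\star$ applied to both signs forces $\int h g\, d|\mu^\star| = 0$, so both perturbations lie in $S\eqref{pb:total_variation}$ and $\mu^\star = \tfrac{1}{2}(\mu^\star + \epsilon \nu) + \tfrac{1}{2}(\mu^\star - \epsilon \nu)$ contradicts extremality.

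The main obstacle will be this extreme-point step, and in particular the identification of ``$\mu^\star \notin \mathcal{M}_N(\mathcal{B})$'' with the dimensional inequality $\dim L^\infty(|\mu^\star|) > N$ (both the non-atomic sub-case and the purely-atomic sub-case with more than $N$ atoms must be accounted for), together with the verification that a non-zero equivalence class in $L^\infty(|\mu^\star|)$ genuinely produces a non-zero signed measure $\nu$. The varied shape of the constraint sets $I_i$ (singletons versus compact intervals) adds only a mild case split, since at most $N$ of them can impose an equation on $g$ through $T$; the dimension count $\dim \ker T \ge \dim L^\infty(|\mu^\star|) - N > 0$ then goes through unchanged.
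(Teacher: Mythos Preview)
The paper does not actually supply a proof of this theorem: it is stated in the notation section, attributed to Fisher--Jerome 1975, and invoked as a black box in the proof of Theorem~\ref{thm:NN_exists_0}. So there is no ``paper's own proof'' to compare against; you are reconstructing the classical argument behind the citation.

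Your reconstruction is correct and is essentially the standard proof (and close to what Fisher--Jerome do). Two small clean-ups. First, since you take $g\in\ker T$, you have $\int_{\mathcal{B}} L_i\,d\nu=0$ for \emph{every} $i$, so all constraints---interval or singleton, active or not---are preserved exactly under $\mu^\star\pm\epsilon\nu$; the ``inactive constraints survive for small $\epsilon$'' remark and the ``mild case split'' on the shape of $I_i$ are unnecessary. Second, your computation $|h\pm\epsilon g|=1\pm\epsilon hg$ (valid since $h^2=1$ and $|\epsilon g|<1$) is exactly what is needed, and together with optimality of $\mu^\star$ applied to both signs forces $\int hg\,d|\mu^\star|=0$; it is worth stating explicitly that $g\neq 0$ in $L^\infty(|\mu^\star|)$ guarantees $\nu\neq 0$ as a measure, so the convex combination is genuinely non-trivial. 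The dimension count $\dim L^\infty(|\mu^\star|)>N$ whenever $\mu^\star\notin\mathcal{M}_N(\mathcal{B})$ is the only place requiring a moment's care, and your two sub-cases (non-atomic part present, or purely atomic with $>N$ atoms) cover it.
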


\section{Relaxation of sparse learning problems}\label{sec:shallow}
This section investigates the relaxation of sparse learning problems related to shallow NNs \eqref{eq:shallow}, i.e., \eqref{intro_pb:NN_exact}, \eqref{intro_pb:NN_epsilon}, and \eqref{intro_pb:NN_reg}. 
We make the following assumption on the activation function $\sigma$ and the parameter domain $\Omega$, which is crucial in the existence of solutions of primal problems. 
 \begin{ass}\label{ass:sigma}
 The following holds:
 \begin{enumerate}
     \item The activation function $\sigma\colon \R \to \R$ is continuous, where $\sigma(x)=0$ for $x\leq 0$ and $\sigma(x)>0$ for $x>0$;
     \item The set $\Omega$ is compact and contains a ball centered at 0.
 \end{enumerate}
 \end{ass}

\begin{exe}\label{rem:relu}
    An example of a pair \((\sigma, \Omega)\) satisfying Assumption \ref{ass:sigma} is when \(\sigma\) is the ReLU function, defined as \(\sigma(x) = (x)_{+} \coloneqq \max\{x,0\}\), and \(\Omega\) is the unit ball in \(\mathbb{R}^{d+1}\). This example plays a crucial role in the study of the generalization properties of shallow NNs, as discussed in Section \ref{sec:generalization_relu}.
\end{exe}

\begin{rem}\label{rem:compact}
We emphasize that \(\Omega\) serves as an a priori bound for \((a_j, b_j)\), which is assumed to be compact. This compactness is necessary for the existence of solutions of the relaxed problems introduced later, which are defined on \(\mathcal{M}(\Omega)\). Without this assumption, the set \(\mathcal{M}(\Omega)\) is not weak-\(*\) compact, preventing the application of the representer theorem (Theorem \ref{thm:representation}). The condition that $\Omega$ contains a ball centered at 0 is required for applying the Hahn-Banach theorem in the proof of the finite-sample representation property. \end{rem}

\begin{rem}\label{rem:omega}
In practice, the choice of the compact set \(\Omega\) should be determined on a case-by-case basis. For instance:
\begin{itemize}
    \item It might depend on the available memory range needed to store parameters;
    \item It could also be the unit ball for shallow ReLU networks, as scaling \(\Omega\) does not affect the generalization bound (introduced in Theorem \ref{thm:generalization}) for solutions to the primal problems, as shown in Lemma \ref{lm:relu}. This invariance property is derived from the 1-homogeneity of the ReLU function;
    \item It could be the smallest hypercube  containing all the pre-trained parameters (for example, those obtained via SGD on a regression problem). By solving our primal problems with this choice of \(\Omega\),  new parameters  outperforming the pre-trained ones may be found.
\end{itemize}
\end{rem}

\subsection{Finite-sample representation property and existence results}
Let us first present the finite sample representation property of shallow NNs in \eqref{eq:shallow} under Assumption \ref{ass:sigma}.  \begin{thm}\label{thm:NN_exists}
    Let Assumption \ref{ass:sigma} hold true. Let $N$ be the number of data samples. If $P \geq N$, then for any dataset $\{(x_i, y_i) \in \mathbb{R}^{d+1}\}_{i=1}^N$, where $x_i \neq x_j$ for $i \neq j$, there exists parameters $\Theta = \{(\omega_j, a_j, b_j) \in \mathbb{R} \times \Omega\}_{j=1}^P$ such that
\[
    f_{\textnormal{shallow}}(x_i, \Theta) = y_i \quad \textnormal{for } i = 1, \ldots, N.
\]
\end{thm}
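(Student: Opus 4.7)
The plan is to reduce to $P = N$ and then show that the $N \times N$ matrix $M_{ij} = \sigma(\langle a_j, x_i \rangle + b_j)$ can be made invertible by an appropriate choice of $(a_j, b_j) \in \Omega$. Once this is established, for any label vector $Y = (y_1,\ldots,y_N)^\top$ the weight vector $\omega = M^{-1} Y$ gives the desired interpolation, and for $P > N$ one simply sets the remaining weights to zero.

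The reduction to invertibility is itself equivalent to showing that the set
\[
V \;\coloneqq\; \bigl\{\,\bigl(\sigma(\langle a, x_i\rangle + b)\bigr)_{i=1}^N \,:\, (a,b)\in \Omega\,\bigr\} \;\subset\; \mathbb{R}^N
\]
spans $\mathbb{R}^N$; indeed, any spanning set contains a basis, and selecting $N$ such basis vectors gives the columns of an invertible $M$. I would prove this by contradiction via the Hahn--Banach theorem (in this finite-dimensional setting, equivalently by duality): if $\mathrm{span}(V)$ were a proper subspace, there would exist $\lambda = (\lambda_1,\ldots,\lambda_N)\in \mathbb{R}^N\setminus\{0\}$ such that
\[
\varphi(a,b) \;\coloneqq\; \sum_{i=1}^N \lambda_i\,\sigma(\langle a, x_i\rangle + b) \;=\; 0 \qquad \text{for all } (a,b)\in \Omega .
\]

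The core of the argument is then to exhibit $(a,b)\in \Omega$ with $\varphi(a,b) \neq 0$. Since the $x_i$ are distinct, the set of $a\in \mathbb{R}^d$ for which the values $\langle a, x_i\rangle$ are all distinct is open and dense, so I fix such a unit vector $a_0$ and relabel the points so that $\langle a_0, x_1 \rangle < \langle a_0, x_2 \rangle < \cdots < \langle a_0, x_N \rangle$. For a small scaling factor $\varepsilon > 0$ and a real $b$, consider the scaled pair $(\varepsilon a_0, \varepsilon b)$: because $\Omega$ contains a ball $B_r(0)$ and $\|x_i\|$ is bounded, one can choose $\varepsilon$ small enough (e.g.\ $\varepsilon \le r/\sqrt{1+\max_i\|x_i\|^2}$) so that $(\varepsilon a_0, \varepsilon b)\in B_r(0)\subset \Omega$ whenever $|b| \le \max_i |\langle a_0, x_i\rangle| + 1$. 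Inside such a neighborhood, choose $b$ slightly greater than $-\langle a_0, x_N\rangle$ but less than $-\langle a_0, x_{N-1}\rangle$, so that by the sign property of $\sigma$ in Assumption~\ref{ass:sigma} only the $N$-th summand is nonzero, forcing $\lambda_N\,\sigma(\varepsilon(\langle a_0,x_N\rangle+b)) = 0$ and hence $\lambda_N = 0$. Iterating this ``turning on one neuron at a time'' argument by moving $b$ past the thresholds $-\langle a_0, x_{N-1}\rangle,\ldots,-\langle a_0, x_1\rangle$ successively yields $\lambda_{N-1} = \cdots = \lambda_1 = 0$, contradicting $\lambda \neq 0$.

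The main obstacle is purely bookkeeping: verifying that the required thresholds for $b$ can always be realized while keeping $(\varepsilon a_0, \varepsilon b)$ inside $\Omega$. This is where the assumption that $\Omega$ contains a ball about the origin is essential, since homogeneity of the linear form $\langle a,x_i\rangle + b$ under joint scaling of $(a,b)$ preserves the sign pattern exploited by the activation. With that verification in place, the span property, and thus existence of an invertible interpolation matrix $M$, is immediate, and the theorem follows.
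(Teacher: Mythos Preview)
Your proof is correct and takes a genuinely different route from the paper. The paper proceeds by induction on $N$: at each step it picks an extreme point $x_{i^*}$ of $\conv\{x_1,\dots,x_{N+1}\}$, separates it from the remaining points by Hahn--Banach to get $(a_{i^*},b_{i^*})\in\Omega$ with $\sigma(\langle a_{i^*},x_i\rangle+b_{i^*})=0$ for $i\neq i^*$ and $>0$ for $i=i^*$, then uses the induction hypothesis on the other $N$ points and adjusts $\omega_{i^*}$. Your argument instead fixes a single generic direction $a_0$ (so the projections $\langle a_0,x_i\rangle$ are distinct), then sweeps a threshold $b$ through the ordered projections; this produces, after scaling into $\Omega$, a lower-triangular matrix $M$ with positive diagonal, which is visibly invertible. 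Your route is more elementary in that it avoids both induction and the convex-hull/Hahn--Banach machinery, and it makes the triangular structure of the interpolation matrix explicit. The paper's approach, by contrast, is coordinate-free and makes clearer why ``$\Omega$ contains a ball about $0$'' is exactly what is needed for a separating hyperplane argument; it also generalizes verbatim to vector-valued labels (Remark~\ref{rem:high-d}), which your argument also handles once $M$ is invertible. One small cosmetic point: your bound $\varepsilon\le r/\sqrt{1+\max_i\|x_i\|^2}$ does not quite match the constraint you actually need (which involves the range of $b$, not $\|x_i\|$), but the correct bound is immediate and you flag this as the only bookkeeping to check.
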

\begin{proof}
   The proof is presented in Section \ref{sec:proof}.
\end{proof}
\begin{rem}\label{rem:high-d}
    The proof of Theorem \ref{thm:NN_exists} is independent of the dimension of the label variable. Therefore, the conclusion holds true for $y_i\in \R^m$ with $m\geq 2$, where \(\omega_j\) should be considered in \(\mathbb{R}^m\).
\end{rem}

In the remainder of this article, we assume that the training dataset $\{(X, Y)\} = \{(x_i, y_i) \in \mathbb{R}^{d+1}\}_{i=1}^N$ has distinct features, i.e., $x_i \neq x_j$ for $i \neq j$. Then, by Theorem \ref{thm:NN_exists}, we deduce the following corollary on the existence of solutions to the primal problems.

\begin{cor}\label{lm:NN_existence}
    Let Assumption \ref{ass:sigma} hold true. Then, the following holds:  \begin{enumerate}
        \item If $P\geq N$,  problems \eqref{intro_pb:NN_exact} and \eqref{intro_pb:NN_epsilon} have solutions.
        \item If $\ell$ is proper and l.s.c.,  problem \eqref{intro_pb:NN_reg} has solutions.
    \end{enumerate}
\end{cor}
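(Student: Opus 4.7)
The plan is a standard direct method in the calculus of variations argument, with the key input being Theorem \ref{thm:NN_exists} to guarantee non-emptiness of the feasible set in part (1).

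For part (1), I would first note that Theorem \ref{thm:NN_exists} provides, under the assumption $P\geq N$, a parameter $\Theta$ that interpolates the data exactly, so the feasible set of \eqref{intro_pb:NN_exact} is non-empty, and the feasible set of \eqref{intro_pb:NN_epsilon} (which contains it) is also non-empty. Take a minimizing sequence $\Theta^{(n)} = \{(\omega_j^{(n)}, a_j^{(n)}, b_j^{(n)})\}_{j=1}^{P}$. Since $\sum_{j=1}^{P} |\omega_j^{(n)}|$ is uniformly bounded along the sequence, each $\omega_j^{(n)}$ is bounded in $\mathbb{R}$; since $(a_j^{(n)}, b_j^{(n)}) \in \Omega$ and $\Omega$ is compact by Assumption \ref{ass:sigma}, we extract a subsequence converging coordinatewise to some $\Theta^* = \{(\omega_j^*, a_j^*, b_j^*)\}_{j=1}^{P}$. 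Continuity of $\sigma$ then gives $f_{\textnormal{shallow}}(x_i, \Theta^{(n)}) \to f_{\textnormal{shallow}}(x_i, \Theta^*)$ for each $i$, so the equality constraints in \eqref{intro_pb:NN_exact} (resp.\ inequality constraints in \eqref{intro_pb:NN_epsilon}) pass to the limit, i.e., $\Theta^*$ is feasible. Continuity of $\omega \mapsto \sum_j |\omega_j|$ then yields that $\Theta^*$ achieves the infimum.

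For part (2), the feasible set is simply $\mathbb{R}^P \times \Omega^P$ and is non-empty (e.g., $\Theta = 0$), so Theorem \ref{thm:NN_exists} is not needed. Let $J(\Theta)$ denote the objective of \eqref{intro_pb:NN_reg} and let $\Theta^{(n)}$ be a minimizing sequence. Since $\ell \geq 0$, we have $\sum_j |\omega_j^{(n)}| \leq J(\Theta^{(n)})$, which is bounded, so the $\omega_j^{(n)}$ are bounded; combined with the compactness of $\Omega$, we again extract a convergent subsequence with limit $\Theta^*$. The map $\Theta \mapsto \sum_j \omega_j \sigma(\langle a_j, x_i\rangle + b_j) - y_i$ is continuous by continuity of $\sigma$, so lower semicontinuity of $\ell$ together with Fatou's inequality for finite sums (or directly the fact that a finite sum of l.s.c.\ functions is l.s.c.) gives $J(\Theta^*) \leq \liminf_n J(\Theta^{(n)}) = \textnormal{val}\eqref{intro_pb:NN_reg}$, establishing that $\Theta^*$ is a solution.

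The argument is essentially routine once Theorem \ref{thm:NN_exists} is in hand; the only subtle point is ensuring that the $\omega_j$ components of the minimizing sequence remain bounded, which is immediate here because the objective explicitly controls $\|\omega\|_{\ell^1}$ in all three problems (either as the objective itself, or as a term added to a nonnegative fidelity). Were this not the case (e.g., if one tried to replace $\sum_j |\omega_j|$ by an unregularized interpolation problem), existence could fail without further structural assumptions, and this is precisely the role that the $\ell^1$ penalty plays here.
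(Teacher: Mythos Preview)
Your proof is correct and follows essentially the same approach as the paper: use Theorem \ref{thm:NN_exists} to obtain non-emptiness of the feasible set for part (1), then invoke coercivity of the objective in $\omega$ together with the compactness of $\Omega$ (and, for part (2), the lower semicontinuity of $\ell$) to run the direct method. The paper's version is simply a terser statement of the same argument.
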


\begin{proof}
   In point (1), since $P\geq N$, by Theorem \ref{thm:NN_exists}, the feasible sets of problems \eqref{intro_pb:NN_exact} and \eqref{intro_pb:NN_epsilon} are non-empty. Then, the existence of solutions of \eqref{intro_pb:NN_exact} and \eqref{intro_pb:NN_epsilon} is deduced from the coercivity of their objective functions w.r.t.\@ $(\omega_j)_{j=1}^N$ and the compactness of $\Omega$. Since $\ell$ is proper and l.s.c., we deduce the same properties for the objective function of \eqref{intro_pb:NN_reg}, which, combined with the coercivity on $(\omega_j)_{j=1}^N$ and the compactness of $\Omega$, leads to (2).
\end{proof}

The correspondence between problems \eqref{intro_pb:NN_exact}, \eqref{intro_pb:NN_epsilon}, and \eqref{intro_pb:NN_reg} can be realized by taking extreme values of the hyperparameters \(\epsilon\) and \(\lambda\). As it can be easily seen, as \(\epsilon \to 0\), the approximate representation problem \eqref{intro_pb:NN_epsilon} converges to the exact one \eqref{intro_pb:NN_exact}. In the following remarks, we explain the convergences of the regression problem \eqref{intro_pb:NN_reg} to \eqref{intro_pb:NN_exact} as \(\lambda \to \infty\)  in different scenarios. Here, as it is commonly done in regression problems, we assume that \(\ell(0) = 0\) and that \(0\) is the unique minimizer of \(\ell\). 
 
\begin{rem}\label{rem:exact-regression}
Let $P\geq N$ and let \(\Theta_{\lambda}^{\text{reg}}\) be a solution of \eqref{intro_pb:NN_reg}. Then the sequence \(\{\Theta_{\lambda}^{\text{reg}} \}_{\lambda\in \mathbb{N}_+}\) is precompact, and any cluster point of \(\{\Theta_{\lambda}^{\text{reg}} \}\) as $\lambda \to \infty$ is a solution of \eqref{intro_pb:NN_exact}. As a consequence, \(\text{val}\eqref{intro_pb:NN_exact} = \lim_{\lambda\to +\infty} \text{val}\eqref{intro_pb:NN_reg}\).
\end{rem}

The previous remark is based on the existence of solutions of \eqref{intro_pb:NN_exact}, which is guaranteed by the setting $P\geq N$.  Theorem \ref{thm:NN_exists} does not apply when \(P < N\). As a consequence, the value of the general regression problem \eqref{intro_pb:NN_reg} may approach infinity as \(\lambda \to \infty\).
However, by Corollary \ref{lm:NN_existence}(2), the solution set of the regression problem \eqref{intro_pb:NN_reg} is non-empty even when \(P < N\). The convergence of the solutions of \eqref{intro_pb:NN_reg} when $P<N$ is discussed in the following remark.

\begin{rem}\label{rem:exact-regression2}
Let $P<N$ and let \(\Theta_{\lambda}^{\text{reg}}\) be a solution of \eqref{intro_pb:NN_reg}. 
Assume that \(\ell\) is coercive. Then the sequence \(\{\Theta_{\lambda}^{\text{reg}}\}_{\lambda \in \mathbb{N}_+}\) is precompact, and any cluster point of \(\{\Theta_{\lambda}^{\text{reg}}\}\) as $\lambda \to \infty$ is a solution of the following bi-level optimization problem:
\begin{equation}\label{pb:bilevel}
      \inf_{(\omega_j,a_j,b_j)_{j=1}^P } \sum_{j=1}^P |\omega_j|, \quad \text{s.t. }  (\omega_j,a_j,b_j)_{j=1}^P\in \argmin_{(\R\times\Omega)^P} \sum_{i=1}^N \ell \left( \sum_{j=1}^P \omega_j \sigma(\langle a_j, x_i \rangle + b_j ) - y_i  \right).
\end{equation}
\end{rem}

\subsection{Relaxed problems} 
Problems \eqref{intro_pb:NN_exact}, \eqref{intro_pb:NN_epsilon}, and \eqref{intro_pb:NN_reg} exhibit non-convexity properties, since the feasible sets of \eqref{intro_pb:NN_exact} and \eqref{intro_pb:NN_epsilon} and the fidelity error in \eqref{intro_pb:NN_reg} are non-convex. This is due to the  nonlinearity of $f_{\text{shallow}}(x,\Theta)$ on $\Theta$. To address this non-linearity, we replace $f_{\text{shallow}}(x,\Theta)$ by $\int_{\Omega} \sigma(\langle a, x\rangle +b) d\mu$ with $\mu \in\mathcal{M}(\Omega)$, which is the so-called mean-field relaxation. For convenience, define the following linear mapping:
\begin{equation}\label{eq:phi}
    \phi \colon \mathcal{M}(\Omega) \to \R^N,\, \mu \mapsto ( \phi_i\,\mu )_{i=1}^N, \text{ where }\phi_i\, \mu \coloneqq \int_{\Omega} \sigma( \langle a,x_i \rangle  + b)  d\mu(a,b), \text{ for }i=1,\ldots, N.
\end{equation}
Recall that $Y=(y_1,\ldots,y_N) \in \R^N$.  The relaxation of \eqref{intro_pb:NN_exact} writes:
\begin{equation}\label{pb:NN_exact_rel}\tag{PR$_0$}
    \inf_{\mu\in \mathcal{M}(\Omega)} \|\mu\|_{\text{TV}}, \quad \text{s.t. }  \phi \, \mu = Y.
\end{equation}
 The relaxation of  \eqref{intro_pb:NN_epsilon} writes:
\begin{equation}\label{pb:NN_epsilon_rel}\tag{\text{PR$_\epsilon$}}
     \inf_{\mu\in \mathcal{M}(\Omega)} \|\mu\|_{\text{TV}}, \quad \text{s.t. }  \|\phi \, \mu - Y\|_{\ell^{\infty}} \leq \epsilon.
\end{equation}
The relaxation of the regression problem \eqref{intro_pb:NN_reg} writes:
\begin{equation}\label{pb:NN_reg_rel}\tag{PR$^{\text{reg}}_{\lambda}$}
  \inf_{\mu\in \mathcal{M}(\Omega)} \|\mu\|_{\text{TV}} + \frac{\lambda }{N}\sum_{i=1}^N \ell \left( \phi_i\, \mu - y_i \right).
\end{equation}
It is easy to check that the previous three relaxed problems are convex. In Theorem \ref{thm:NN_exists_0}, we demonstrate that there is no gap between the primal problems and their associated relaxations in the case $P\geq N$, and we characterize the relationship between their solution sets. 

\begin{thm}\label{thm:NN_exists_0}
     Let Assumption \ref{ass:sigma} hold true. Let $P\geq N$.
     Assume that $\ell$ in problem \eqref{intro_pb:NN_reg} is proper, l.s.c., and convex. Then,
     \begin{equation}\label{eq:value_eq}
         \textnormal{val} \eqref{pb:NN_exact_rel} =  \textnormal{val}\eqref{intro_pb:NN_exact}, \quad \textnormal{val} \eqref{pb:NN_epsilon_rel} =  \textnormal{val}\eqref{intro_pb:NN_epsilon}, \quad \textnormal{val} \eqref{pb:NN_reg_rel} =  \textnormal{val}\eqref{intro_pb:NN_reg}.
     \end{equation}
      Moreover, the solution sets $S\eqref{pb:NN_exact_rel}$, $S\eqref{pb:NN_epsilon_rel}$, and $S\eqref{pb:NN_reg_rel}$ are non-empty, convex, and compact in the weak-$*$ sense. Their extreme points are of the form:
\begin{equation}\label{eq:solution_eq}
    \sum_{j=1}^N \omega_j \delta_{(a_j,b_j)},
\end{equation}
    where $(\omega_j, a_j, b_j)_{j=1}^N$ is a solution of the primal problem \eqref{intro_pb:NN_exact}, \eqref{intro_pb:NN_epsilon}, and \eqref{intro_pb:NN_reg}, respectively, for the case $P = N$.
\end{thm}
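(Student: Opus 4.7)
The plan is to leverage the Fisher-Jerome representer theorem (Theorem~\ref{thm:representation}) on each relaxed problem, after first ensuring feasibility through Theorem~\ref{thm:NN_exists}. For \eqref{pb:NN_exact_rel} and \eqref{pb:NN_epsilon_rel}, the problems are already in the exact form \eqref{pb:total_variation}: take $L_i(a,b) = \sigma(\langle a, x_i\rangle + b)$, which is continuous on the compact $\Omega$, and set $I_i = \{y_i\}$ for \eqref{pb:NN_exact_rel} or $I_i = [y_i-\epsilon, y_i+\epsilon]$ for \eqref{pb:NN_epsilon_rel}. Under $P \geq N$, Theorem~\ref{thm:NN_exists} provides parameters $\Theta$ realizing \eqref{eq:representation}, so the empirical measure $\mu_\Theta := \sum_{j=1}^P \omega_j \delta_{(a_j,b_j)}$ is feasible for both relaxed problems. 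Theorem~\ref{thm:representation} then directly yields non-empty, convex, weak-$*$ compact solution sets whose extreme points lie in $\mathcal{M}_N(\Omega)$.

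For \eqref{pb:NN_reg_rel}, whose objective is not in Fisher-Jerome form, I would first establish existence by a direct argument: TV-coercivity confines minimizing sequences to a norm-bounded set (weak-$*$ compact by Banach-Alaoglu); the TV norm is weak-$*$ lower semi-continuous, each $\phi_i$ is weak-$*$ continuous since $(a,b)\mapsto \sigma(\langle a,x_i\rangle+b)$ belongs to $C(\Omega)$, and $\ell$ is proper, l.s.c.\@ and convex, so the full functional is weak-$*$ l.s.c.; convexity of $S\eqref{pb:NN_reg_rel}$ is inherited from convexity of the functional. To extract extreme points, let $\mu^* \in \textnormal{Ext}(S\eqref{pb:NN_reg_rel})$ and set $t^* := \phi \mu^* - Y$. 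Any $\tilde\mu$ with $\phi\tilde\mu = Y + t^*$ and $\|\tilde\mu\|_{\text{TV}} < \|\mu^*\|_{\text{TV}}$ would strictly lower the regression cost, contradicting optimality; hence $\mu^*$ minimizes $\|\mu\|_{\text{TV}}$ subject to $\phi\mu = Y+t^*$, a Fisher-Jerome instance. Moreover, any decomposition $\mu^* = \tfrac12(\mu_1+\mu_2)$ within this reduced minimizer set preserves both $\phi\mu_i = Y+t^*$ and $\|\mu_i\|_{\text{TV}} = \|\mu^*\|_{\text{TV}}$, producing two elements of $S\eqref{pb:NN_reg_rel}$; extremality of $\mu^*$ there forces $\mu_1 = \mu_2 = \mu^*$, so $\mu^*$ is also extreme for the reduced Fisher-Jerome problem, and thus $\mu^* \in \mathcal{M}_N(\Omega)$.

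The value identities \eqref{eq:value_eq} then follow by comparison. The inequality $\textnormal{val}(\textnormal{relaxed}) \leq \textnormal{val}(\textnormal{primal})$ is immediate: any feasible primal parameter $\Theta$ yields the empirical measure $\mu_\Theta$, which is feasible for the relaxation with $\|\mu_\Theta\|_{\text{TV}} \leq \sum_j |\omega_j|$ and with identical fidelity contribution in the regression case. For the reverse inequality, I apply Krein-Milman to each non-empty weak-$*$ compact convex solution set to produce an extreme solution $\mu^* = \sum_{j=1}^N \omega_j^* \delta_{(a_j^*,b_j^*)}$; since $P \geq N$ and $0 \in \Omega$ by Assumption~\ref{ass:sigma}, padding with $P-N$ neurons of weight zero yields parameters $\Theta^*$ feasible for the primal, with objective value $\sum_j |\omega_j^*| = \|\mu^*\|_{\text{TV}}$ (plus the matching fidelity term for \eqref{intro_pb:NN_reg}). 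This simultaneously delivers \eqref{eq:value_eq} and shows that $\Theta^*$ is a solution of the primal problem at $P=N$, establishing \eqref{eq:solution_eq}.

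The main obstacle is the regression problem \eqref{pb:NN_reg_rel}, since its objective is not a pure TV norm and Theorem~\ref{thm:representation} does not apply off the shelf; the level-set reduction via the auxiliary quantity $t^* = \phi\mu^*-Y$ is the key device that nevertheless allows the representer theorem to govern the extreme points. A secondary point is confirming weak-$*$ compactness of each solution set, which reduces to Banach-Alaoglu once coercivity of the TV norm and weak-$*$ lower semi-continuity of the functional are combined.
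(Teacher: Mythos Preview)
Your proposal is correct and follows essentially the same route as the paper: for \eqref{pb:NN_exact_rel} and \eqref{pb:NN_epsilon_rel} you apply Theorem~\ref{thm:representation} directly after feasibility from Theorem~\ref{thm:NN_exists}, and for \eqref{pb:NN_reg_rel} you freeze $t^*=\phi\mu^*-Y$ (the paper writes $Y^{\text{e}}=\phi\mu^{\text{e}}$) to reduce to an auxiliary Fisher--Jerome problem, then transfer extremality. The only cosmetic difference is that the paper argues by contradiction---writing $\mu^{\text{e}}$ as a strict convex combination of extreme points of the auxiliary problem, all of which lie in $S\eqref{pb:NN_reg_rel}$---whereas you show directly that $\mu^*$ is extreme in the auxiliary solution set; both arguments hinge on the observation that every minimizer of the auxiliary problem is also a minimizer of \eqref{pb:NN_reg_rel}.
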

Before presenting the proof of Theorem \ref{thm:NN_exists_0}, we provide the following remark on the equivalence of the primal problems \eqref{intro_pb:NN_exact}, \eqref{intro_pb:NN_epsilon}, and \eqref{intro_pb:NN_reg}, for different values of \( P \geq N \), respectively. Based on this remark, we conclude that the optimal number of neurons is \( P = N \).

\begin{rem}[On the optimal choice  $P=N$]\label{rem:P1}
   As a corollary of Theorem \ref{thm:NN_exists_0} the value of the primal problem \eqref{intro_pb:NN_exact} becomes independent of $P$ when $P \geq N$. Specifically, 
\begin{equation*}
    V_N = V_{P}, \, \forall P \ge N.
\end{equation*}
In other words, increasing $P$ beyond $N$ does not yield a better result in terms of the value of the primal optimization problems. At the same time,  the hyperparameter choice  $P=N$ minimizes the complexity of the neural network ansatz.

In addition to the fact that values coincide, we have the following result concerning its solutions: If \( (\omega^*_j, a^*_j, b^*_j)_{j=1}^N \) is a solution of \eqref{intro_pb:NN_exact} for \( P = N \), we obtain a solution to \eqref{intro_pb:NN_exact} for $P>N$  by appending zeros for $j =N+1,...,P$.  Conversely, given any solution \( (\omega^*_j, a^*_j, b^*_j)_{j=1}^P \) of \eqref{intro_pb:NN_exact} for \( P > N \), we can filter it to a solution of \eqref{intro_pb:NN_exact} for $P=N$ by Algorithm \ref{alg1}, as stated in Proposition \ref{prop:sparse}.

The same conclusions apply to problems \eqref{intro_pb:NN_epsilon} and \eqref{intro_pb:NN_reg}.
\end{rem}

\color{black}

\begin{proof}[Proof of Theorem \ref{thm:NN_exists_0}]
First,  from the definition of relaxed problems it follows that
 \begin{equation}\label{eq:proof_0}
         \textnormal{val} \eqref{pb:NN_exact_rel} \leq   \textnormal{val}\eqref{intro_pb:NN_exact}, \quad \textnormal{val} \eqref{pb:NN_epsilon_rel} \leq   \textnormal{val}\eqref{intro_pb:NN_epsilon}, \quad \textnormal{val} \eqref{pb:NN_reg_rel} \leq  \textnormal{val}\eqref{intro_pb:NN_reg}.
     \end{equation}
We then divide the remainder of the proof into the following two steps.

\smallskip
\noindent
\textbf{Step 1} (Cases of \eqref{intro_pb:NN_exact} and \eqref{intro_pb:NN_epsilon}).
By Theorem \ref{thm:NN_exists}, the feasible sets of the relaxed problems \eqref{pb:NN_exact_rel} and \eqref{pb:NN_epsilon_rel} are non-empty. We deduce that $S\eqref{pb:NN_exact_rel}$ and $S\eqref{pb:NN_epsilon_rel}$ are non-empty, convex and compact in the weak-$*$ sense from Theorem \ref{thm:representation}. This implies that $\emptyset\neq  \textnormal{Ext} (S\eqref{pb:NN_exact_rel} )\subseteq S\eqref{pb:NN_exact_rel}$ and $ \emptyset\neq \textnormal{Ext} (S\eqref{pb:NN_epsilon_rel})\subseteq S\eqref{pb:NN_epsilon_rel}$ (see \cite[Thm.\@ 3.24, 3.25]{rudin91functional}). 
The second part of Theorem \ref{thm:representation} states that $ \textnormal{Ext} (S\eqref{pb:NN_exact_rel} )\subseteq \mathcal{M}_N(\Omega)$ and $ \textnormal{Ext} (S\eqref{pb:NN_epsilon_rel})\subseteq \mathcal{M}_N(\Omega)$. As a consequence, $ S\eqref{pb:NN_exact_rel} \cap  \mathcal{M}_N(\Omega) \neq \emptyset$ and $S\eqref{pb:NN_epsilon_rel} \cap  \mathcal{M}_N(\Omega) \neq \emptyset$.

Let $\mu^{*} = \sum_{j=1}^N \omega_j^* \delta_{(a_j^*,b_j^*)} \in S\eqref{pb:NN_exact_rel} \cap  \mathcal{M}_N(\Omega)$.
It follows that
\begin{equation}\label{eq:proof_1}
    \text{val}\eqref{pb:NN_exact_rel} = \sum_{j=1}^N |\omega_j^*| ,\quad \text{and } \sum_{j=1}^N \omega^*_j \sigma(\langle a_j^*, x_i\rangle + b_j^*) = y_i ,\text{ for } i=1,\ldots,N.
\end{equation}
We deduce that \((\omega_j^*, a_j^*, b_j^*)_{j=1}^P\), with \((\omega_j^*, a_j^*, b_j^*) = (0, 0, 0)\) for \(j > N\), belongs to the admissible set of \eqref{intro_pb:NN_exact}. Then, we obtain from \eqref{eq:proof_1} that
\begin{equation}\label{eq:proof_2}
    \text{val}\eqref{pb:NN_exact_rel} \geq \text{val}\eqref{intro_pb:NN_exact}.
\end{equation}
By \eqref{eq:proof_0} and \eqref{eq:proof_2}, we have  \(\text{val}\eqref{pb:NN_exact_rel} = \text{val}\eqref{intro_pb:NN_exact}\). In the specific case where \(P = N\), it follows that \((\omega_j^*, a_j^*, b_j^*)_{j=1}^N\) is a solution of \eqref{intro_pb:NN_exact}, which establishes \eqref{eq:solution_eq} for \eqref{intro_pb:NN_exact}. The case of \eqref{intro_pb:NN_epsilon} is deduced similarly.

\medskip
\noindent\textbf{Step 2} (Case of \eqref{intro_pb:NN_reg}). 
 Let us first prove that $S\eqref{pb:NN_reg_rel}$ is non-empty. Since the objective function of \eqref{pb:NN_epsilon_rel} is proper, we assume that 
    $\{\mu_n\}_{n\geq 1}$ is a minimizing sequence of \eqref{pb:NN_reg_rel}.
    It follows that the sequence $\{\|\mu_n\|_{\text{TV}}\}_{n\geq 1}$ is bounded. Therefore, the set $\{\mu_n \}_{n \geq 1}$ is pre-compact in the weak-$*$ sense by the Banach\ -Alaoglu\ -\ Bourbaki theorem (see \cite[Thm.\@ 3.16]{brezis2011functional}). Without loss of generality, we assume that $\mu_n \overset{\ast}{\rightharpoonup} \tilde{\mu}$ for some $\tilde{\mu}\in \mathcal{M}(\Omega)$.
    Since $\sigma(\langle a, x_i\rangle + b)$ is continuous on $(a,b)$ for all $i$, we have that $  \lim_{n\to \infty} \phi_i\, \mu_n = \phi_i\, \tilde{\mu}$ for all $i$. 
    This, combined with the fact that $\ell$ is l.s.c., leads to
   \begin{equation*}
       \frac{1}{N}\sum_{i=1}^N \ell \left( \phi_i\, \tilde{\mu} -y_i \right)
        \leq \liminf_{n\to \infty}
        \frac{1}{N}\sum_{i=1}^N \ell \left( \phi_i\, \mu_n - y_i \right).
    \end{equation*}
    Moreover,
    $\|\cdot\|_{\text{TV}}$ is l.s.c. in the weak-$*$ topology (see \cite[Prop.\@ 3.13 (iii)]{brezis2011functional}). This implies that $
       \|\tilde{\mu}\|_{\text{TV}} \leq \liminf_{n\to \infty} \|\mu_n\|_{\text{TV}}
    $. Therefore, $\tilde{\mu}$ is a solution of problem \eqref{pb:NN_reg_rel}. 
    Combining with the convexity of $\ell$, we deduce that $S\eqref{pb:NN_reg_rel}$ is non-empty, convex, and compact in the weak-$*$ sense.

    Let $\mu^{\text{e}}\in \textnormal{Ext} (S\eqref{pb:NN_reg_rel} )$ and $Y^{\text{e}} = \phi\, \mu^{\text{e}}$. Assume that $\mu^{\text{e}} \notin \mathcal{M}_N(\Omega)$. Consider the following problem in the form of \eqref{pb:NN_exact_rel}:
    \begin{equation}\label{pb:proof}
    \inf_{\mu\in \mathcal{M}(\Omega)} \|\mu\|_{\text{TV}}, \qquad \text{s.t. }  \phi\, \mu = Y^{\text{e}}.
\end{equation}
Then, we deduce that $\mu^{\text{e}}$ is a solution of \eqref{pb:proof}, and any solution of \eqref{pb:proof} is also a solution of \eqref{pb:NN_reg_rel}.
Since $P\geq N$, by \eqref{eq:solution_eq} for \eqref{intro_pb:NN_exact} (already proved in step 1), $ \mu^{\text{e}}$ can be written as a convex combination of extreme points of $S\eqref{pb:proof}$ and these points belong to $\mathcal{M}_N(\Omega)$. Since $ \mu^{\text{e}} \notin \mathcal{M}_N(\Omega)$, this convex combination is strict. Combining with the fact that these extreme points are also solutions of \eqref{pb:NN_reg_rel}, we obtain a contradiction with the fact that $\mu^{\text{e}}\in \textnormal{Ext} (S\eqref{pb:NN_reg_rel} )$. Therefore, $\mu^{\text{e}}\in \mathcal{M}_N(\Omega)$. Let $ \mu^e = \sum_{\omega_j^*} \delta_{(a_j^*,b_j^*)}$. By a similar argument as in step 1, we obtain $\text{val}\eqref{pb:NN_reg_rel} = \text{val}\eqref{intro_pb:NN_reg}$ and \eqref{eq:solution_eq} for \eqref{intro_pb:NN_reg}.
\end{proof}

\section{A generalization bound of the trained shallow NN}\label{sec:generalization}
\subsection{A generalization bound} 
We study the generalization property of shallow NN \eqref{eq:shallow} with some trained parameters $\Theta = \{(\omega_j,a_j,b_j) \in \R^{d+2}\}_{j=1}^P$. 
Consider some testing dataset $(X',Y') = \{(x_i',y_i') \in \R^{d+1}\}_{i=1}^{N'}$ with $N'\in \mathbb{N}_+$, which differs from the training one $(X,Y) = \{(x_i,y_i) \in \R^{d+1}\}_{i=1}^{N}$.
Ususally, the generalization quality is determined by the performance of this shallow NN on the testing set $(X',Y')$, which is assessed by comparing the actual values $y_i'$ with the predictions $f_{\text{shallow}}(x_i', \Theta)$ for each $i=1,\ldots,N'$. In our context, rather than individually evaluating the differences, we analyze the discrepancies in their empirical distributions to simplify the analysis.
Recall the definition of empirical distributions from \eqref{eq:distributions}.

In Theorem \ref{thm:generalization}, we study the generalization error by the Kantorovich-Rubinstein distance \cite[Eq.\@ 6.3]{villani2009} between the true distribution of the testing dataset $m_{\text{test}}$ and the predicted distribution $ m_{\text{pred}}(\Theta)$. The Kantorovich\ -\ Rubinstein distance is defined as: Let $\mathcal{X}$ be a  Euclidean space and $\mathcal{P}(\mathcal{X}) $ be the set of probability measures in $\mathcal{X}$, for any $\mu, \nu \in \mathcal{P}(\mathcal{X})$,
\begin{equation*}
    d_{\text{KR}}(\mu,\nu) \coloneqq \sup_{F\in \text{Lip}_1(\mathcal{X}) } \int_{\mathcal{X}} F  d (\mu- \nu),
\end{equation*}
where Lip$_1(\mathcal{X})$ is the set of all 1-Lipschitz continuous functions on $\mathcal{X}$. We note that when the support sets of \(\mu\) and \(\nu\) are compact (which include empirical measures), the Kantorovich-Rubinstein distance is equivalent to the Wasserstein-1 distance \cite[Rem.\@ 6.5]{villani2009}.

\begin{thm}\label{thm:generalization}
    Assume that the activation function \(\sigma\) is \(L\)-Lipschitz. Then, for any $\Theta\in \R^{(d+2)P}$, we have
    \begin{equation}\label{eq:generalization}
        d_{\textnormal{KR}}(m_{\textnormal{test}}, m_{\textnormal{pred}} (\Theta)) \leq \underbrace{d_{\textnormal{KR}}(m_{\textnormal{train}}, m_{\textnormal{test}}) + d_{\textnormal{KR}}(m_X, m_{X'})}_{\textnormal{Irreducible error from datasets}} + r(\Theta),
    \end{equation}
    where
    \begin{equation}\label{eq:r}
        r(\Theta) = \underbrace{\frac{1}{N} \sum_{i=1}^N \left|f_{\textnormal{shallow}} (x_i, \Theta) - y_i\right|}_{\textnormal{Bias from training}} + \underbrace{d_{\textnormal{KR}}(m_X, m_{X'}) L  \sum_{j=1}^P |\omega_j|\|a_j\|}_{\textnormal{Standard deviation}}.
    \end{equation}
\end{thm}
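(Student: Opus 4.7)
The plan is to apply a triple triangle inequality for $d_{\text{KR}}$ through an auxiliary empirical measure built from the training features and their network predictions, and then estimate each of the three resulting terms using the dual definition of $d_{\text{KR}}$. Concretely, I introduce
\[
\tilde{m}_{\text{train}} \coloneqq \frac{1}{N}\sum_{i=1}^N \delta_{(x_i,\, f_{\text{shallow}}(x_i,\Theta))},
\]
and write
\[
d_{\text{KR}}(m_{\text{test}}, m_{\text{pred}}(\Theta)) \leq d_{\text{KR}}(m_{\text{train}}, m_{\text{test}}) + d_{\text{KR}}(m_{\text{train}}, \tilde{m}_{\text{train}}) + d_{\text{KR}}(\tilde{m}_{\text{train}}, m_{\text{pred}}(\Theta)).
\]
The first term is already one of the irreducible contributions, so there is nothing to do.

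For the middle term, $m_{\text{train}}$ and $\tilde{m}_{\text{train}}$ share the same $x$-coordinates and the same mass weights. For any $F \in \text{Lip}_1(\mathbb{R}^{d+1})$, 1-Lipschitzness in the second variable gives
\[
\int F \, d(m_{\text{train}} - \tilde{m}_{\text{train}}) = \frac{1}{N}\sum_{i=1}^N \bigl[F(x_i,y_i) - F(x_i,f_{\text{shallow}}(x_i,\Theta))\bigr] \leq \frac{1}{N}\sum_{i=1}^N |f_{\text{shallow}}(x_i,\Theta) - y_i|,
\]
which is precisely the training bias part of $r(\Theta)$.

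For the last term, I view $\tilde{m}_{\text{train}}$ and $m_{\text{pred}}(\Theta)$ as push-forwards of $m_X$ and $m_{X'}$ under the graph map $T\colon \mathbb{R}^d \to \mathbb{R}^{d+1}$, $T(x) = (x, f_{\text{shallow}}(x,\Theta))$. For any $F \in \text{Lip}_1(\mathbb{R}^{d+1})$ the composition $F\circ T$ satisfies
\[
|F(T(x)) - F(T(x'))| \leq \sqrt{\|x-x'\|^2 + |f_{\text{shallow}}(x,\Theta) - f_{\text{shallow}}(x',\Theta)|^2} \leq \|x-x'\| + |f_{\text{shallow}}(x,\Theta) - f_{\text{shallow}}(x',\Theta)|.
\]
Using the $L$-Lipschitzness of $\sigma$ together with the triangle inequality, $f_{\text{shallow}}(\cdot,\Theta)$ is Lipschitz with constant at most $L\sum_j |\omega_j|\|a_j\|$, so $F\circ T$ is Lipschitz on $\mathbb{R}^d$ with constant bounded by $1 + L\sum_j |\omega_j|\|a_j\|$. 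Rescaling $F\circ T$ by this constant and applying the dual definition of $d_{\text{KR}}$ on $\mathbb{R}^d$ yields
\[
d_{\text{KR}}(\tilde{m}_{\text{train}}, m_{\text{pred}}(\Theta)) \leq \Bigl(1 + L \sum_{j=1}^P |\omega_j|\|a_j\|\Bigr) d_{\text{KR}}(m_X, m_{X'}).
\]
Expanding the factor, the ``$1$'' contributes the second irreducible-error term $d_{\text{KR}}(m_X,m_{X'})$ and the remaining piece is exactly the standard deviation term in \eqref{eq:r}.

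The main obstacle is the last step, specifically splitting the Lipschitz constant of the graph map $T$ cleanly into a ``$+1$'' contribution and a ``$L\sum_j |\omega_j|\|a_j\|$'' contribution, which relies on the elementary bound $\sqrt{a^2+b^2}\leq |a|+|b|$; everything else is bookkeeping with the triangle inequality and the dual Kantorovich-Rubinstein formula.
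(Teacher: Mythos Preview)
Your proposal is correct and mirrors the paper's proof essentially line by line: the paper introduces the same auxiliary measure (called $m_{\text{auxi}}$), performs the same triple triangle inequality, bounds the middle term via the $1$-Lipschitzness of $F$ in the second variable, and handles the last term by viewing both measures as push-forwards of $m_X$ and $m_{X'}$ under the graph map, bounding $\text{Lip}(F\circ(\text{id},f_{\text{shallow}}))\le 1+\text{Lip}(f_{\text{shallow}})$ and then $\text{Lip}(f_{\text{shallow}})\le L\sum_j|\omega_j|\|a_j\|$. Your explicit use of $\sqrt{a^2+b^2}\le|a|+|b|$ is just a slightly more detailed justification of the same Lipschitz bound the paper states directly.
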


\begin{proof}
The proof is presented in Section \ref{sec:proof}.
\end{proof}

From Theorem \ref{thm:generalization}, we observe that the first two terms on the right-hand side of \eqref{eq:generalization} are independent of the parameters \(\Theta\). These terms depend on the difference between the training and testing datasets, which is expected, as a highly accurate model cannot be achieved if these sets differ significantly. Consequently, we refer to these terms as the irreducible error arising from the datasets.

In contrast, the third term, \(r(\Theta)\), depends on the learned parameters \(\Theta\). As seen in \eqref{eq:r}, the residual term \(r(\Theta)\) is composed of two parts: (1) the first term represents the fidelity error on the training set, which is referred to as the bias from training, and (2) the second term is related to the sensitivity of the trained NN, playing the role of the standard deviation.

In the following remark, using an approach similar to the proof of Theorem \ref{thm:generalization}, we provide generalization bounds on the mean-\(\ell^p\) error (see \eqref{eq:mlp1} and \eqref{eq:mlp2}) for the specific case where \( N' = N \).
\begin{rem}[Mean-$\ell^p$ error]\label{rem:mean-lp}
    Assume that $N'=N$. Let $ \text{Sym}(N)$ be the set of all permutations of $\{1,2,\ldots,N\}$. Recall the definitions of the Wasserstein distance \cite[Prop.\@ 2.1, 2.2]{peyre2019computational} and the Bottleneck distance \cite[Sec.\@ 3.1]{cohen2005stability} between \( m_{\text{train}} \) and \( m_{\text{test}} \): 
    \begin{align*}
        W_p(m_{\text{train}},m_{\text{test}}) & \coloneqq \min_{\tau\in \text{Sym}(N)} \left( \frac{1}{N}\sum_{i=1}^N 
 \|(x_i-x'_{\tau(i)},\, y_i - y'_{\tau(i)})\|^p \right)^{1/p}, \quad \text{for }p\geq 1;\\
        W_{\infty}(m_{\text{train}},m_{\text{test}}) & \coloneqq   \min_{\tau\in \text{Sym}(N)} \max_{i}
 \|(x_i-x'_{\tau(i)}, \, y_i - y'_{\tau(i)})\|.
    \end{align*}
Then, using a similar approach as in the proof of Theorem \ref{thm:generalization}, we deduce the following generalization bounds on the mean-\(\ell^p\) error:
\begin{equation}\label{eq:mlp1}
\begin{split}
     \left(\frac{1}{N}\sum_{i=1}^N |y_i' - f_{\text{shallow}}(x_i',\Theta)|^p\right)^{1/p} \leq & W_p(m_{\text{train}},m_{\text{test}}) 
     + \left(\frac{1}{N} \sum_{i=1}^N \left|f_{\textnormal{shallow}} (x_i, \Theta) - y_i\right|^p \right)^{1/p} \\ & + W_p(m_{\text{train}},m_{\text{test}})  L  \sum_{j=1}^P |\omega_j|\|a_j\|, \quad \text{for } p\geq 1,
\end{split}
\end{equation}
and 
\begin{equation}\label{eq:mlp2}
\begin{split}
    \max_{i} |y_i' - f_{\text{shallow}}(x_i',\Theta)| \leq & W_{\infty}(m_{\text{train}},m_{\text{test}}) 
     + \max_i\left|f_{\textnormal{shallow}} (x_i, \Theta) - y_i\right| \\
     & + W_{\infty}(m_{\text{train}},m_{\text{test}})  L  \sum_{j=1}^P |\omega_j|\|a_j\|.
\end{split}
\end{equation}
\end{rem}

\subsection{Generalization bounds by primal solutions of ReLU NNs}\label{sec:generalization_relu}
In this subsection, we examine the remainder term \(r(\Theta)\) provided in \eqref{eq:r}, where \(\Theta\) corresponds to the solutions of \eqref{intro_pb:NN_epsilon} and \eqref{intro_pb:NN_reg}. For simplicity, we fix \(\sigma\) as the ReLU function. We first use a scaling argument to show that, in this context, the value of \(r(\Theta)\) is independent of the a priori choice of the domain \(\Omega\).

\begin{lem}\label{lm:relu}
 Let $\sigma$ be the ReLU function, and $\Omega_0$ be a subset of $\mathbb{R}^{d+1}$ satisfying Assumption \ref{ass:sigma}(2). Then, for any $k > 0$, the following holds:
\begin{equation*}
    \inf_{\Theta \in S\eqref{intro_pb:NN_epsilon} \textnormal{ with }\Omega = \Omega_0} r(\Theta) = \inf_{\Theta' \in S\eqref{intro_pb:NN_epsilon} \textnormal{ with }\Omega = k \Omega_0} r(\Theta'),
\end{equation*}
The same conclusion holds for \eqref{intro_pb:NN_reg} when $\ell(\cdot) = |\cdot|$.
\end{lem}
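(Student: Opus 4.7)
The plan is to exploit the $1$-homogeneity $\sigma(kz)=k\sigma(z)$ of the ReLU (valid for $k\geq 0$) to build an explicit parameter-rescaling that bijects the solution sets and under which the residual $r$ is pointwise invariant. For any $k>0$, I would introduce the map
\[
\Phi_k \colon (\omega_j, a_j, b_j)_{j=1}^P \longmapsto \bigl(\omega_j/k,\, k a_j,\, k b_j\bigr)_{j=1}^P,
\]
which is a bijection of $(\mathbb{R}\times\Omega_0)^P$ onto $(\mathbb{R}\times k\Omega_0)^P$ with inverse $\Phi_{1/k}$. The two core invariances to establish first are: (i) $f_{\textnormal{shallow}}(\cdot, \Phi_k\Theta)=f_{\textnormal{shallow}}(\cdot,\Theta)$, which is immediate from $(\omega/k)\sigma(\langle ka,x\rangle + kb)=\omega\sigma(\langle a,x\rangle+b)$; and (ii) $r(\Phi_k\Theta)=r(\Theta)$, where the training-bias term is preserved by (i) and the sensitivity term by the elementary equality $|\omega_j/k|\cdot\|ka_j\|=|\omega_j|\cdot\|a_j\|$.

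For \eqref{intro_pb:NN_epsilon}, the bijection of solution sets is then straightforward: the constraint $|f_{\textnormal{shallow}}(x_i,\Theta)-y_i|\leq\epsilon$ depends only on $f_{\textnormal{shallow}}$ and is preserved by $\Phi_k$, while the objective $\sum_j|\omega_j|$ rescales by the uniform factor $1/k$. This yields $\textnormal{val}\eqref{intro_pb:NN_epsilon}|_{k\Omega_0}=\textnormal{val}\eqref{intro_pb:NN_epsilon}|_{\Omega_0}/k$, and consequently $\Theta\in S\eqref{intro_pb:NN_epsilon}|_{\Omega_0}$ if and only if $\Phi_k(\Theta)\in S\eqref{intro_pb:NN_epsilon}|_{k\Omega_0}$. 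Combining this bijection with (ii) gives the equality of the two infima of $r$.

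For \eqref{intro_pb:NN_reg} with $\ell(\cdot)=|\cdot|$, I would run the analogous argument, now relying on the fact that both $\sigma$ and the loss $|\cdot|$ are $1$-homogeneous. Under $\Phi_k$ the fidelity sum is invariant while the $\ell^1$-regularization rescales by $1/k$, and the matching of minimizers for a fixed $\lambda$ is deduced by analyzing the optimality structure of the two problems jointly with the pointwise invariance of $r$ already secured in step~(ii).

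The main obstacle is precisely this last step for \eqref{intro_pb:NN_reg}: in contrast to the \eqref{intro_pb:NN_epsilon} case, the two summands of the objective do not rescale by the same factor under $\Phi_k$, so minimizer-preservation for a fixed $\lambda$ is not automatic from the naive scaling. The cleanest route I would pursue is to lift the problem to its mean-field relaxation \eqref{pb:NN_reg_rel} and implement the rescaling at the level of measures via the pushforward $\mu\mapsto(\tau_k)_\#\mu/k$ with $\tau_k(a,b)=(ka,kb)$, which preserves $\phi\mu$ and rescales $\|\mu\|_{\textnormal{TV}}$ by $1/k$. Then I would use the extreme-point characterization from Theorem~\ref{thm:NN_exists_0}, together with the measure-level counterpart of (ii), to transfer the statement back to primal solutions with at most $N$ activated neurons.
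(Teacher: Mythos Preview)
Your treatment of \eqref{intro_pb:NN_epsilon} is correct and essentially identical to the paper's: both introduce the rescaling $\Phi_k$, verify invariance of $f_{\textnormal{shallow}}$ and of $r$, and deduce the bijection of solution sets from the uniform $1/k$ scaling of the objective under an invariant constraint.

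For \eqref{intro_pb:NN_reg}, however, your proposed detour through the mean-field relaxation does not resolve the obstacle you correctly diagnosed. The pushforward $\mu\mapsto(\tau_k)_\#\mu/k$ indeed preserves $\phi\mu$ and rescales $\|\mu\|_{\textnormal{TV}}$ by $1/k$, but that is precisely the same non-uniform behaviour (fidelity invariant, regularizer scaled) that you flagged at the primal level. Concretely, for $\nu=(\tau_k)_\#\mu/k$ one has
\[
\|\nu\|_{\textnormal{TV}}+\frac{\lambda}{N}\sum_i\bigl|\phi_i\nu-y_i\bigr|
=\frac{1}{k}\Bigl(\|\mu\|_{\textnormal{TV}}+\frac{k\lambda}{N}\sum_i\bigl|\phi_i\mu-y_i\bigr|\Bigr),
\]
so this map bijects $S\eqref{pb:NN_reg_rel}$ with $\Omega=\Omega_0$ onto $S\eqref{pb:NN_reg_rel}$ with $\Omega=k\Omega_0$ \emph{and} $\lambda$ replaced by $\lambda/k$, not onto the solution set for the same $\lambda$. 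Invoking Theorem~\ref{thm:NN_exists_0} then only transfers this $\lambda$-shifted correspondence back to the primal, leaving you exactly where you started. The paper's own proof dispatches the regression case with a bare ``deduced similarly'' and gives no additional mechanism either; but as a proof strategy, your mean-field route does not close the gap you yourself identified, and some genuinely different argument is needed to handle the fixed-$\lambda$ comparison.
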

\begin{proof}
Let us denote by \( V_0 \) and \( V_1 \) the values of \eqref{intro_pb:NN_epsilon} with \( \Omega = \Omega_0 \) and \( \Omega = k \Omega_0 \), respectively. Let \( \Theta = (\omega_j, a_j, b_j)_{j=1}^P \) be a solution of \eqref{intro_pb:NN_epsilon} with \( \Omega = \Omega_0 \). By the \(1\)-homogeneity of the ReLU function, for any \( k > 0 \) and \( x \in \mathbb{R}^d \), we have
\begin{equation*}
    \omega_j \sigma(\langle a_j, x \rangle + b_j) = \frac{\omega_j}{k}  \sigma\left(\left\langle k a_j, x \right\rangle + k b_j\right), \quad \text{for } j = 1, \ldots, P.
\end{equation*}
It follows that \( \Theta^k = \left(\frac{\omega_j}{k}, k a_j, k b_j\right)_{j=1}^P \) is in the admissible set of \eqref{intro_pb:NN_epsilon} with \( \Omega = k \Omega_0 \). Therefore,
\begin{equation*}
    V_1 \leq \frac{V_0}{k}.
\end{equation*}
By exchanging the roles of \( \Omega_0 \) and \( k \Omega_0 \), we obtain
\begin{equation*}
    V_0 \leq k V_1.
\end{equation*}
It follows that \( V_0 = k V_1 \) and \( \Theta^k \) is a solution of \eqref{intro_pb:NN_epsilon} with \( \Omega = k \Omega_0 \). Recall that
\begin{equation*}
    r(\Theta) = \frac{1}{N} \sum_{i=1}^N \left| f_{\textnormal{shallow}} (x_i, \Theta) - y_i \right| + d_{\textnormal{KR}}(m_X, m_{X'}) L \sum_{j=1}^P |\omega_j| \|a_j\|.
\end{equation*}
This implies \( r(\Theta) = r(\Theta^k) \). Since \( \Theta \) is an arbitrary solution of \eqref{intro_pb:NN_epsilon} with \( \Omega = \Omega_0 \), we obtain
\begin{equation*}
   \inf_{\Theta' \in S\eqref{intro_pb:NN_epsilon} \textnormal{ with } \Omega = k \Omega_0} r(\Theta') \leq \inf_{\Theta \in S\eqref{intro_pb:NN_epsilon} \textnormal{ with } \Omega = \Omega_0} r(\Theta).
\end{equation*}
By exchanging the roles of \( \Omega_0 \) and \( k \Omega_0 \), we obtain
\begin{equation*}
   \inf_{\Theta \in S\eqref{intro_pb:NN_epsilon} \textnormal{ with } \Omega = \Omega_0} r(\Theta) \leq \inf_{\Theta' \in S\eqref{intro_pb:NN_epsilon} \textnormal{ with } \Omega = k \Omega_0} r(\Theta').
\end{equation*}
The conclusion follows. The result for \eqref{intro_pb:NN_reg} can be deduced similarly.
  \end{proof}

 Thanks to the previous lemma, we fix $\Omega$ as the unit ball in $\mathbb{R}^{d+1}$ in Proposition \ref{prop:lambda} to study upper bounds of $r(\Theta)$ with $\Theta$ solutions of the primal problems.
 
\begin{prop}\label{prop:lambda}
Let \( P \geq N \). Fix \(\sigma\) as the ReLU function and \(\Omega\) as the unit ball in \(\mathbb{R}^{d+1}\). Consider the fidelity error \(\ell(\cdot) = |\cdot|\). For any \(\epsilon \geq 0\) and \(\lambda > 0\), let \(\Theta_{\epsilon}\) and \(\Theta^{\textnormal{reg}}_{\lambda}\) be the solutions of \eqref{intro_pb:NN_epsilon} and \eqref{intro_pb:NN_reg}, respectively. Then, the following inequalities hold:
\begin{align}
       & r(\Theta_{\epsilon}) \leq \mathcal{U}(\epsilon)\coloneqq \epsilon + C_{X,X'} \,\textnormal{val}\eqref{pb:NN_epsilon_rel}; \label{eq:generalization_epsilon}\\[0.6em]
        & r(\Theta^{\textnormal{reg}}_{\lambda}) \leq \mathcal{L}(\lambda)\coloneqq  \max \{\lambda^{-1}, \, C_{X,X'}\} \, \textnormal{val}\eqref{pb:NN_reg_rel}, \label{eq:generalization_lambda}
    \end{align}
    where $C_{X,X'} =d_{\textnormal{KR}} (m_X,m_{X'})$.
\end{prop}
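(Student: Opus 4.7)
The plan is to reduce everything to two ingredients: (i) the value identities from Theorem \ref{thm:NN_exists_0} that transfer the TV norms on measures to the $\ell^1$-norm on weights of the primal solutions, and (ii) the fact that for the ReLU choice with $\Omega$ the unit ball we have $L=1$ and $\|a_j\|\leq 1$, which dominates the ``standard deviation'' term in $r(\Theta)$ by $\sum_j|\omega_j|$.

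First, I will rewrite $r(\Theta)$ in a convenient form. Since $\sigma$ is the ReLU, its Lipschitz constant is $L=1$; and since $(a_j,b_j)\in\Omega$ with $\Omega$ the closed unit ball of $\mathbb{R}^{d+1}$, we have $\|a_j\|\leq \|(a_j,b_j)\|\leq 1$. Therefore, for any feasible $\Theta=(\omega_j,a_j,b_j)_{j=1}^P$,
\begin{equation*}
r(\Theta)\;\leq\; \underbrace{\frac{1}{N}\sum_{i=1}^N|f_{\textnormal{shallow}}(x_i,\Theta)-y_i|}_{=:\,B(\Theta)} \;+\; C_{X,X'}\underbrace{\sum_{j=1}^P|\omega_j|}_{=:\,A(\Theta)}.
\end{equation*}

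Next, I handle the approximate representation case. If $\Theta_\epsilon\in S\eqref{intro_pb:NN_epsilon}$, then by the constraint in \eqref{intro_pb:NN_epsilon} we have $|f_{\textnormal{shallow}}(x_i,\Theta_\epsilon)-y_i|\leq\epsilon$ for every $i$, so $B(\Theta_\epsilon)\leq\epsilon$. By Theorem \ref{thm:NN_exists_0}, $A(\Theta_\epsilon)=\textnormal{val}\eqref{intro_pb:NN_epsilon}=\textnormal{val}\eqref{pb:NN_epsilon_rel}$. Combining these two facts immediately yields \eqref{eq:generalization_epsilon}.

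For the regression case, let $\Theta^{\textnormal{reg}}_\lambda\in S\eqref{intro_pb:NN_reg}$ and write $A=A(\Theta^{\textnormal{reg}}_\lambda)$, $B=B(\Theta^{\textnormal{reg}}_\lambda)$, so that, by Theorem \ref{thm:NN_exists_0},
\begin{equation*}
A+\lambda B \;=\; \textnormal{val}\eqref{intro_pb:NN_reg}\;=\;\textnormal{val}\eqref{pb:NN_reg_rel}.
\end{equation*}
The key elementary inequality is that, since $A,B\geq 0$,
\begin{equation*}
B+C_{X,X'}A \;\leq\; \max\bigl\{\lambda^{-1},\,C_{X,X'}\bigr\}\,(A+\lambda B),
\end{equation*}
because $C_{X,X'}\leq\max\{\lambda^{-1},C_{X,X'}\}$ and $1\leq \lambda\max\{\lambda^{-1},C_{X,X'}\}$. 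Substituting $A+\lambda B=\textnormal{val}\eqref{pb:NN_reg_rel}$ gives \eqref{eq:generalization_lambda}.

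There is no genuine obstacle: the statement is essentially a bookkeeping consequence of Theorem \ref{thm:NN_exists_0} together with the two structural simplifications special to the $(\textnormal{ReLU},\,\text{unit ball})$ choice. The only step requiring a brief argument is the $\max\{\lambda^{-1},C_{X,X'}\}$ trade-off above, whose proof is a one-line weighted bound on a nonnegative linear combination. I would therefore expect the write-up to be short, with the only subtlety being to remind the reader that without the bound $\|a_j\|\leq 1$ (i.e.\ without fixing $\Omega$ as the unit ball) one would pay an extra $\sup_{(a,b)\in\Omega}\|a\|$ factor, which Lemma \ref{lm:relu} shows is harmless by rescaling.
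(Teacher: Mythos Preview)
Your proposal is correct and follows essentially the same route as the paper: bound $r(\Theta)$ by $B(\Theta)+C_{X,X'}A(\Theta)$ using $L=1$ and $\|a_j\|\le 1$, then invoke the constraint in \eqref{intro_pb:NN_epsilon} and Theorem~\ref{thm:NN_exists_0} for \eqref{eq:generalization_epsilon}, and the identity $A+\lambda B=\textnormal{val}\eqref{pb:NN_reg_rel}$ for \eqref{eq:generalization_lambda}. Your explicit justification of the $\max\{\lambda^{-1},C_{X,X'}\}$ step is in fact slightly more detailed than the paper's, which simply asserts that the conclusion follows.
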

\begin{proof}
Given that $\Omega$ is the unit ball, for any \(\Theta = (\omega_j, a_j, b_j)_{j=1}^P \in (\mathbb{R} \times \Omega)^P\),  we have $\|a_j\|\leq 1$. Since the ReLU function is 1-Lipschitz, we obtain by the formula of $r(\Theta)$ in \eqref{eq:r} that
\begin{equation}\label{eq:motivation}
     r(\Theta) \leq \frac{1}{N} \sum_{i=1}^N \left|f_{\textnormal{shallow}}(x_i, \Theta) - y_i\right| + d_{\textnormal{KR}}(m_X, m_{X'}) \sum_{j=1}^P |\omega_j|.
\end{equation}
The constraints in \eqref{intro_pb:NN_epsilon} imply that \( \sum_{i=1}^N \left|f_{\textnormal{shallow}}(x_i, \Theta_{\epsilon}) - y_i\right| /N \leq \epsilon\).
Since \( P \geq N \), we deduce from Theorem \ref{thm:NN_exists_0} that \(\text{val}\eqref{intro_pb:NN_epsilon} = \text{val}\eqref{pb:NN_epsilon_rel}\). Consequently, the inequality \eqref{eq:generalization_epsilon} follows.

Additionally, by Theorem \ref{thm:NN_exists_0}, for the regression problem, it holds that:
\[
\text{val}\eqref{pb:NN_reg_rel} = \text{val}\eqref{intro_pb:NN_reg} = \frac{\lambda}{N} \sum_{i=1}^N \left|f_{\textnormal{shallow}}(x_i, \Theta^{\text{reg}}_{\lambda}) - y_i\right| + \sum_{j=1}^P |\omega_j|.
\]
This, together with \eqref{eq:motivation}, leads to \eqref{eq:generalization_lambda}.
\end{proof}

\begin{rem}[Motivation of primal problems]\label{rem:motivation}
The upper bound of \( r(\Theta) \) from \eqref{eq:motivation} aligns with the structure of our primal regression problem \eqref{intro_pb:NN_reg}. This provides an additional motivation, beyond sparsity, for using \( \|\omega\|_{\ell^1} \) as the regularization term in \eqref{intro_pb:NN_reg}. Similarly, we adopt this penalty in the representation problems \eqref{intro_pb:NN_exact} and \eqref{intro_pb:NN_epsilon} as part of the objective function.

An improved alternative to the penalty \( \|\omega\|_{\ell^1} \) is \( \sum_{j=1}^P |\omega_j|\|a_j\| \), which originates directly from the formula of \( r(\Theta) \) in \eqref{eq:r}. This penalty is also closely related to the Barron norm of Shallow NNs, as discussed in \cite{ma2022barron}. However, with this penalty, the Representer Theorem (Theorem \ref{thm:representation}) cannot be applied to analyze the relaxed problems. Instead, recent extensions of Theorem \ref{thm:representation} in \cite{boyer2019representer} offer insights to address this case, forming one of the perspectives of this article.
\end{rem}

\begin{rem}[Optimality of $P=N$]\label{rem:P2}
In Remark \ref{rem:P1}, we have shown that $P = N$ is optimal in terms of the optimization value and the complexity of the neural network ansatz. Furthermore, by Proposition \ref{prop:lambda},  the upper bounds $\mathcal{U}(\epsilon)$ and $\mathcal{L}(\lambda)$ are independent of $P$ when $P \geq N$, as \textnormal{val}\eqref{pb:NN_epsilon_rel} and \textnormal{val}\eqref{pb:NN_reg_rel} do not depend on \(P\). Consequently, \(P = N\) is the optimal choice for the generalization bounds as well.
\end{rem}

\begin{rem}[Order of \(d_{\text{KR}}(m_X, m_{X'})\)]\label{rem:ot}
   In general, the distance $ d_{\text{KR}}(m_X, m_{X'})$ is challenging to be computed, as it requires solving an optimal transport problem (it is equivalent to the Wasserstein-1 distance). However, according to \cite[Thm.\@ 6.1]{dobric1995asymptotics}, \(\mathbb{E}_{X,X' \sim m}[d_{\text{KR}}(m_X, m_{X'})] \sim \mathcal{O}(1/N^{1-1/d})\), as $N\to\infty$, in the particular case \(N = N'\) and \(X, X'\) are independently sampled from the same distribution \(m \in \mathcal{P}(\mathbb{R}^d)\), with compact support and \(d \geq 3\). For further relevant results, see \cite[p.\@ 111]{villani2009}.
\end{rem}

\begin{rem}[Robust optimization on generalization bounds]\label{rem:robust}
   The estimates in \eqref{eq:generalization_epsilon}-\eqref{eq:generalization_lambda} provide upper bounds on the remainder term \( r(\cdot) \), for the solutions of the primal problems. These bounds depend on the hyperparameters and the distance between the training and testing feature sets. By minimizing \( \mathcal{U} \) w.r.t.\@ \( \epsilon \) and \( \mathcal{L} \) w.r.t.\@ \( \lambda \), 
   we obtain optimal hyperparameters that robustly control the worst-case behavior of the remainder terms. 
\end{rem}

\subsection{Optimal choices of the hyperparameters}\label{sec:parameters}
As noted in Remark \ref{rem:robust}, we are interested in finding the minimizers of the upper bound functions \(\mathcal{L}(\lambda)\) and \(\mathcal{U}(\epsilon)\), as defined in Proposition \ref{prop:lambda}. In the following two remarks, along with Figure \ref{fig_epsilon}, we present these minimizers and illustrate the qualitative properties of \(\mathcal{L}\) and \(\mathcal{U}\).

\begin{rem}[Minimizer of $\mathcal{L}$]\label{rem:lam}
Since \(\ell\) is positive, it is straightforward to see that \(\text{val}\eqref{intro_pb:NN_reg}\) increases with respect to \(\lambda\), in particular when \(\lambda \geq C_{X,X'}^{-1}\). On the other hand, for \(\lambda < C_{X,X'}^{-1}\), the function \(\text{val}\eqref{pb:NN_reg_rel}/\lambda\) decreases due to the positivity of \(\sum_{j=1}^N|\omega_j|\). Therefore, \(\mathcal{L}\) is minimized at \(\lambda^{*} = C_{X,X'}^{-1}\) and the minimum is val\eqref{pb:NN_reg_rel} for $\lambda=\lambda^*$. On the other hand, the limits  \(\lambda \to 0^{+}\) and \(\lambda \to +\infty\), lead to \(\mathcal{L}(0^{+}) = \|Y\|_{\ell^1}/N\) and \(\mathcal{L}(+\infty) = C_{X,X'} \, \text{val}\eqref{intro_pb:NN_exact}\), where $Y=(y_1,\ldots, y_N)$.  The qualitative behavior of \(\mathcal{L}(\lambda)\) is described the left-hand plot of Figure \ref{fig_epsilon}. In the scenario of Remark \ref{rem:ot}, the optimal $\lambda^*$ is of the order of $\mathcal{O}(N^{1-1/d})$.
\end{rem}

\begin{rem}[Minimizer of $\mathcal{U}$]\label{rem:eps}
    The function \(\mathcal{U}\) is proven to be convex in Theorem \ref{thm:epsilon}. There are two scenarios for the minimizer of \(\mathcal{U}\). In the first case, where \(C_{X,X'} \leq c_0^{-1}\), the optimal hyperparameter is \(\epsilon^{*} = 0\), as shown by the red curve in the right-hand plot of Figure \ref{fig_epsilon}. This implies that it suffices to consider the exact representation problem \eqref{intro_pb:NN_exact}. Here, the threshold $c_0$ is defined by:
    \begin{equation}\label{c0}
        c_0 \coloneqq \min_{p\in \R^N} \left\{\, \|p\|_{\ell^1}\, \Big|\,  p \in \argmax_{\|\phi^{*} \, p\|_{\mathcal{C}(\Omega)} \leq 1}  \langle Y, p \rangle \right\},
    \end{equation}
 where $\phi^{*} \colon \R^N \to \mathcal{C}(\Omega)$, $ p\mapsto \sum_{i=1}^N p_i\sigma(\langle a, x_i \rangle + b)$, is the adjoint operator of $\phi$, and the maximization problem is the dual problem of \eqref{pb:NN_exact_rel}.

    In contrast, when \( C_{X,X'} > c_0^{-1} \), the hyperparameter \(\epsilon^{*}\) is optimal if and only if the following first-order optimality condition holds:
\begin{equation}\label{first-order}
    C_{X,X'}^{-1} \in [c_{\epsilon^*}, C_{\epsilon^*}],
\end{equation}
where \( c_{\epsilon^*} \) and \( C_{\epsilon^*} \) are defined in \eqref{eq:c_epsilon}, which are analogous to the definition of \( c_0 \) by considering the dual problem of \eqref{pb:NN_epsilon_rel}. From \eqref{first-order} and the convexity of \( \mathcal{U} \), we deduce that the set of optimal \( \epsilon^* \) forms a closed interval, which may reduce to a singleton. This is illustrated by the blue curve in the right-hand plot of Figure \ref{fig_epsilon}.

   For a formal description of these two cases, we refer to Theorem \ref{thm:epsilon}. The statement and proof of Theorem \ref{thm:epsilon} is technical and is contained in Appendix \ref{app:dual}.
\end{rem}

\begin{figure}[htp]
\centering
\begin{subfigure}[t]{0.48\textwidth}
\includegraphics[width=\linewidth]{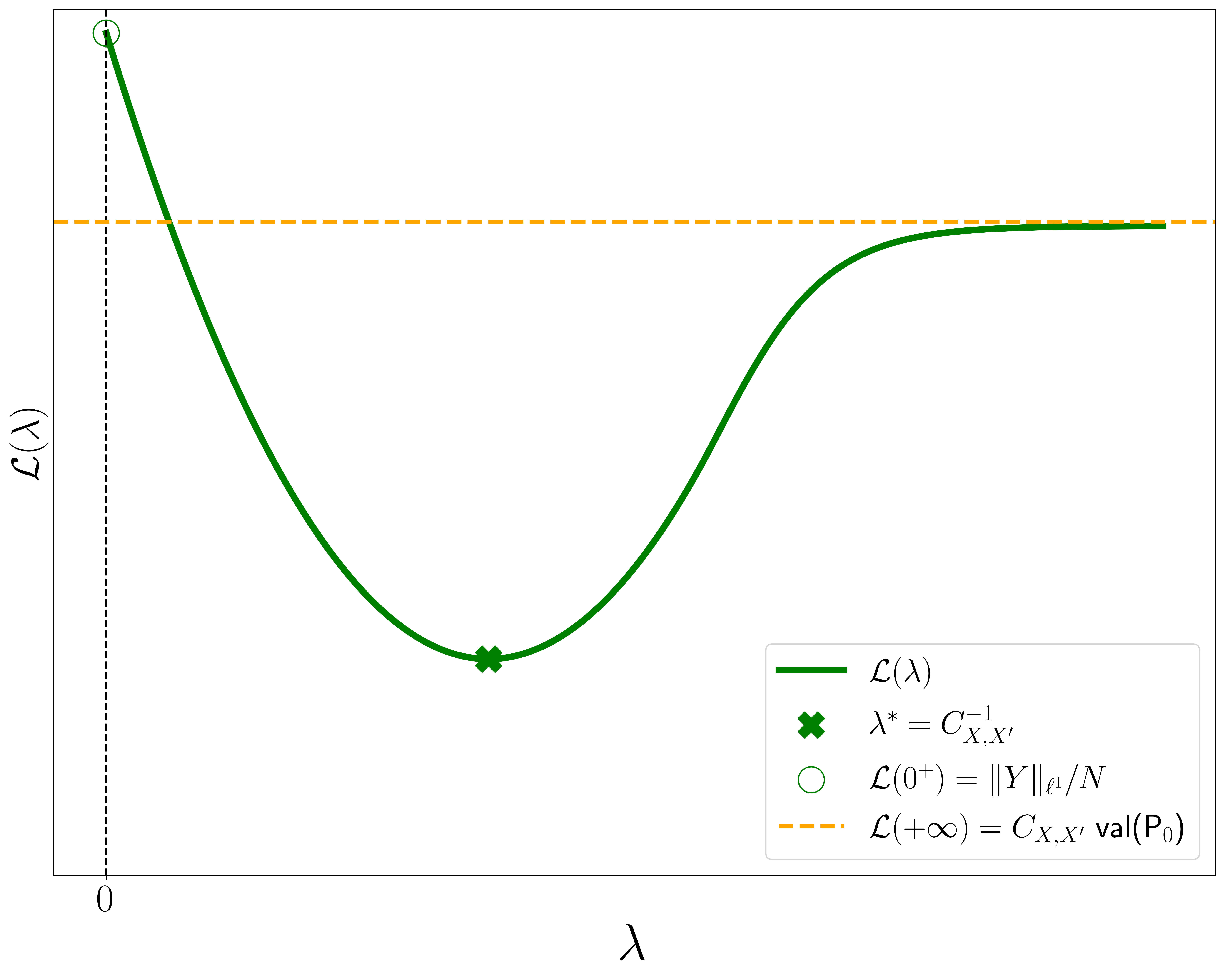}
\caption{Qualitative curve of $\mathcal{L}(\lambda)$.}
\label{fig:lambda}
\end{subfigure}
\begin{subfigure}[t]{0.48\textwidth}
\includegraphics[width=\linewidth]{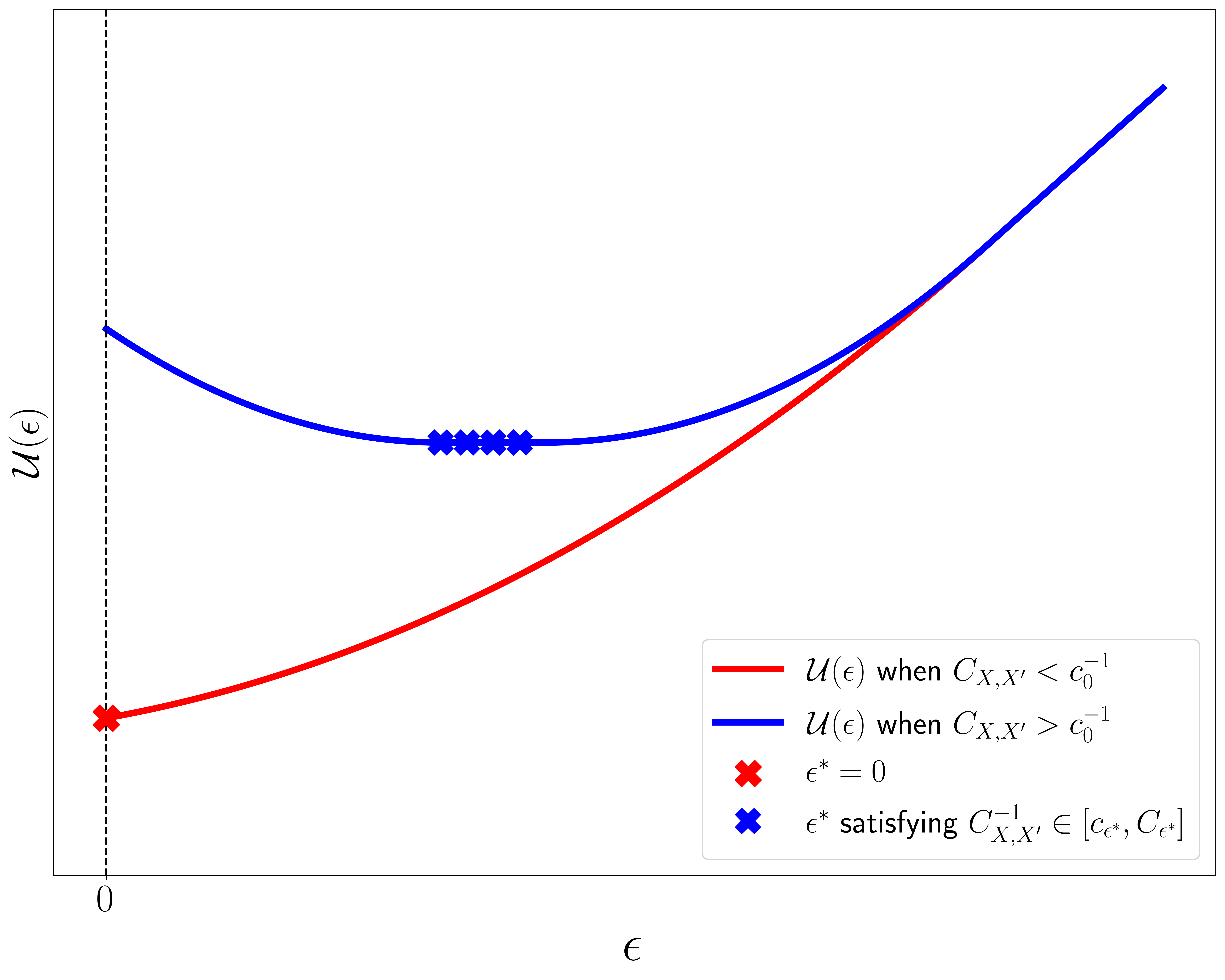}
\caption{Two scenarios of $\mathcal{U}(\epsilon)$ from Theorem \ref{thm:epsilon}.}
\label{fig:epsilon}
\end{subfigure}
\caption{Qualitative curves of $\mathcal{L}$ and $\mathcal{U}$ in Proposition \ref{prop:lambda} and their minimizers.}
\label{fig_epsilon}
\end{figure}

\section{Numerical analysis and algorithms}\label{sec:algo}
In this section, we concentrate on the numerical analysis and algorithms for the primal problems. Specifically, we address \eqref{intro_pb:NN_epsilon} and \eqref{intro_pb:NN_reg}, as the exact representation problem \eqref{intro_pb:NN_exact} can be viewed as a special case of \eqref{intro_pb:NN_epsilon} when $\epsilon=0$. Consequently, the algorithm employed for \eqref{intro_pb:NN_epsilon} also applies to \eqref{intro_pb:NN_exact}. 

By Remarks \ref{rem:P1} and \ref{rem:P2}, it suffices to have $N$ neurons to achieve the minimal optimization value and generalization bounds. Therefore, we assume here the following: (1) $P = N$, and (2) the dataset $\{(X,Y)\} = \{(x_i, y_i) \in \R^{d+1}\}_{i=1}^N$ consists of distinct features.
By Theorem \ref{thm:NN_exists_0}, there is no gap between the primal problems and their convex relaxations. As a consequence, we aim to obtain solutions of \eqref{intro_pb:NN_epsilon} and \eqref{intro_pb:NN_reg} from their relaxed counterparts. However, these relaxed problems lie in the infinite-dimensional space $\mathcal{M}(\Omega)$. To address this challenge, we propose the following methods distinguishing two scenarios:
\begin{itemize}
    \item Discretize, then optimize. When the dimension of \(\Omega\) is small (less than $6$ from Remark \ref{rem:memory}), we discretize \(\mathcal{M}(\Omega)\), leading to the discretized problems \eqref{pb:NN_M_eq} and \eqref{pb:NN_reg_M_eq}. The errors introduced by these discretizations are estimated in Theorem \ref{thm:discrete}. The discretized problems are equivalent to linear programming ones (we consider $\ell(\cdot)=|\cdot|$ for the regression problem) and can be efficiently solved using algorithms from operations research. As shown in Theorem \ref{thm:simplex}, the solution obtained by the simplex algorithm provides solutions to the primal problems.
    \item Optimize, discretize, then sparsify. When the dimension of \(\Omega\) is high, we focus on the regression problem \eqref{intro_pb:NN_reg}. We apply SGD to the overparameterized version of \eqref{intro_pb:NN_reg}.  This method is explained as the discretization of the gradient flow associated with \eqref{pb:NN_reg_rel} in Remark \ref{rem:gradient flow}. We then employ a sparsification algorithm (see Algorithm \ref{alg1}) to refine the solution obtained by SGD, producing a solution of \eqref{intro_pb:NN_reg} with less than $N$ activated neurons.
\end{itemize}

\subsection{The simplex algorithm in the low-dimensional case}\label{sec:discretization}
Let us consider a collection of $M \in \mathbb{N}{+}$ ($M\geq N$) distinct points within the domain $\Omega$, denoted by $\Omega_{M} = \{(a_j,b_j) \in \Omega\}_{j=1}^M$. The set $\mathcal{M}(\Omega_M)$ can be viewed as an $M$-dimensional discretization of the infinite-dimensional space $\mathcal{M}(\Omega)$. 

We define the $N\times M$-matrix $A$ as follows:
\begin{equation}\label{eq:A}
    A_{ij}  = \sigma(\langle a_j , x_i\rangle+ b_j) ,\quad \text{for } i=1,\ldots, N, \text{ and }j=1,\ldots,M.
\end{equation}

The associated discretized problem of \eqref{pb:NN_epsilon_rel} is expressed as follows:
\begin{equation}\label{pb:NN_M_eq}\tag{\text{PD$_\epsilon$}}
     \inf_{\omega \in \R^{M}} \|\omega\|_{\ell^1}, \qquad \text{s.t. }  \left\|A \, \omega - Y\right\|_{\ell^{\infty}} \leq \epsilon.
\end{equation}
On the other hand, since $\ell(\cdot)$ is fixed to be the absolute value function, the discretized version of \eqref{pb:NN_reg_rel} is as follows:
\begin{equation}\label{pb:NN_reg_M_eq}\tag{PD$^{\text{reg}}_{\lambda}$}
     \inf_{\omega \in \R^{M}} \|\omega\|_{\ell^1} + \frac{\lambda}{N} \| A\, \omega -Y \|_{\ell^1}.
\end{equation}

The existence of solutions of problem \eqref{pb:NN_reg_M_eq} is ensured by the coercivity and continuity of the objective function.

We provide a sufficient condition for the existence of solutions of problem \eqref{pb:NN_M_eq} in the following lemma. 
Let \( h(\Omega_M) \) denote the Hausdorff distance between \( \Omega \) and \( \Omega_M \):
\begin{equation}\label{eq:h}
   h(\Omega_M) \coloneqq  d_{\text{Hausdorff}}(\Omega,\Omega_M) = \max\left\{ \sup_{\theta_1 \in \Omega} \inf_{\theta_2 \in \Omega_M} \|\theta_1-\theta_2\| ,\; \sup_{\theta_2 \in \Omega_M} \inf_{\theta_1 \in \Omega} \|\theta_1-\theta_2\|  \right\}.
\end{equation}

\begin{lem}\label{lm:non-empty}
    Let Assumption \ref{ass:sigma} hold true. Assume that $\sigma$ is $L$-Lipschitz. 
Let  
    \begin{equation}\label{eq:V_0_D_x}
        V_0 \coloneqq \textnormal{val}\eqref{pb:NN_exact_rel}, \quad D_X \coloneqq \min_{x_i \in X } \sqrt{\|x_i\|^2+1}.
    \end{equation}
    If 
    \begin{equation}\label{Mecondition}
    h(\Omega_M)\leq \epsilon (V_0 L D_X)^{-1},     \end{equation}  then problem \eqref{pb:NN_M_eq} has solutions.
\end{lem}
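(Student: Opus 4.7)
Since the objective $\|\omega\|_{\ell^1}$ is continuous and coercive on $\mathbb{R}^M$, it suffices to produce a single feasible point, i.e., an $\omega' \in \mathbb{R}^M$ satisfying $\|A\omega' - Y\|_{\ell^\infty} \leq \epsilon$; lower semicontinuity of $\|\cdot\|_{\ell^1}$ and coercivity then give a minimizer via the direct method. So the task reduces to an approximate feasibility result.

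The approach is to transfer a solution of the continuous relaxation \eqref{pb:NN_exact_rel} onto the discrete grid $\Omega_M$ and control the incurred error by the Lipschitz constant $L$ and the Hausdorff distance $h(\Omega_M)$. Concretely, I would invoke Theorem \ref{thm:NN_exists_0} (with $P \geq N$ automatic here since we work with the relaxation) to obtain an extreme solution $\mu^{*} = \sum_{j=1}^{N} \omega_j^{*} \delta_{(a_j^{*}, b_j^{*})}$ of \eqref{pb:NN_exact_rel}, so that $\sum_{j=1}^{N} |\omega_j^{*}| = V_0$ and $\phi\mu^{*} = Y$. For each $j$, by the definition of $h(\Omega_M)$ in \eqref{eq:h}, pick $(a_j', b_j') \in \Omega_M$ with $\|(a_j', b_j') - (a_j^{*}, b_j^{*})\| \leq h(\Omega_M)$; let $k(j) \in \{1,\dots,M\}$ be the corresponding index and define $\omega' \in \mathbb{R}^M$ by $\omega'_k = \sum_{j : k(j) = k} \omega_j^{*}$ and $0$ otherwise (the possible collapse of indices is harmless because the bound below involves only $|\omega_j^{*}|$ and is preserved under the triangle inequality).

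The key estimate is then, for each $i = 1, \dots, N$,
\begin{equation*}
|(A\omega')_i - y_i|
= \Bigl|\sum_{j=1}^{N} \omega_j^{*} \bigl[\sigma(\langle a_j', x_i\rangle + b_j') - \sigma(\langle a_j^{*}, x_i\rangle + b_j^{*})\bigr]\Bigr|,
\end{equation*}
which, by the $L$-Lipschitz property of $\sigma$ and Cauchy--Schwarz applied to the augmented vector $(x_i, 1)$, is bounded by
\begin{equation*}
\sum_{j=1}^{N} |\omega_j^{*}| \, L \, \sqrt{\|x_i\|^{2}+1} \, \|(a_j', b_j') - (a_j^{*}, b_j^{*})\|
\; \leq \; V_0 \, L \, \sqrt{\|x_i\|^{2}+1} \, h(\Omega_M).
\end{equation*}
Taking the infimum over $i$ (respectively, the appropriate combination with the definition $D_X$ in \eqref{eq:V_0_D_x}) and invoking the hypothesis \eqref{Mecondition} yields the desired feasibility bound $\|A\omega' - Y\|_{\ell^\infty} \leq \epsilon$.

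\textbf{Expected obstacle.} The straightforward estimate above produces a factor $\max_i \sqrt{\|x_i\|^{2}+1}$ rather than $D_X = \min_i \sqrt{\|x_i\|^{2}+1}$, so the main subtlety is reconciling this with the stated constant. I would suspect either a scaling/normalization convention used implicitly in the paper (for instance, exploiting the freedom in the choice of the solution $\mu^{*}$ whose support one may concentrate near well-conditioned samples) or a sharper bound using the structure of $\phi^{*}$ and duality, so I would re-derive the estimate tracking constants carefully, and fall back to the plainer $\max_i$ bound if that is what is actually needed. All remaining steps (existence of the extreme solution, the Hausdorff projection, coercivity of $\|\cdot\|_{\ell^1}$) are routine.
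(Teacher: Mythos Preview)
Your approach is essentially the paper's: both construct a feasible point for \eqref{pb:NN_M_eq} by taking an extreme (finitely supported) solution of a continuous relaxation, projecting its support onto $\Omega_M$, and bounding the perturbation via the $L$-Lipschitz estimate and Cauchy--Schwarz with the augmented vector $(x_i,1)$. The only cosmetic difference is that the paper starts from the intermediate problem \eqref{pb:NN_epsilon_rel} with tolerance $\epsilon - V_0 L D_X h(\Omega_M)\ge 0$ and uses the triangle inequality, whereas you start directly from \eqref{pb:NN_exact_rel}; since $\phi\mu^{*}=Y$ exactly in your version, your route is a shade shorter and gives the same bound.

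On the obstacle you flag: you are right that the Lipschitz/Cauchy--Schwarz step produces $\sqrt{\|x_i\|^2+1}$ for each $i$, hence the $\ell^\infty$ bound naturally carries $\max_i\sqrt{\|x_i\|^2+1}$, not the minimum. The paper's own proof writes the same estimate with $D_X$ in that slot, so the definition $D_X=\min_i\sqrt{\|x_i\|^2+1}$ in \eqref{eq:V_0_D_x} appears to be a typo for $\max_i$; there is no hidden trick recovering the minimum. Proceed with $\max_i$ and the argument is complete.
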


\begin{proof}
The proof is presented in Section \ref{sec:proof}.
\end{proof}

\begin{rem}
    From Lemma \ref{lm:non-empty}, condition \eqref{Mecondition} between the inexact tolerance $\epsilon$ and the discretization precision $h(\Omega_M)$  ensures the existence of a solution for \eqref{pb:NN_M_eq}.

In other words, if we first fix the discrete set $\Omega_M$, then existence is guaranteed for $\epsilon \geq V_0 L D_X h(\Omega_M)$. Conversely, if we choose a small $\epsilon$, we need a finer discretization $\Omega_M$.
\end{rem}

The following theorem demonstrates the convergence of the discretized problems \eqref{pb:NN_M_eq} and \eqref{pb:NN_reg_M_eq} to their original counterparts. Recall the definition of $h(\Omega_M)$ from \eqref{eq:h}.

\begin{thm}\label{thm:discrete}
   Let Assumption \ref{ass:sigma} hold true. Assume that $\sigma$ is $L$-Lipschitz. Define $V_0$ and $D_X$ as in \eqref{eq:V_0_D_x}. Then, we have
    \begin{equation}\label{eq:regression_dis}
    \textnormal{val}\eqref{pb:NN_reg_rel} \leq \textnormal{val}\eqref{pb:NN_reg_M_eq} \leq \textnormal{val}\eqref{pb:NN_reg_rel} \, ( 1+ \lambda L D_X \, h (\Omega_M) ).
\end{equation}
Furthermore, if \eqref{Mecondition} holds,  then we have
\begin{equation}\label{eq:epsilon_dis}
    \textnormal{val}\eqref{pb:NN_epsilon_rel} \leq \textnormal{val}\eqref{pb:NN_M_eq} \leq \textnormal{val}\eqref{pb:NN_epsilon_rel} +  c_0 V_0 L D_X \, h (\Omega_M),
\end{equation}
where $c_0$ defined in \eqref{c0} is independent of $\Omega_M$ and $\epsilon$.
\end{thm}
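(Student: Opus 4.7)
The lower bounds $\text{val}\eqref{pb:NN_reg_rel}\le\text{val}\eqref{pb:NN_reg_M_eq}$ and $\text{val}\eqref{pb:NN_epsilon_rel}\le\text{val}\eqref{pb:NN_M_eq}$ are immediate from $\mathcal{M}(\Omega_M)\subset\mathcal{M}(\Omega)$: the discretized problems are restrictions of their continuous relaxations to measures supported on $\Omega_M$, hence have larger value. For the upper bounds, the strategy is to discretize an extreme-point optimizer of the relaxation atom-by-atom. By Theorem~\ref{thm:NN_exists_0}, take $\mu^* = \sum_{j=1}^N \omega_j^*\delta_{(a_j^*,b_j^*)}$ optimal in $\mathcal{M}_N(\Omega)$ and set $\tilde\mu = \sum_{j=1}^N \omega_j^*\delta_{(\tilde a_j,\tilde b_j)}$, where $(\tilde a_j,\tilde b_j)\in\Omega_M$ is a nearest point to $(a_j^*,b_j^*)$, so that $\|(a_j^*,b_j^*)-(\tilde a_j,\tilde b_j)\|\le h(\Omega_M)$. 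The $L$-Lipschitz continuity of $\sigma$ combined with $|\langle a-\tilde a,x_i\rangle + (b-\tilde b)|\le\|(a-\tilde a,b-\tilde b)\|\sqrt{\|x_i\|^2+1}$ yields the workhorse perturbation estimate
\[
|\phi_i\tilde\mu - \phi_i\mu^*| \;\le\; L\, h(\Omega_M)\, \|\mu^*\|_{\text{TV}}\, \sqrt{\|x_i\|^2+1},
\]
uniformly bounded in $i$ by $L\,h(\Omega_M)\,\|\mu^*\|_{\text{TV}}\,D_X$ with $D_X$ as in \eqref{eq:V_0_D_x}.

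For the regression bound \eqref{eq:regression_dis}, the measure $\tilde\mu$ is automatically admissible in \eqref{pb:NN_reg_M_eq}. Substituting it into the objective, applying the triangle inequality for $\ell(\cdot)=|\cdot|$, and using $\|\mu^*\|_{\text{TV}}\le\text{val}\eqref{pb:NN_reg_rel}$ (since $\ell\ge 0$) directly yields the claimed multiplicative bound.

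For the $\epsilon$-bound \eqref{eq:epsilon_dis}, the direct discretization $\tilde\mu$ may violate $\|\phi\tilde\mu-Y\|_{\ell^\infty}\le\epsilon$ by as much as $L V_0 D_X h(\Omega_M)$. I would therefore start from an optimizer $\mu^*_{\epsilon'}$ of the tightened relaxation with $\epsilon' = \epsilon - L V_0 D_X h(\Omega_M)\ge 0$, nonnegative thanks to \eqref{Mecondition}; the perturbation estimate then makes $\tilde\mu$ feasible for \eqref{pb:NN_M_eq}, so $\text{val}\eqref{pb:NN_M_eq}\le \|\tilde\mu\|_{\text{TV}}\le \text{val}(\mathrm{PR}_{\epsilon'})$. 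It then suffices to show that tightening $\epsilon$ by $\delta = L V_0 D_X h(\Omega_M)$ costs at most $c_0\delta$ in value, i.e.\ $\text{val}(\mathrm{PR}_{\epsilon'})-\text{val}\eqref{pb:NN_epsilon_rel}\le c_0(\epsilon-\epsilon')$.

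This Lipschitz-in-$\epsilon$ estimate is where I would invoke strong duality (Appendix~\ref{app:dual}): the map $\epsilon\mapsto\text{val}(\mathrm{PR}_\epsilon)$ is concave and non-increasing, with subgradient $-\|p^*_\epsilon\|_{\ell^1}$ for any dual optimizer $p^*_\epsilon$. Testing the $\mathrm{PR}_\epsilon$-dual against the $\mathrm{PR}_0$-dual optimizer of minimal $\ell^1$-norm $c_0$ yields $\text{val}\eqref{pb:NN_epsilon_rel}\ge V_0-\epsilon c_0$, while $\langle Y,p\rangle\le V_0$ for any dual-feasible $p$ (a consequence of \eqref{c0}) gives $\text{val}\eqref{pb:NN_epsilon_rel}\le V_0-\epsilon\|p^*_\epsilon\|_{\ell^1}$; combining these forces $\|p^*_\epsilon\|_{\ell^1}\le c_0$ uniformly in $\epsilon$, providing the needed Lipschitz constant. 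The main obstacle is exactly this uniform bound $\|p^*_\epsilon\|_{\ell^1}\le c_0$: a purely primal discretization or a naive scaling of a discrete-dual optimizer would produce a multiplicative factor depending on an $\ell^1$-norm that is not a priori controllable by $c_0$, so the dual reduction sketched above is essential.
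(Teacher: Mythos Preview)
Your approach is essentially the same as the paper's: for \eqref{eq:regression_dis} both of you discretize an atomic optimizer of \eqref{pb:NN_reg_rel} via nearest points in $\Omega_M$ and apply the Lipschitz perturbation estimate; for \eqref{eq:epsilon_dis} both of you pass through the tightened problem at $\epsilon'=\epsilon-V_0LD_Xh(\Omega_M)$ (the paper packages this step as Lemma~\ref{lm:non-empty}) and then control $\text{val}(\mathrm{PR}_{\epsilon'})-\text{val}\eqref{pb:NN_epsilon_rel}$ by duality. One slip: the map $\epsilon\mapsto\text{val}\eqref{pb:NN_epsilon_rel}=\sup_p[\langle Y,p\rangle-\epsilon\|p\|_{\ell^1}]$ is \emph{convex}, not concave, being a supremum of affine functions; your sandwich argument $V_0-\epsilon c_0\le \text{val}\eqref{pb:NN_epsilon_rel}\le V_0-\epsilon\|p^*_\epsilon\|_{\ell^1}$ is nonetheless correct and yields $\|p^*_\epsilon\|_{\ell^1}\le c_0$ independently of this mislabel. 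The paper obtains the same bound by invoking convexity of $V$ together with Danskin's theorem (Lemma~\ref{lem:danskin}), which gives $V'(0^+)=-c_0$ and hence every subgradient at $\epsilon'>0$ lies in $[-c_0,0]$; your direct dual computation is an equivalent, slightly more hands-on route to the same Lipschitz constant.
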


\begin{proof}
    The proof is presented in Section \ref{sec:proof}.
\end{proof}

\begin{rem}
    In Theorem \ref{thm:discrete}, we demonstrate that for any fixed $\lambda, \, \epsilon > 0$, as $ h (\Omega_M) $ approaches $0$, the values of the discretized problems converge to those of their continuous counterparts. The convergence rates are given in \eqref{eq:regression_dis} and \eqref{eq:epsilon_dis}. We do not examine convergence in terms of solution distance, as the continuous problem solutions are non-unique. However, the convergence results in the value sense can be directly applied to \eqref{eq:generalization_epsilon} and \eqref{eq:generalization_lambda}, thereby providing generalization bounds of the solutions of the discretized problems.
\end{rem}

\begin{rem}[Dimension for discretization]\label{rem:memory}
Let us consider the efficiency and limitations of this discretization approach. Assume we have a dataset with \(N = 10^3\) data points and that \(\Omega\) is the unit hypercube in \(\mathbb{R}^{d+1}\). Let \(\Omega_M\) denote the uniform finite-difference mesh of \(\Omega\). The convergence results in Theorem \ref{thm:discrete} hold as the discretization step size tends to zero. If we choose a step size of 0.1, then the number of discretization points is \(M = 10^{d+1}\). Consequently, the total number of elements in the matrix \(A\) is \(10^{d+4}\). Given this, we can perform this discretization up to \(d+1 = 6\) on a computer with 16 GB of RAM without exceeding the available memory.
\end{rem}

For the remainder of this subsection, we consider the loss function \(\ell(\cdot) = |\cdot|\).
 Following \cite[Sec.\@ 3.1]{chen2001atomic}, the discretized problem \eqref{pb:NN_M_eq} can be re-written as the following linear program:
    \begin{equation}\label{pb:lp}
        \inf_{u^{+},u^{-} \in \R^{M}_{+}}\, \langle \mathbf{1}_M, u^{+} + u^{-}\rangle ,\quad \text{s.t. } -\epsilon  \mathbf{1}_N\leq A (u^{+}-u^{-}) - Y \leq \epsilon \mathbf{1}_N,
    \end{equation}
    where $ \mathbf{1}_M\in \R^M$ and $ \mathbf{1}_N\in \R^N$ are the vectors, with each coordinate taking the value $1$. Similarly, we provide the following  equivalent linear program to \eqref{pb:NN_reg_M_eq}:
\begin{equation}\label{pb:lp2}
    \inf_{\substack{v^{+},v^{-} \in \R^{M}_{+} \\ z^{+},z^{-} \in \R^{N}_{+}}}\, \langle \mathbf{1}_M, v^+ + v^- \rangle + \frac{\lambda}{N}\langle \mathbf{1}_N, z^+ + z^- \rangle  ,\quad \text{s.t. }  A(v^{+}-v^-) - (z^+-z^-)  = Y.
\end{equation}
 Linear programming problems \eqref{pb:lp} and \eqref{pb:lp2} can be solved efficiently using the simplex algorithm or the interior-point method (see \cite[Sec.\@ 3, Sec.\@ 9]{bertsimas1997introduction} for instance). 
 The equivalence between the discretized problems and the previous linear programming problems is analyzed in the next lemma. Recall that a basic feasible solution of a linear programming problem is defined as a point that is both a minimizer and an extreme point of the feasible set. 
   \begin{lem}\label{lm:lp}
   The following holds:
   \begin{enumerate}
       \item  If $(u^+,u^-)\in \R_{+}^{2M}$ is a solution of \eqref{pb:lp}, then $u^+-u^-$ is a solution of \eqref{pb:NN_M_eq}.
       \item If $(v^+,v^-,z^+,z^-)\in \R_{+}^{2(M+N)}$ is a solution of \eqref{pb:lp2}, then $v^+-v^-$ is a solution of \eqref{pb:NN_reg_M_eq}.
   \end{enumerate}
   Moreover, if $(u^+,u^-)$ and $(v^+,v^-,z^+,z^-)$ are basic feasible solutions, then
   \begin{equation}\label{eq:extreme_lp}
       \|u^{+}-u^-\|_{\ell^0} \leq N, \quad \|v^{+}- v^-\|_{\ell^0} \leq N.
   \end{equation}
   \end{lem}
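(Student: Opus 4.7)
The plan is to exhibit a cost-preserving correspondence between each LP and its primal counterpart, using the positive/negative part decomposition. For part (1), given any feasible \(\omega\in\R^M\) of \eqref{pb:NN_M_eq}, I would lift it to \(u^\pm=(\pm\omega)_+\); then \(u^+-u^-=\omega\) and \(u^+_j+u^-_j=|\omega_j|\) coordinate-wise, so \((u^+,u^-)\) is feasible for \eqref{pb:lp} and \(\langle\mathbf{1}_M,u^++u^-\rangle=\|\omega\|_{\ell^1}\), yielding \(\textnormal{val}\eqref{pb:lp}\le\textnormal{val}\eqref{pb:NN_M_eq}\). Conversely, for any feasible \((u^+,u^-)\) of \eqref{pb:lp}, the vector \(\omega:=u^+-u^-\) satisfies \(A\omega-Y\in[-\epsilon,\epsilon]^N\), so it is feasible for \eqref{pb:NN_M_eq}, and the triangle inequality gives \(\|\omega\|_{\ell^1}\le\|u^+\|_{\ell^1}+\|u^-\|_{\ell^1}=\langle\mathbf{1}_M,u^++u^-\rangle\). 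These two bounds together show equality of values and transfer optimality to \(\omega=u^+-u^-\). Part (2) follows by the same recipe: setting \(v^\pm=(\pm\omega)_+\) and \(z^\pm=(\pm(A\omega-Y))_+\) realizes any \(\omega\) as a feasible point of \eqref{pb:lp2} of equal cost, while from any feasible quadruple \((v^+,v^-,z^+,z^-)\) the choice \(\omega=v^+-v^-\) gives \(A\omega-Y=z^+-z^-\), and the triangle inequality applied once to \(\omega\) and once to the residual yields the reverse bound.

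\textbf{Sparsity at basic feasible solutions.} To derive \eqref{eq:extreme_lp}, I would invoke the classical fact that a basic feasible solution of a linear program in standard form \(\min c^{\top} x,\ A'x=b,\ x\ge 0\) has at most \(\mathrm{rank}(A')\) strictly positive coordinates. Problem \eqref{pb:lp2} is already in this form with \(N\) linearly independent equality constraints on \(\R_+^{2(M+N)}\); hence \(\|v^+\|_{\ell^0}+\|v^-\|_{\ell^0}+\|z^+\|_{\ell^0}+\|z^-\|_{\ell^0}\le N\), and in particular \(\|v^+-v^-\|_{\ell^0}\le\|v^+\|_{\ell^0}+\|v^-\|_{\ell^0}\le N\). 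For \eqref{pb:lp} I would introduce slacks \(s^\pm\in\R_+^N\) to obtain the standard form
\begin{equation*}
A(u^+-u^-)+s^+=Y+\epsilon\mathbf{1}_N,\qquad -A(u^+-u^-)+s^-=-Y+\epsilon\mathbf{1}_N,
\end{equation*}
with \(2N\) linearly independent equality constraints on \(\R_+^{2M+2N}\). Summing the two blocks reveals the identity \(s^+_i+s^-_i=2\epsilon\) for each \(i\). When \(\epsilon>0\), this forces at least \(N\) of the \(2N\) slack coordinates to be strictly positive, leaving at most \(2N-N=N\) positive coordinates for \((u^+,u^-)\); when \(\epsilon=0\) the slacks vanish identically and the system reduces to \(N\) equality constraints on \((u^+,u^-)\in\R_+^{2M}\), giving the same bound. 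In both cases \(\|u^+-u^-\|_{\ell^0}\le\|u^+\|_{\ell^0}+\|u^-\|_{\ell^0}\le N\).

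\textbf{Main obstacle.} The only delicate step is the slack-variable bookkeeping in the case \(\epsilon>0\) for \eqref{pb:lp}: one must notice the hidden identity \(s^++s^-=2\epsilon\mathbf{1}_N\) and use it to ``charge'' \(N\) of the \(2N\) allowed nonzero slots to the slack block, reserving the remaining budget of \(N\) for \((u^+,u^-)\). Everything else reduces either to the triangle inequality on positive/negative parts or to the standard-form characterization of basic feasible solutions in linear programming.
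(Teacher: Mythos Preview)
Your proof is correct. Parts (1) and (2) are an explicit version of what the paper dismisses in one line; for the sparsity bound \eqref{eq:extreme_lp}, however, you take a genuinely different route from the paper.

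The paper argues directly in the original variables: it first uses \emph{optimality} to force \(u^{+}=(\omega)_{+}\), \(u^{-}=(\omega)_{-}\) (otherwise subtracting \(\min\{u^{+}_j,u^{-}_j\}\) from both strictly improves the cost), and then, assuming \(\|\omega\|_{\ell^0}>N\), produces a nonzero \(\eta\) in the kernel of \(A\) supported on \(\mathrm{supp}(\omega)\) with \(|\eta_j|\le|\omega_j|\). Because \(\omega\pm\eta\) share the sign pattern of \(\omega\), the positive/negative parts average correctly and \((u^{+},u^{-})\) becomes a nontrivial midpoint of two feasible points, contradicting extremality. This kernel construction is exactly the mechanism behind the sparsification Algorithm~\ref{alg1}, so the paper's proof doubles as a warm-up for that algorithm.

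Your approach instead passes to standard form and counts ranks: for \eqref{pb:lp2} the \(N\) equality constraints give the bound immediately, while for \eqref{pb:lp} you add slacks, obtain \(2N\) constraints, and exploit the hidden identity \(s^{+}+s^{-}=2\epsilon\mathbf{1}_N\) to reserve at least \(N\) of the \(2N\) allowed nonzeros for the slack block. This is cleaner for \eqref{pb:lp2} and avoids using optimality altogether (you only need extremality), but it is less self-contained for \eqref{pb:lp}, where the slack bookkeeping and the separate treatment of \(\epsilon=0\) are exactly the ``delicate step'' you flag. The paper's argument handles both problems uniformly and ties into the later algorithmic content; yours stays closer to textbook LP theory.
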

 \begin{proof}
 The proof is presented in Section \ref{sec:proof}.
 \end{proof}
 
 Combining Lemma \ref{lm:lp} and the convergence of the simplex algorithm \cite[Thm.\@ 3.3]{bertsimas1997introduction}, we obtain the following result, which implies that  approximate solutions to the primal problems can be found by solving \eqref{pb:lp} and \eqref{pb:lp2}.

 \begin{thm}[Convergence of the simplex algorithm]\label{thm:simplex}
     Under the setting of Theorem \ref{thm:discrete}, let \((u^+, u^-)\) and \((v^+, v^-, z^+, z^-)\) be solutions of \eqref{pb:lp} and \eqref{pb:lp2} obtained from the simplex algorithm, respectively. Define \(\omega^{1} = u^+ - u^-\) and \(\omega^{2} = v^+ - v^-\).
     Let $I^1 $ and $I^2$ be two vectors collecting the index $j$ such that \(\omega^{1}_j \neq 0\) and \(\omega^{2}_j \neq 0\), respectively.
      Then, the following holds:
     \begin{itemize}
         \item The point $(\omega^*_j,a_j^*,b_j^*)_{j=1}^N$ is an $\mathcal{O}(h(\Omega_M))$-optimal solution of the primal problem \eqref{intro_pb:NN_epsilon}, where
         \begin{equation*}
            (\omega^*_j,a_j^*,b_j^*) = \begin{cases}
               (\omega^{1}_{I^{1}_j}, a_{I^{1}_j} , b_{I^{1}_j}) ,\quad & \text{if } j\leq \|\omega^{1}\|_{\ell^0},\\
                (0,0,0), & \text{otherwise}.
            \end{cases}
         \end{equation*}
         \item The point $(\omega^r_j,a_j^r,b_j^r)_{j=1}^N$ is an $\mathcal{O}(h(\Omega_M))$-optimal solution of the primal problem \eqref{intro_pb:NN_reg}, where
         \begin{equation*}
            (\omega^r_j,a_j^r,b_j^r) = \begin{cases}
               (\omega^{2}_{I^{2}_j}, a_{I^{2}_j} , b_{I^{2}_j}) ,\quad & \text{if } j\leq \|\omega^{2}\|_{\ell^0},\\
                (0,0,0), & \text{otherwise}.
            \end{cases}
         \end{equation*}
     \end{itemize}
 \end{thm}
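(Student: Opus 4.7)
\textbf{Proof proposal for Theorem \ref{thm:simplex}.} The plan is to translate basic feasible solutions returned by the simplex algorithm, applied to \eqref{pb:lp} and \eqref{pb:lp2}, into candidate parameters for the primal problems \eqref{intro_pb:NN_epsilon} and \eqref{intro_pb:NN_reg}, and then combine the sparsity bound of Lemma \ref{lm:lp} with the discretization estimates of Theorem \ref{thm:discrete} and the zero relaxation gap of Theorem \ref{thm:NN_exists_0} to conclude $\mathcal{O}(h(\Omega_M))$-optimality. The key structural fact is that the simplex method returns a basic feasible solution; hence by Lemma \ref{lm:lp}, both $\omega^{1} = u^+ - u^-$ and $\omega^{2} = v^+ - v^-$ are solutions of the discretized problems \eqref{pb:NN_M_eq} and \eqref{pb:NN_reg_M_eq} respectively, and moreover $\|\omega^{1}\|_{\ell^0},\, \|\omega^{2}\|_{\ell^0} \leq N$. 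This is what allows us to repackage them as parameters $(\omega_j^\ast, a_j^\ast, b_j^\ast)_{j=1}^N$ and $(\omega_j^r, a_j^r, b_j^r)_{j=1}^N$ for shallow NNs with $P = N$ neurons, by picking exactly the indices in $I^1$ and $I^2$ and padding by zeros.

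First, I would verify feasibility for the constrained case. For the constructed parameters we have
\begin{equation*}
f_{\text{shallow}}(x_i, \Theta^\ast) = \sum_{j=1}^N \omega^\ast_j \sigma(\langle a_j^\ast, x_i\rangle + b_j^\ast) = \sum_{j \in I^1} \omega^{1}_j \sigma(\langle a_j, x_i\rangle + b_j) = (A\omega^{1})_i,
\end{equation*}
so the constraint $|f_{\text{shallow}}(x_i,\Theta^\ast) - y_i| \leq \epsilon$ reduces to $\|A\omega^{1} - Y\|_{\ell^\infty} \leq \epsilon$, which is the feasibility of $\omega^{1}$ in \eqref{pb:NN_M_eq}. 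The regression problem is unconstrained, so no feasibility issue arises there.

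Next, I would bound the objective gap. By construction, the objective value of \eqref{intro_pb:NN_epsilon} at $(\omega_j^\ast,a_j^\ast,b_j^\ast)_{j=1}^N$ equals $\sum_{j=1}^N |\omega_j^\ast| = \|\omega^{1}\|_{\ell^1} = \textnormal{val}\eqref{pb:NN_M_eq}$, and similarly the objective of \eqref{intro_pb:NN_reg} at $(\omega_j^r,a_j^r,b_j^r)_{j=1}^N$ equals
\begin{equation*}
\|\omega^{2}\|_{\ell^1} + \frac{\lambda}{N}\|A\omega^{2} - Y\|_{\ell^1} = \textnormal{val}\eqref{pb:NN_reg_M_eq}.
\end{equation*}
Then the estimates \eqref{eq:epsilon_dis} and \eqref{eq:regression_dis} of Theorem \ref{thm:discrete}, together with the identities $\textnormal{val}\eqref{pb:NN_epsilon_rel} = \textnormal{val}\eqref{intro_pb:NN_epsilon}$ and $\textnormal{val}\eqref{pb:NN_reg_rel} = \textnormal{val}\eqref{intro_pb:NN_reg}$ provided by Theorem \ref{thm:NN_exists_0}, give
\begin{equation*}
|\textnormal{val}\eqref{pb:NN_M_eq} - \textnormal{val}\eqref{intro_pb:NN_epsilon}| \leq c_0 V_0 L D_X \, h(\Omega_M), \qquad |\textnormal{val}\eqref{pb:NN_reg_M_eq} - \textnormal{val}\eqref{intro_pb:NN_reg}| \leq \textnormal{val}\eqref{pb:NN_reg_rel}\,\lambda L D_X\, h(\Omega_M),
\end{equation*}
which is the required $\mathcal{O}(h(\Omega_M))$-optimality, with constants independent of $h(\Omega_M)$.

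I do not expect a serious obstacle: everything amounts to carefully unpacking the sparsity guarantee of the simplex output via Lemma \ref{lm:lp} and then chaining the discretization bound with the representer-type equality $\textnormal{val}(\text{primal}) = \textnormal{val}(\text{relaxed})$. The only subtle point is ensuring that the index set $I^1$ (resp.\ $I^2$) has at most $N$ elements so that the repackaging into $P = N$ neurons is well defined, but this is exactly the content of \eqref{eq:extreme_lp} in Lemma \ref{lm:lp}, combined with the fact that the simplex algorithm terminates at a basic feasible solution (see \cite[Thm.\@ 3.3]{bertsimas1997introduction}).
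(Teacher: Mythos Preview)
Your proposal is correct and follows essentially the same approach as the paper's proof: invoke \cite[Thm.\@ 3.3]{bertsimas1997introduction} for the simplex output being a basic feasible solution, apply Lemma \ref{lm:lp} for the sparsity bound $\|\omega^1\|_{\ell^0},\|\omega^2\|_{\ell^0}\le N$, and then chain Theorem \ref{thm:discrete} with Theorem \ref{thm:NN_exists_0}. Your version is simply more explicit about the feasibility check and the objective-value computation, which the paper leaves implicit.
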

 \begin{proof}
     The simplex algorithm terminates after a finite number of iterations and provides a basic feasible solution, as established in \cite[Thm.\@ 3.3]{bertsimas1997introduction}. By Lemma \ref{lm:lp}, $\omega^1$ and $\omega^2$ are solutions of \eqref{pb:NN_M_eq} and \eqref{pb:NN_reg_M_eq}, respectively. Moreover, $\|\omega^1\|_{\ell^0}$ and $\|\omega^2\|_{\ell^0}$ are both bounded by $N$. Then, the conclusions follow from Theorem \ref{thm:discrete} and the equivalence between the primal and relaxed problems as established in Theorem \ref{thm:NN_exists_0}.
 \end{proof}

\subsection{SGD and  sparsification in  high-dimensions}
The discretization approach presented above is challenged by the curse of dimensionality, as mentioned in Remark \ref{rem:memory}. Therefore, in high-dimensions, their equivalent linear programming problems are intractable. In this context, we propose to apply the SGD  method directly to the overparameterized primal regression problem \eqref{intro_pb:NN_reg}, where \( P > N \). This SGD can be viewed as a discretization of the gradient flow associated with the relaxed problem \eqref{pb:NN_reg_rel}, as mentioned in the following remark.

\begin{rem}[Gradient flow and SGD]\label{rem:gradient flow}
To derive the gradient flow, let us first rewrite \eqref{pb:NN_reg_rel} in an equivalent form within the probability space. Let $\theta = (\omega, a, b)$, then the problem becomes:
\begin{equation}\label{pb:gradient_flow}
    \inf_{\nu \in \mathcal{P}(\mathbb{R} \times \Omega)} F(\nu), \quad \text{where } 
    F(\nu) = \int_{\mathbb{R} \times \Omega} |\omega| \, d\nu + \frac{\lambda}{N} \sum_{i=1}^N \ell\left(\int_{\mathbb{R} \times \Omega} \omega \sigma(\langle a, x_i \rangle + b) \, d\nu(\theta) - y_i\right).
\end{equation}
The gradient flow associated with \eqref{pb:gradient_flow} is governed by the following degenerate parabolic equation:
\begin{equation}\label{eq:gradient_flow}
    \partial_t m(t, \theta) - \text{div}_{\theta}\left(m(t, \theta) \, \nabla_{\theta} \frac{\delta F}{\delta \nu}(m(t))(\theta) \right) = 0, \quad m(0,\cdot)=m_0,
\end{equation}
where $\frac{\delta F}{\delta \nu}$ is the first variation of $F$, given by:
\begin{equation*}
    \frac{\delta F}{\delta \nu}(m(t))(\theta) = |\omega| + \frac{\lambda}{N} \sum_{i=1}^N \ell'\left(\int_{\mathbb{R} \times \Omega} \tilde{\omega} \sigma(\langle \tilde{a}, x_i \rangle + \tilde{b}) \, dm(t, \tilde{\theta}) - y_i\right) \omega \sigma(\langle a, x_i \rangle + b).
\end{equation*}
From \cite[Thm.\@ 3.5]{chizat2018global}, under the assumption that $\text{supp}(m_0)$ is sufficiently dense (i.e., $\{0\} \times \Omega \subseteq \operatorname{supp}(m_0)$) and that $m(t)$ converges to some distribution $m_{\infty}$ as $t \to \infty$, it follows that $m_{\infty}$ is a solution of \eqref{pb:gradient_flow}. We then obtain a solution of \eqref{pb:NN_reg_rel} by their equivalence. 
In general, the convergence of \( m(t) \) to \( m_{\infty} \) is challenging to establish. A common approach to address this issue is to add an entropy penalty to \( F \) in \eqref{pb:gradient_flow}. For instance, introducing a relative entropy with respect to the Lebesgue measure results in an additional \( -\Delta m \) on the left-hand side of \eqref{eq:gradient_flow}, transforming it into a non-degenerate parabolic equation, which is more tractable. However, this modification disrupts the Dirac structure of solutions for the relaxed problems: in this case, \( m_{\infty} \) becomes an absolutely continuous measure, deviating from the sum of Dirac measures.

The gradient descent of an overparameterized Shallow NN with $P$ neurons can be interpreted as a discretization of \eqref{eq:gradient_flow} through the following two steps:
\begin{enumerate}
    \item (Discretization in space) Replace $m_0$ with an empirical measure of $P$ points. With this substitution, \eqref{eq:gradient_flow} is equivalent to an ODE system of $P$ particulars.
    \item (Discretization in time) Apply the forward Euler scheme to the previous ODE system, where the learning rate serves as the time step.
\end{enumerate}
The convergence of step (1) has been established in \cite[Thm.\@ 3.5]{chizat2018global} in the asymptotic sense. However, the convergence rate (with respect to $P$) of step (1) and the overall convergence of steps (1)–(2) remain open problems in the literature \cite{fernandez2022continuous} and are beyond the scope of our article. Numerically, we take $P = 2N$ in our experiments.
\end{rem}

We next present a sparsification method (Algorithm \ref{alg1}) that refines the solution obtained from the SGD algorithm, yielding a solution of the primal problem with at most \(N\) activated neurons (\(\|\omega\|_{\ell^0} \leq N\)). This method holds significant practical value for real-world applications, as it can substantially reduce the size of the final trained model.

Before the presentation of Algorithm \ref{alg1}, let us define the following filtering operator, which aims to eliminate zero weights \(\omega_j\) in \(\Theta\) and their corresponding \((a_j,b_j)\):
\begin{equation}\label{eq:F}
    \mathcal{F}\colon \bigcup_{p=1}^P (\mathbb{R} \times \Omega)^p \to \bigcup_{p=1}^P (\mathbb{R} \times \Omega)^p, \quad (\omega_j, a_j, b_j)_{j=1}^{p_1} \mapsto (\bar{\omega}_j, \bar{a}_j, \bar{b}_j)_{j=1}^{p_2},
\end{equation}
where \(1 \leq p_1 \leq P\), \(p_2\) is the number of non-zero elements in \(\omega = (\omega_j)_{j=1}^{p_1}\), \(\bar{\omega}_j\) is the \(j\)-th nonzero element of \(\omega\), and \((\bar{a}_j, \bar{b}_j) \in \Omega\) is the element \((a_{j'}, b_{j'})\) corresponding to \(\bar{\omega}_j\).

\begin{algorithm}[h]
\SetAlgoLined
\textbf{Input}: $\Theta = (\omega_j,a_j,b_j)_{j=1}^P \in (\R\times \Omega)^P$\;
\textbf{Initialization}: $ \Theta^0=(\omega^0,a^0,b^0) = \mathcal{F}(\Theta)$, where $\mathcal{F}$ is defined in \eqref{eq:F}; $A^0\in \R_{N\times p_0} $ with $A^0_{ij}  = \sigma(\langle a^0_j , x_i\rangle+ b^0_j)$ for $i=1,\ldots,N$ and $j=1,\ldots,p_0$, where $p_0$ is the dimension of $\omega^0$\;
\For{$k= 0,1,\ldots$}{

\medskip

\If{$A^k$ is full column rank}
{Return $\Theta^k$;}

\Else{
   Compute $ 0\neq \tilde{\omega} \in \R^{p_k}$ by solving $A^k \,  \tilde{\omega} = 0$\;
   \medskip 
   Compute $\alpha \in \R^{p_k}$ and $\beta\in \R$ by
   \begin{equation}\label{eq:alg_sparse_1}
       \alpha_{j} = 
           \frac{\tilde{\omega}_j}{ \omega_j^k}, \quad \text{for } j=1,\ldots, p_k ;\quad \beta = -\frac{1}{\alpha_{j^{*}}}, \quad \text{where } j^{*} \in \argmax_{j} |\lambda_{j}|;
   \end{equation}

    Compute $\hat{\omega}^{k}$ and $\bar{\omega}^k$ by 
   \begin{equation}\label{eq:alg_sparse_2}
       \hat{\omega}^{k}_j = 
          (1+ \beta \alpha_j) \omega_{j}^k,
         \qquad \bar{\omega}^{k}_j = 
          (1 - \beta \alpha_j) \omega_{j}^k, \quad \text{for }j=1,\ldots, p_k;
   \end{equation}

   Update $\Theta^{k+1}$ by
   \begin{equation}\label{eq:alg_sparse_3}
        \Theta^{k+1}= (\omega^{k+1},a^{k+1},b^{k+1})= \begin{cases}
          \mathcal{F}(\hat{\omega}^k,a^k,b^k) , \; & \text{if } \|\hat{\omega}^k\|_{\ell^1} \leq \| \bar{\omega}^k\|_{\ell^1}, \\
           \mathcal{F}(\bar{\omega}^k,a^k,b^k) & \text{otherwise},
        \end{cases}
   \end{equation}
    update $A^{k+1}$ and $p_{k+1}$ by the rule in the initialization step with $\Theta^{k+1}$.
}

\medskip

}
\caption{Sparsification algorithm}\label{alg1}
\end{algorithm}

\begin{lem}\label{lem:monotone}
    Let $\Theta^k=(\omega^k,a^k,b^k)$ be a sequence generated by Algorithm \ref{alg1} with any input $\Theta\in (\R\times \Omega)^P$. For any $k \geq 0$, it holds that
    \begin{align*}
     f_{\textnormal{shallow}}(x_i, \Theta^k) = f_{\textnormal{shallow}}(x_i, \Theta),\quad \text{for }i=1,\ldots, N; \quad    \| \omega^{k+1}\|_{\ell^1}  \leq  \| \omega^k\|_{\ell^1}.
    \end{align*}
   Additionally, if $\| \omega^{k+1}\|_{\ell^1} =  \| \omega^k\|_{\ell^1} $, then $\| \omega^{k+1}\|_{\ell^0}  \leq  \| \omega^k\|_{\ell^0} -1$.
\end{lem}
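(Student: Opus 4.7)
The plan is to work inductively on the iteration index $k$, exploiting the single governing identity that the update directions $\pm\beta\tilde{\omega}$ lie in $\ker A^k$. First I would unpack the definitions in \eqref{eq:alg_sparse_1}--\eqref{eq:alg_sparse_2}: by construction $\hat{\omega}^k_j = \omega_j^k + \beta\tilde{\omega}_j$ and $\bar{\omega}^k_j = \omega_j^k - \beta\tilde{\omega}_j$, because $\alpha_j\omega_j^k = \tilde{\omega}_j$ (for entries with $\omega_j^k \neq 0$, which is all of them after filtering). Also, by the choice $\beta = -1/\alpha_{j^*}$ with $j^* \in \argmax_j|\alpha_j|$, one has $|\beta\alpha_j|\le 1$ for every $j$, with equality exactly at $j^*$, where moreover $\hat{\omega}^k_{j^*} = 0$.

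For the preservation property $f_{\textnormal{shallow}}(x_i,\Theta^k)=f_{\textnormal{shallow}}(x_i,\Theta)$, I would proceed by induction on $k$. The base case $k=0$ is immediate: the filter $\mathcal{F}$ removes only weights $\omega_j=0$, whose neurons contribute zero to the sum defining $f_{\textnormal{shallow}}$, so $f_{\textnormal{shallow}}(x_i,\Theta^0)=f_{\textnormal{shallow}}(x_i,\Theta)$. For the inductive step, by definition of $A^k$ we have $f_{\textnormal{shallow}}(x_i,\Theta^k)=(A^k\omega^k)_i$. Since $A^k\tilde{\omega}=0$,
\begin{equation*}
A^k\hat{\omega}^k = A^k\omega^k + \beta A^k\tilde{\omega} = A^k\omega^k, \qquad A^k\bar{\omega}^k = A^k\omega^k - \beta A^k\tilde{\omega} = A^k\omega^k,
\end{equation*}
and applying $\mathcal{F}$ again only discards zero weights. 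In either branch of \eqref{eq:alg_sparse_3} this gives $f_{\textnormal{shallow}}(x_i,\Theta^{k+1}) = f_{\textnormal{shallow}}(x_i,\Theta^k)$, closing the induction.

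For the monotonicity of $\|\omega^k\|_{\ell^1}$, the key identity is that for any real $v$ with $|v|\le 1$, $|1+v|+|1-v|=2$. Applying this to $v_j = \beta\alpha_j$ (which satisfies $|v_j|\le 1$ by the argmax choice) yields
\begin{equation*}
\|\hat{\omega}^k\|_{\ell^1} + \|\bar{\omega}^k\|_{\ell^1} = \sum_j \bigl(|1+\beta\alpha_j|+|1-\beta\alpha_j|\bigr)|\omega_j^k| = 2\|\omega^k\|_{\ell^1}.
\end{equation*}
Hence $\min(\|\hat{\omega}^k\|_{\ell^1},\|\bar{\omega}^k\|_{\ell^1}) \le \|\omega^k\|_{\ell^1}$, and since $\mathcal{F}$ preserves the $\ell^1$-norm, $\|\omega^{k+1}\|_{\ell^1}\le\|\omega^k\|_{\ell^1}$. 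If equality holds, the above identity forces $\|\hat{\omega}^k\|_{\ell^1} = \|\bar{\omega}^k\|_{\ell^1} = \|\omega^k\|_{\ell^1}$, so the tie-breaking rule in \eqref{eq:alg_sparse_3} selects the first branch $\omega^{k+1}=\mathcal{F}(\hat{\omega}^k,a^k,b^k)$. Since $\hat{\omega}^k_{j^*}=(1+\beta\alpha_{j^*})\omega_{j^*}^k=0$ while $\omega_{j^*}^k\neq 0$ (by the standing filtering invariant $0\notin\{\omega_j^k\}$), the filter strictly reduces support size, yielding $\|\omega^{k+1}\|_{\ell^0}\le\|\omega^k\|_{\ell^0}-1$. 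The only subtle point, and the one I would state carefully, is the elementary identity $|1+v|+|1-v|=2$ for $|v|\le 1$; everything else is a direct verification.
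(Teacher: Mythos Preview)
Your proof is correct and follows essentially the same approach as the paper: both arguments hinge on the kernel condition $A^k\tilde\omega=0$ for preservation of the outputs, the fact that $|\beta\alpha_j|\le 1$ (equivalently $1\pm\beta\alpha_j\ge 0$) to obtain $\|\hat\omega^k\|_{\ell^1}+\|\bar\omega^k\|_{\ell^1}=2\|\omega^k\|_{\ell^1}$, and the observation that $\hat\omega^k_{j^*}=0$ in the equality case. Your version is slightly more explicit about the base case and the fact that $\mathcal{F}$ preserves the $\ell^1$-norm, but the mathematical content is identical.
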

\begin{proof}
The proof is presented in Section \ref{sec:proof}.
\end{proof}

\begin{prop}\label{prop:sparse}
If the input $\Theta$ of Algorithm \ref{alg1} is a solution of \eqref{intro_pb:NN_reg} with $P> N$, then Algorithm \ref{alg1} terminates within a finite number of iterations, and the output is a solution of \eqref{intro_pb:NN_reg} with $P=N$. The same conclusion holds for \eqref{intro_pb:NN_exact} and \eqref{intro_pb:NN_epsilon}.
\end{prop}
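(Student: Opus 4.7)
The plan is to use Lemma \ref{lem:monotone} as a black box together with the assumption that $\Theta$ is optimal. The key observation is that since Lemma \ref{lem:monotone} preserves the predictions $f_{\textnormal{shallow}}(x_i, \Theta^k)$, the fidelity term in \eqref{intro_pb:NN_reg} is invariant along the iterates. Optimality of $\Theta$ will then force the $\ell^1$ norm of the weights to remain constant, which, combined with the dichotomy in Lemma \ref{lem:monotone}, will force the $\ell^0$ norm to strictly decrease at every iteration in the else branch.

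First, I would note that applying $\mathcal{F}$ to $\Theta$ preserves both the predictions and the $\ell^1$ norm of the weights, so $\|\omega^0\|_{\ell^1} = \|\omega\|_{\ell^1}$ and $f_{\textnormal{shallow}}(x_i, \Theta^0) = f_{\textnormal{shallow}}(x_i, \Theta)$ for all $i$. By Lemma \ref{lem:monotone}, this remains true for every $\Theta^k$, and the sequence $(\|\omega^k\|_{\ell^1})_k$ is non-increasing. Now, since the fidelity term at $\Theta^k$ is identical to that at $\Theta$, the objective value of \eqref{intro_pb:NN_reg} at $\Theta^k$ equals $\|\omega^k\|_{\ell^1} + (\lambda/N)\sum_i \ell(f_{\textnormal{shallow}}(x_i, \Theta) - y_i)$. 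Optimality of $\Theta$ forces $\|\omega^k\|_{\ell^1} \geq \|\omega\|_{\ell^1} = \|\omega^0\|_{\ell^1}$; combined with monotonicity, this gives $\|\omega^k\|_{\ell^1} = \|\omega\|_{\ell^1}$ for all $k$.

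From the third clause of Lemma \ref{lem:monotone}, whenever the else branch is executed we therefore have $\|\omega^{k+1}\|_{\ell^0} \leq \|\omega^k\|_{\ell^0} - 1$. Since the $\ell^0$ norm is a non-negative integer bounded by $p_0 \leq P$, the else branch can be entered at most $p_0$ times, so the algorithm must terminate after finitely many iterations, say at iteration $K$, when $A^K$ is full column rank. Because $A^K \in \R^{N \times p_K}$ has $p_K$ linearly independent columns, we must have $p_K \leq N$. Padding $\Theta^K$ with $N - p_K$ zero triples $(0,0,0)$ yields parameters in $(\R\times\Omega)^N$. Since predictions are preserved and $\|\omega^K\|_{\ell^1} = \|\omega\|_{\ell^1}$, the objective value at the padded $\Theta^K$ equals that of $\Theta$, which is $\textnormal{val}\eqref{intro_pb:NN_reg}$, so $\Theta^K$ is indeed a solution with $P = N$.

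The same reasoning applies verbatim to \eqref{intro_pb:NN_exact} and \eqref{intro_pb:NN_epsilon}: preservation of predictions ensures the constraints $f_{\textnormal{shallow}}(x_i, \Theta^k) = y_i$ or $|f_{\textnormal{shallow}}(x_i, \Theta^k) - y_i| \leq \epsilon$ still hold, so every $\Theta^k$ is feasible, and the optimality-driven invariance of $\|\omega^k\|_{\ell^1}$ then guarantees the output achieves the optimal value. I do not foresee a genuine obstacle here: the only non-trivial step is the optimality-to-equality argument that pins down $\|\omega^k\|_{\ell^1}$, and it rests on the fact that $\mathcal{F}$ and the update rule both preserve the evaluations $\phi\mu = (f_{\textnormal{shallow}}(x_i,\Theta^k))_i$, which is exactly what Lemma \ref{lem:monotone} provides.
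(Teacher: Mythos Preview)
Your proof is correct and follows essentially the same approach as the paper: invoke Lemma~\ref{lem:monotone} to get invariance of the predictions and monotonicity of $\|\omega^k\|_{\ell^1}$, use optimality of $\Theta$ to upgrade monotonicity to equality, then apply the $\ell^0$-decrement clause of the lemma to force termination with $p_K\le N$. Your write-up is in fact more explicit than the paper's on why optimality pins down $\|\omega^k\|_{\ell^1}$ and on the padding step; the only implicit point (harmless here) is that comparing $\Theta^k$ to $\Theta$ requires viewing the padded $\Theta^k$ as feasible for the $P$-neuron problem, and that $\textnormal{val}\eqref{intro_pb:NN_reg}$ is the same for $P$ and for $N$ (which follows from Theorem~\ref{thm:NN_exists_0} or simply from the trivial inequality $\textnormal{val}_{P=N}\ge \textnormal{val}_{P>N}$).
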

 \begin{proof}
 Let $\Theta^k=(\omega^k,a^k,b^k)$ be a sequence generated by Algorithm \ref{alg1}. By Lemma \ref{lem:monotone},  $f_{\textnormal{shallow}}(x_i, \Theta^k)$ remains constant and $ \omega^{k+1}\|_{\ell^1}  \leq  \| \omega^k\|_{\ell^1}$. Combining with the fat that $\Theta$ is a solution of \eqref{intro_pb:NN_reg}, we deduce that $\|\omega^k\|_{\ell^1}$ is invariant. By Lemma \ref{lem:monotone}, $\|\omega^k\|_{\ell^0} \leq \|\omega^0\|_{\ell^0} - k$ for any $k\geq 1$. Thus, Algorithm \ref{alg1} terminates after at most $P$ iterations. Let $\Theta^K$ be the output. Therefore, $A^K$ is full column rank, implying that the column number of $A^K$ is less than $N$. This implies that $\Theta^K$ (up to appending zeros) is a solution of \eqref{intro_pb:NN_reg} with $P=N$.
\end{proof}

\begin{rem}
Proposition \ref{prop:sparse} deals with  the global minimimum of \eqref{intro_pb:NN_reg}. However, the sparsification algorithm \ref{alg1} also provides valuable insights into improving a local minimum. Specifically, if the input corresponds to a local minimizer of \eqref{intro_pb:NN_reg}, then according to Lemma \ref{lem:monotone}, each iteration of Algorithm \ref{alg1} results in either a reduction in the number of non-zero elements of $\omega^k$, thereby simplifying the model, or a decrease in the value of the objective function, which aids in escaping from this local minimum.
\end{rem}

\section{Numerical simulation}\label{sec:numerical}

\subsection{Data setting and pre-trained shallow NN} For the first numerical experiment, we consider 1,000 points in \(\mathbb{R}^2\), distributed across four classes labeled 1, 2, 3, and 4. By adding different scales of white noise to these points, we obtain three scenarios of datasets, as shown in Figure \ref{fig_data}. The standard deviations (std) of these noises are \(0.1\), \(0.22\), and \(0.4\), resulting in clearly separated, moderately separated, and largely overlapping domains for each class of points, respectively.

We apply the shallow NN \eqref{eq:shallow} to perform classification tasks for all scenarios depicted in Figure \ref{fig_data}. To evaluate the generalization performance, we allocate 500 points to the training dataset and the remaining 500 points to the testing dataset. According to Theorem \ref{thm:NN_exists}, the training data can be transformed into their corresponding labels using \eqref{eq:shallow} with 500 neurons. 

Thus, we pre-train a 500-neuron shallow NN on the training set using the SGD algorithm, targeting the mean squared error (MSE) loss. The activation function \(\sigma\) is fixed as the ReLU function, which satisfies Assumption \ref{ass:sigma}(1) and is 1-Lipschitz continuous. We run SGD for \(2 \times 10^4\) epochs with a learning rate of \(10^{-3}\). The testing accuracies according to 
\begin{equation*}
    \text{Accuracy} = \frac{\#\{|y'_{i, \text{pred}} - y_i'|<0.5\}}{N'}, \text{ where }\# \text{ is the cardinality,}
\end{equation*}
 for the three scenarios are 94\%, 84\%, and 70\%.

\begin{figure}[htbp]
    \centering
    \begin{subfigure}[b]{0.31\textwidth}
        \centering
        \includegraphics[width=\textwidth]{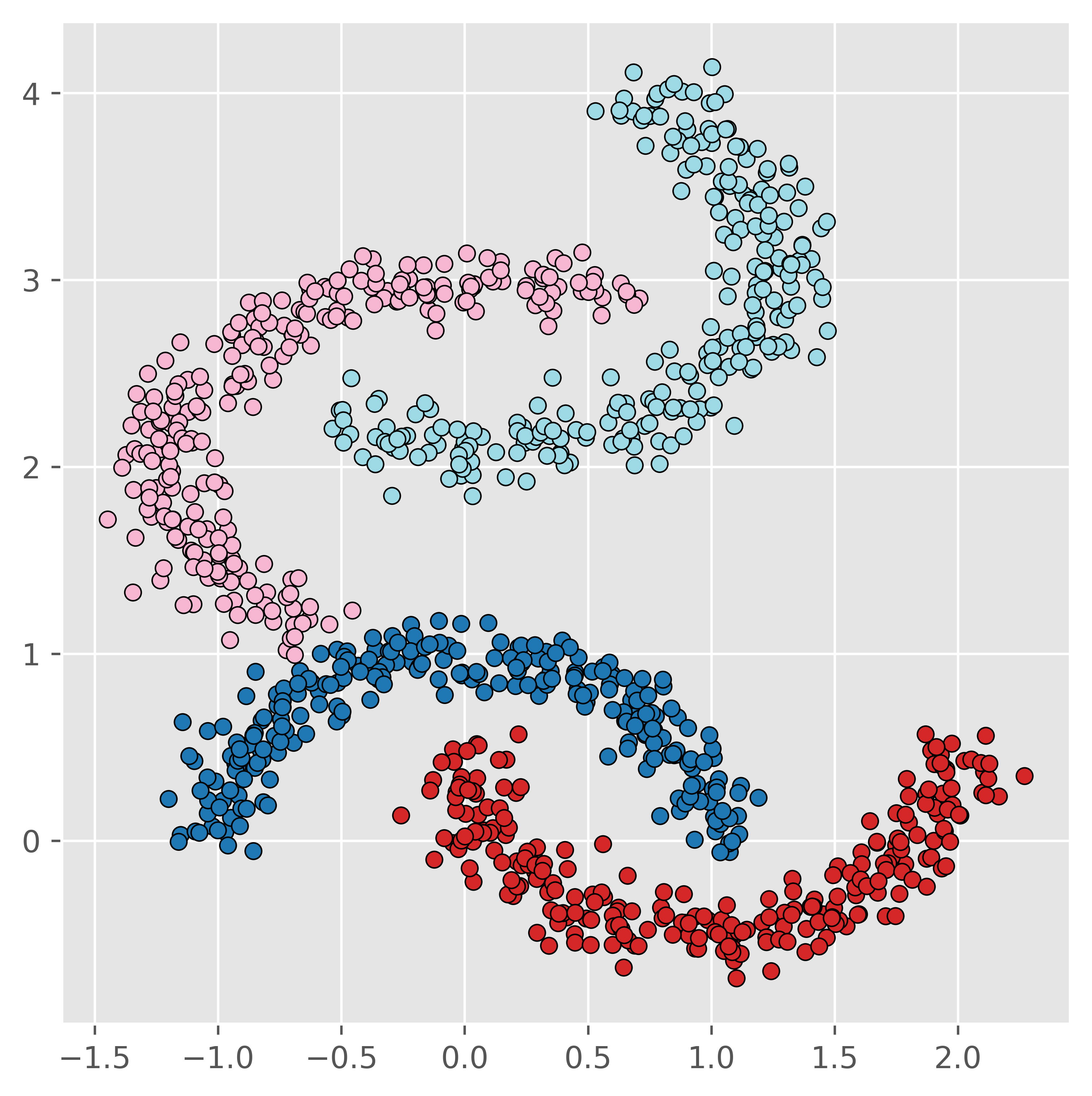}
        \caption{Small noise (std = 0.1).}
        \label{fig:sub1}
    \end{subfigure}
    \hspace{0.01\textwidth} 
    \begin{subfigure}[b]{0.31\textwidth}
        \centering
        \includegraphics[width=\textwidth]{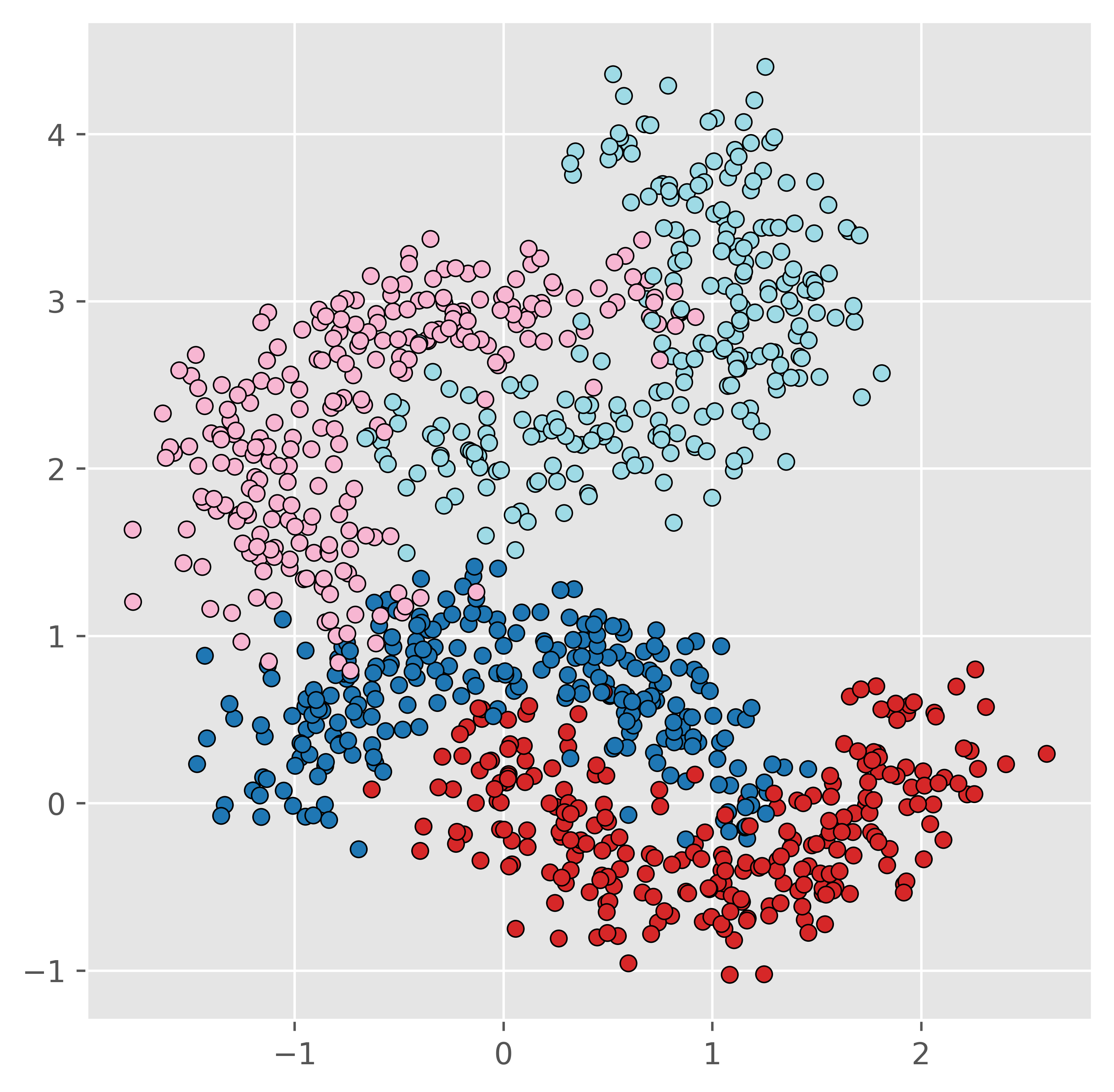}
        \caption{Moderate noise (std = 0.22).}
        \label{fig:sub2}
    \end{subfigure}
    \hspace{0.01\textwidth} 
    \begin{subfigure}[b]{0.31\textwidth}
        \centering
        \includegraphics[width=\textwidth]{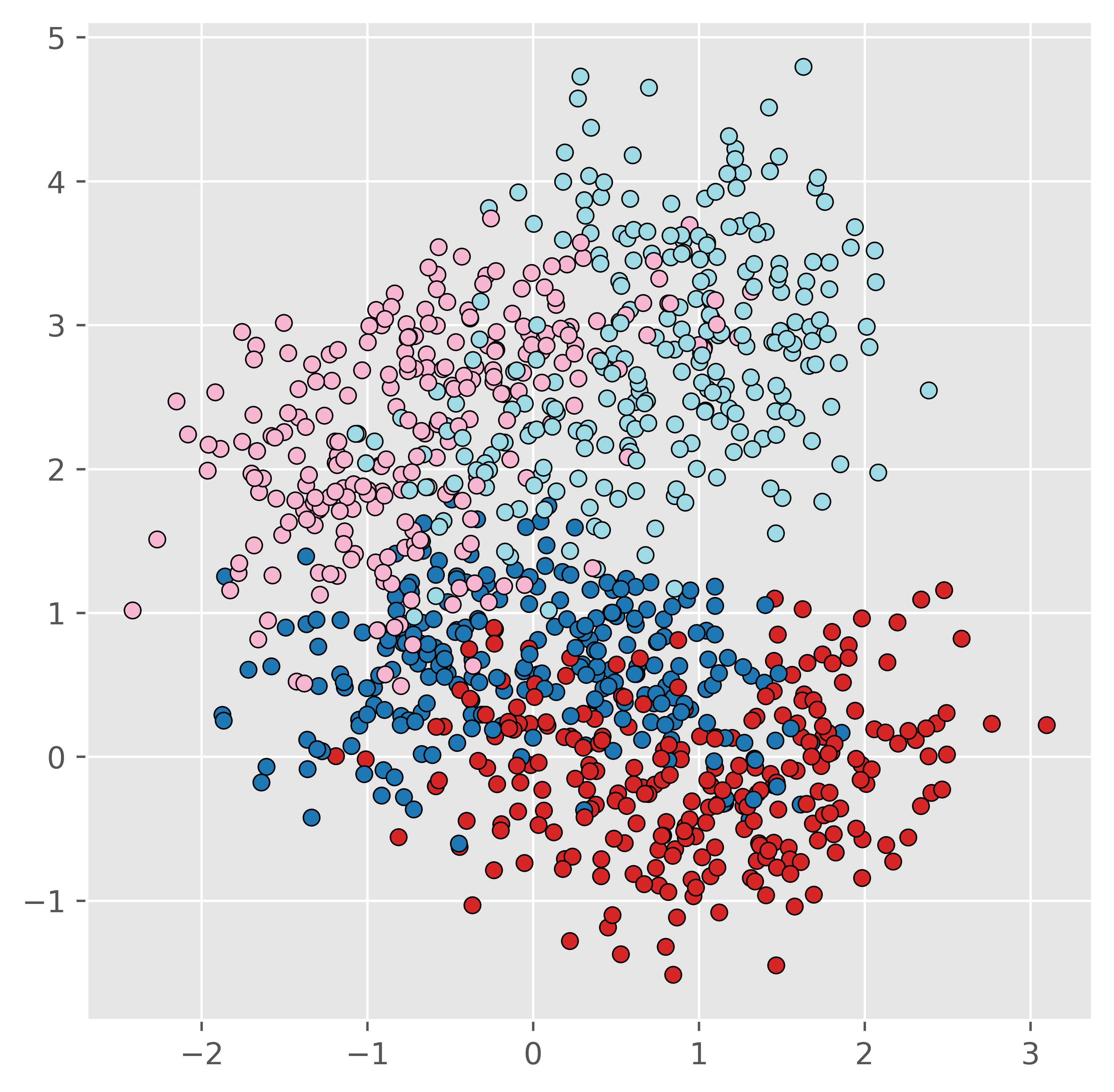}
        \caption{Large noise (std = 0.4).}
        \label{fig:sub3}
    \end{subfigure}
    \caption{Datasets with different standard deviations of noise.}
    \label{fig_data}
\end{figure}

\subsection{Numerical results}
We fix the domain $\Omega$  in our optimization problems \eqref{intro_pb:NN_epsilon} and \eqref{intro_pb:NN_reg} as the smallest cube containing all the parameters $\{(a_j,b_j)\}_{j=1}^{500}$ from the pre-trained models. Here, the pre-trained model is the outcome of the SGD algorithm applied to minimize the MSE loss over \( 2 \times 10^4 \) epochs. The selection of this \(\Omega\) is discussed in Remark \ref{rem:omega}. Subsequently, we discretize this domain $\Omega$ using $M = 10^4$ points, resulting in $\Omega_M$ for the discretized problems \eqref{pb:NN_M_eq} and \eqref{pb:NN_reg_M_eq}. We employ the simplex method to solve equivalent linear quadratic problems and obtain approximate solutions to the primal problems \eqref{intro_pb:NN_epsilon} and \eqref{intro_pb:NN_reg} as described in Theorem \ref{thm:simplex}. Finally, we determine the accuracy rates on the testing datasets using these solutions and plot them in Figure \ref{fig:accuracy}.
In Figure \ref{fig:accuracy}, we examine the testing accuracies for the three dataset scenarios depicted in Figure \ref{fig_data}. The left side of Figure \ref{fig:accuracy} shows the accuracy obtained by \(\Theta_{\epsilon}\), the numerical solution of \eqref{intro_pb:NN_epsilon} with the hyperparameter \(\epsilon \in [0,1]\). The right side displays the accuracy by \(\Theta^{\text{reg}}_\lambda\), the numerical solution of the regression problem \eqref{intro_pb:NN_reg}, with the hyperparameter \(\lambda\) ranging from \(1\) to \(10^5\) (note that the horizontal-axis is in logarithmic scale).

From Figure \ref{fig:accuracy} (A)-(B), we see that when the noise is small, the approximate representation problem \eqref{intro_pb:NN_epsilon} yields excellent results (99.6\%) as \(\epsilon\) approaches 0. This implies that the exact representation problem \eqref{intro_pb:NN_exact} has good generalization properties in this case. This is consistent with the illustrative curve of \(\mathcal{U}\) in Figure \ref{fig_epsilon} for the case where \(C_{X,X',L,D} < c_0^{-1}\), as small noise leads to a small   \(d_{\text{KR}}(m_X,m_{X'})\).
On the other hand, the accuracy of the solution for the regression problem \eqref{intro_pb:NN_reg} increases as \(\lambda\) approaches infinity, corresponding to the convergence of the regression problem stated in Remark \ref{rem:exact-regression}.

For the moderately noisy case, as shown in Figure \ref{fig:accuracy} (C)-(D), the exact representation (\(\epsilon = 0\)) does not yield a  significant improvement  compared to the pre-trained model. As \(\epsilon\) increases, the accuracy initially improves, reaching the maximum value for some \(\epsilon= \epsilon^{*} \in (0.4, 0.5)\), which exceeds the pre-trained accuracy. Beyond this threshold, the accuracy decreases. This is consistent with the illustrative curve of \(\mathcal{U}\) in Figure \ref{fig_epsilon} for the case where \(C_{X,X',L,D} > c_0^{-1}\). For the regression case, the accuracy increases with \(\lambda\) until \(\lambda\) becomes very large (around \(10^4\)), after which the accuracy starts to decrease.

The accuracy curves for the large noise case in Figure \ref{fig:accuracy} (E)-(F) exhibit similar qualitative properties to the previous one. However, in this scenario, the peak testing accuracy achieved by the solutions of \eqref{intro_pb:NN_epsilon} does not surpass that of the pre-trained model. This is reasonable, as the standard deviation of the noise is large, making it difficult to satisfy a uniform constraint on the representation error, which in turn leads to poor generalization properties. Nonetheless, the regression problem shows good performance for \(\lambda \in (10^3, 10^4)\). Additionally, the rate of decrease in accuracy after \(\lambda = 10^4\) is much faster than in Figure \ref{fig:accuracy} (D).

In conclusion, the results of this simulation demonstrate that the exact and approximate representation problems perform well when the datasets exhibit clear or moderately separable boundaries. In contrast, when these datasets feature heavily overlapping areas, it is more reasonable to consider the regression problem.
\begin{figure}[htbp]
\centering
\begin{subfigure}[t]{0.49\textwidth}
\includegraphics[width=\textwidth]{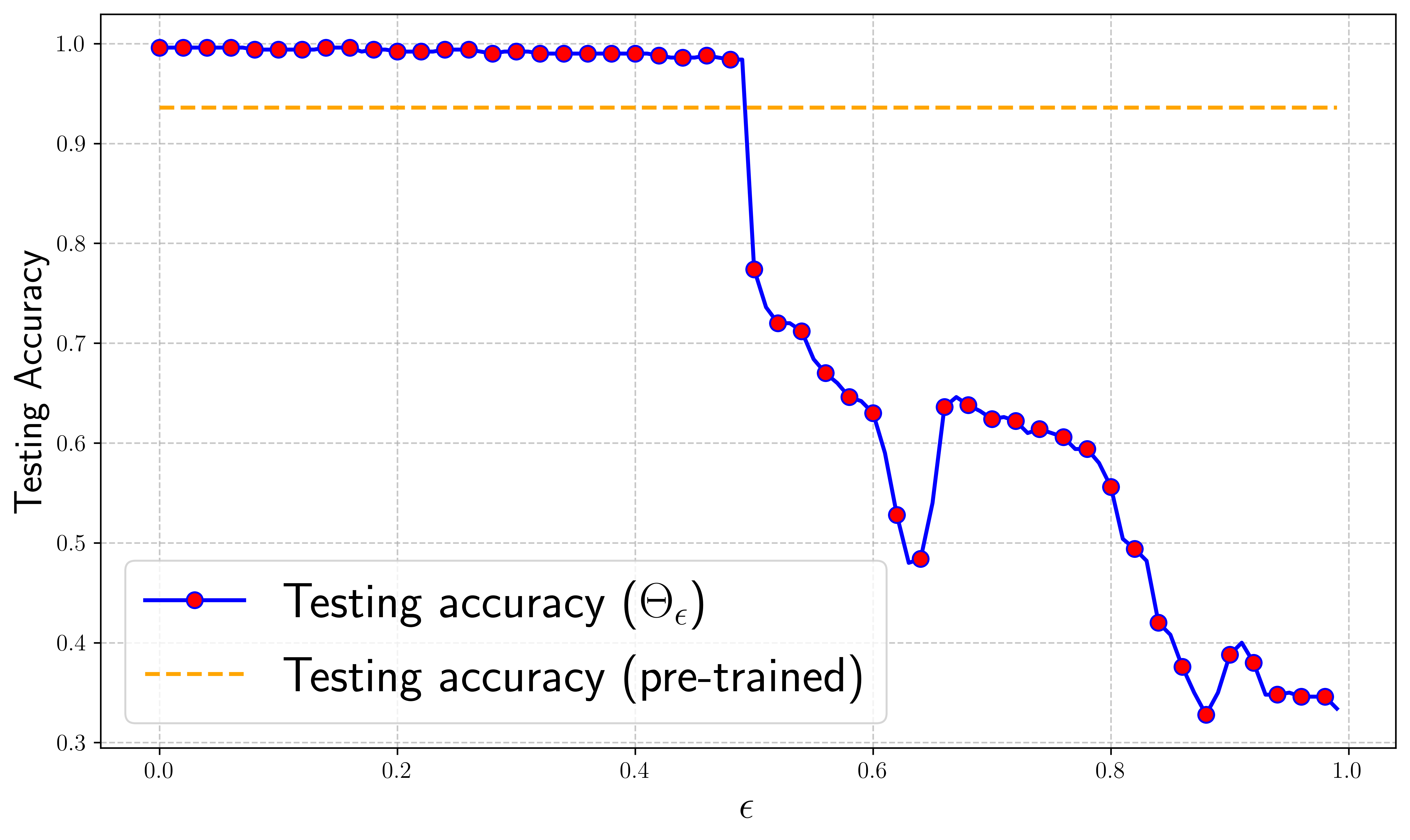}
\caption{Noise std = 0.1} 
\end{subfigure}\hspace*{\fill}
\begin{subfigure}[t]{0.49\textwidth}
\includegraphics[width=\textwidth]{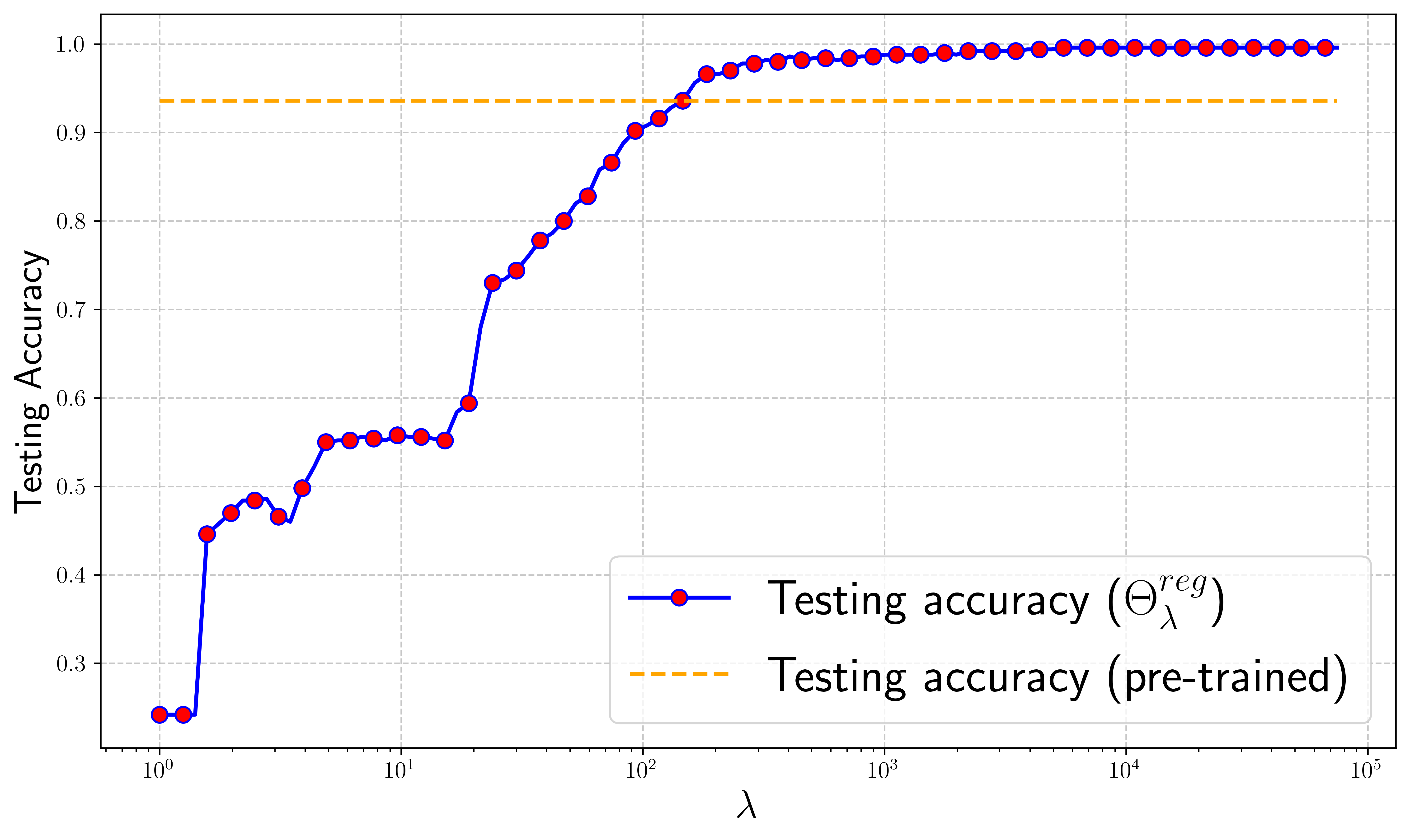}
\caption{Noise std = 0.1} 
\end{subfigure}

\vspace{0.05\textwidth}

\begin{subfigure}[t]{0.49\textwidth}
\includegraphics[width=\textwidth]{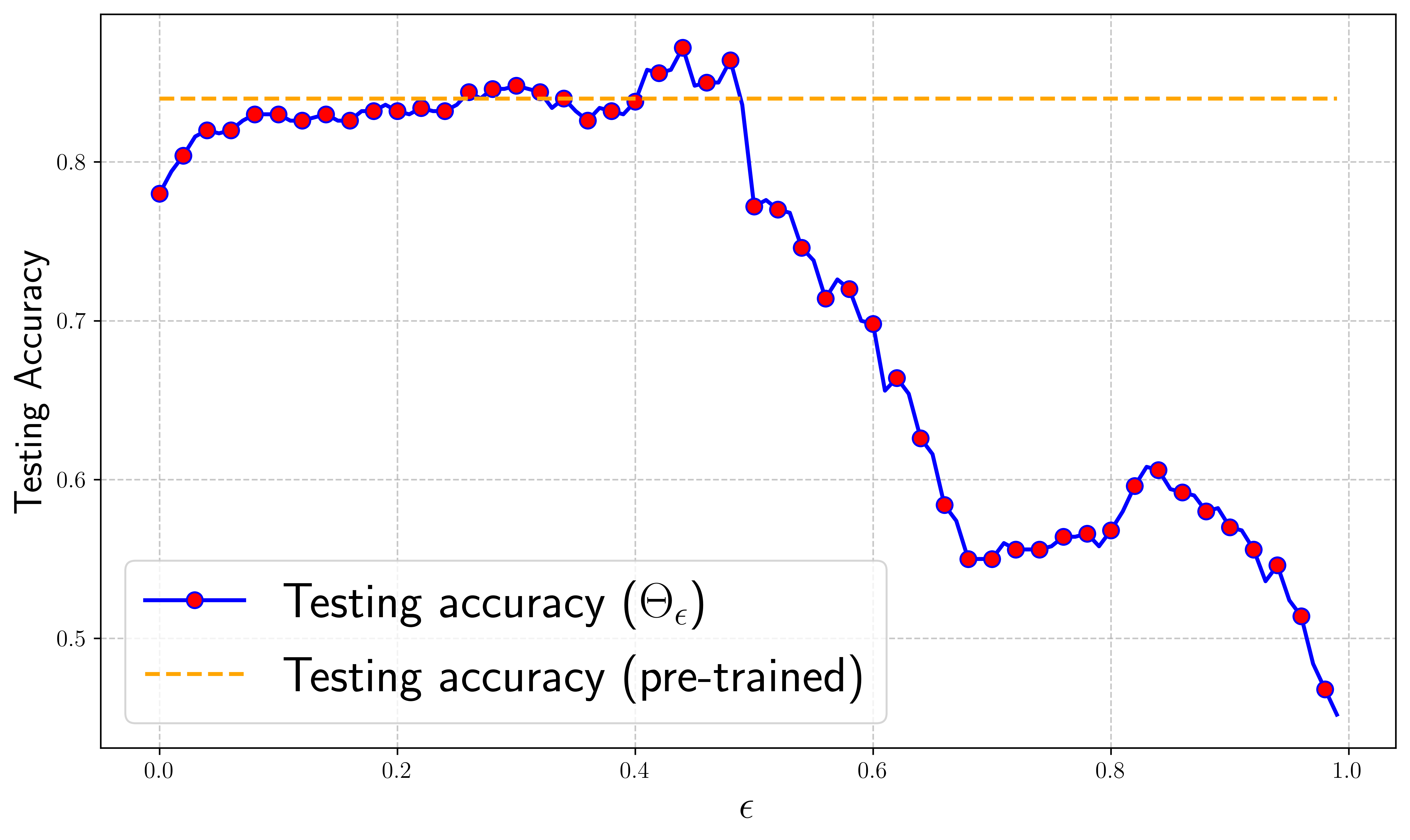}
\caption{Noise std = 0.22} 
\end{subfigure}\hspace*{\fill}
\begin{subfigure}[t]{0.49\textwidth}
\includegraphics[width=\textwidth]{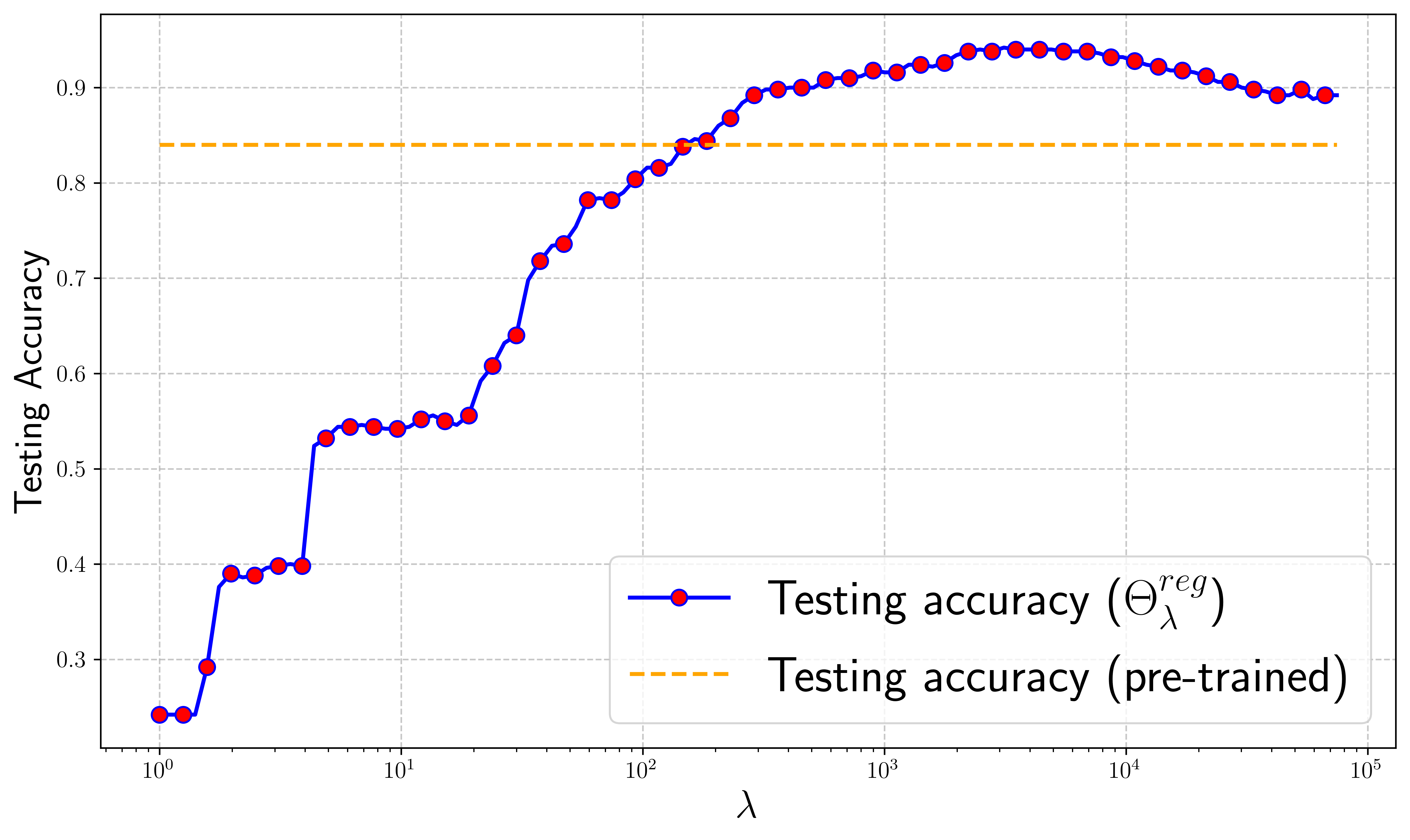}
\caption{Noise std = 0.22} 
\end{subfigure}

\vspace{0.05\textwidth}

\begin{subfigure}[t]{0.49\textwidth}
\includegraphics[width=\textwidth]{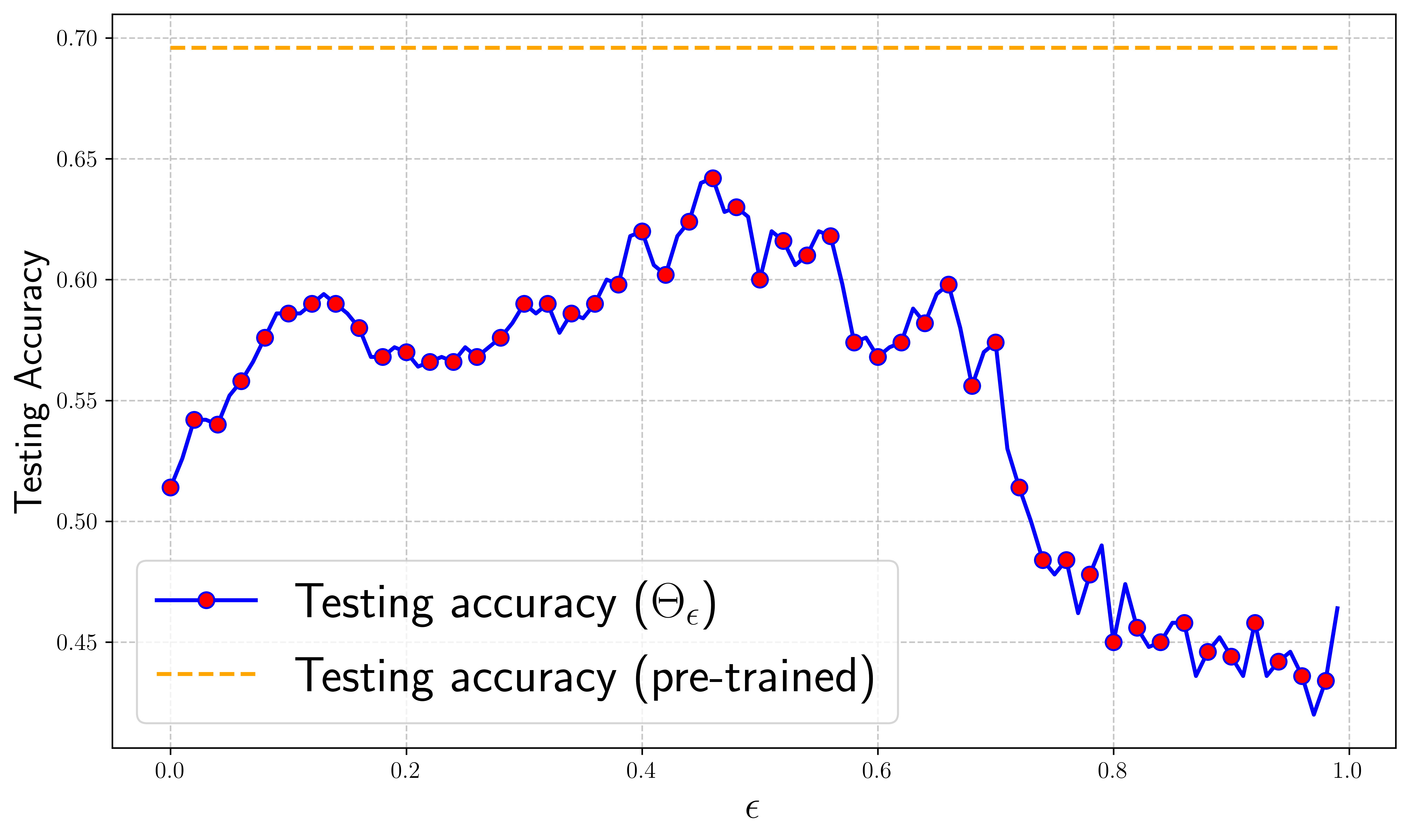}
\caption{Noise std = 0.4} 
\end{subfigure}\hspace*{\fill}
\begin{subfigure}[t]{0.49\textwidth}
\includegraphics[width=\textwidth]{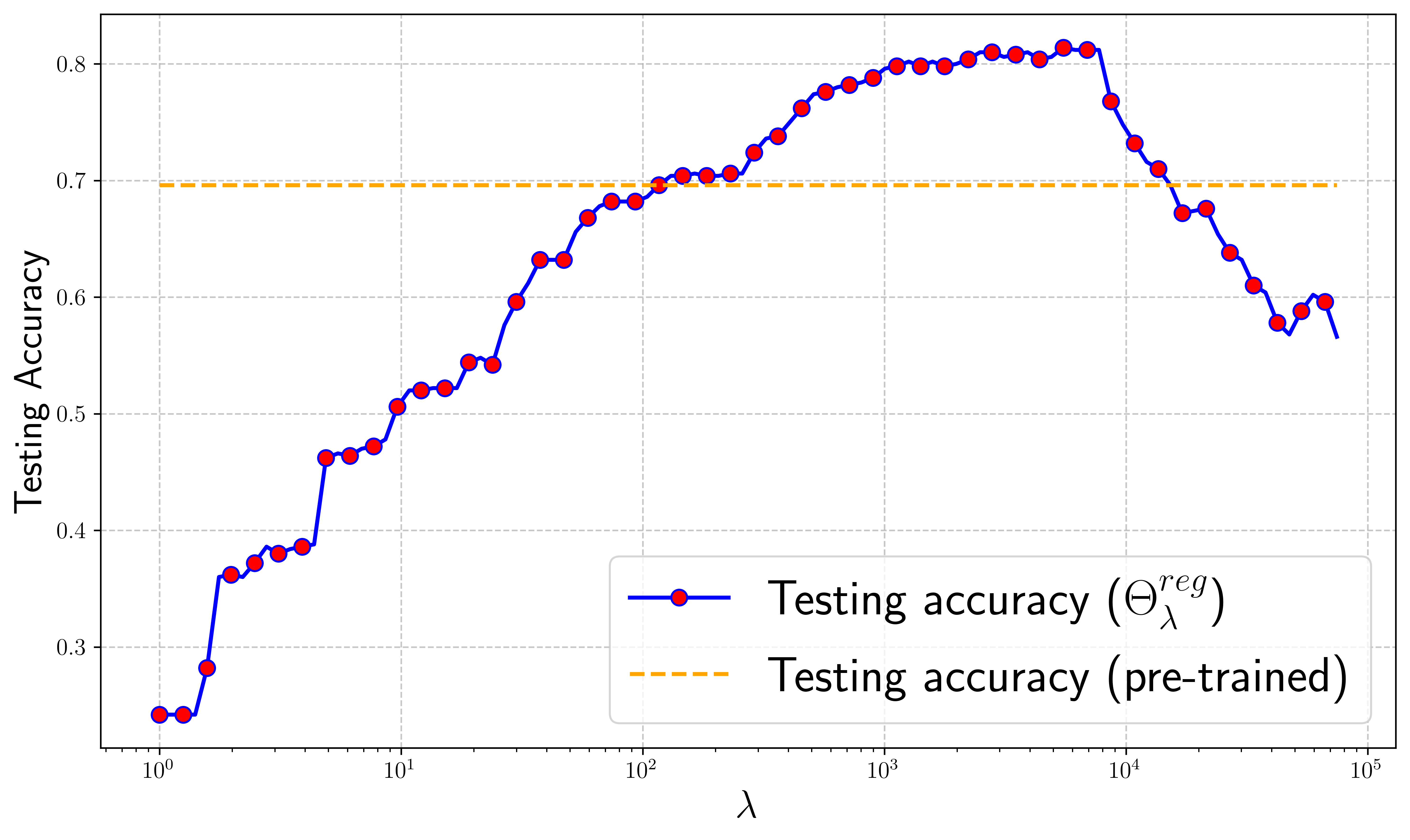}
\caption{Noise std = 0.4} 
\end{subfigure}

\caption{Testing accuracy by solutions of problem \eqref{intro_pb:NN_epsilon} (left) and problem \eqref{intro_pb:NN_reg} (right) with different hyperparameters \(\epsilon\) and \(\lambda\) in three noise scenarios.} \label{fig:accuracy}
\end{figure}

\subsection{Sparsification of solutions of a high-dimensional example}
We examine a second numerical experiment focused on a high-dimensional classification problem using data from the MNIST dataset \cite{lecun1998gradient}. Each data point in this dataset has a feature dimension of $28 \times 28$, with labels ranging from 0 to 9. To simplify the computation and examine the generalization properties, we randomly select 300 data points featuring labels 0, 1, and 2 for training. Additionally, we use 1,000 data points with the same labels for testing.

We vary the number of neurons in \eqref{eq:shallow} from 100 to 600, utilize the MSE loss, and train the models using the Adam algorithm, a commonly used variant of SGD in deep learning. The prediction results on the testing set are depicted in Figure \ref{fig:sparse} (A). Observations from the blue curve in Figure \ref{fig:sparse} (A) indicates 
an increasing trend in prediction accuracy as the number of neurons increases, aligning with the expected convergence behavior of gradient descent in overparameterized settings. We subsequently initialize Algorithm \ref{alg1} with the solution obtained from the 600-neuron shallow neural network. The outcomes, marked by red crosses in Figure \ref{fig:sparse}(A), reveal that a model with 300 activated neurons outperforms the results obtained directly using the Adam algorithm.
Additionally, we present in Figures \ref{fig:sparse} (B) and (C) the evolutions of $\|\omega^k\|_{\ell^1}$ and $\|\omega^k\|_{\ell^0}$ over the iteration number $k$. As indicated by Lemma \ref{lem:monotone}, both metrics decrease throughout the iterations. Ultimately, $\|\omega^k\|_{\ell^0}$, i.e., the number of activated neurons, stabilizes at 300, which corresponds to the number of training data points.

This numerical example, despite the limited dataset size, demonstrates good performance of the combination of a gradient descent-type algorithm for overparameterized models and our sparsification method. This approach maintains the accuracy of the trained model while reducing its size.

\begin{figure}[htbp]
    \centering
    \begin{subfigure}[b]{0.31\textwidth}
        \centering
        \includegraphics[width=\textwidth]{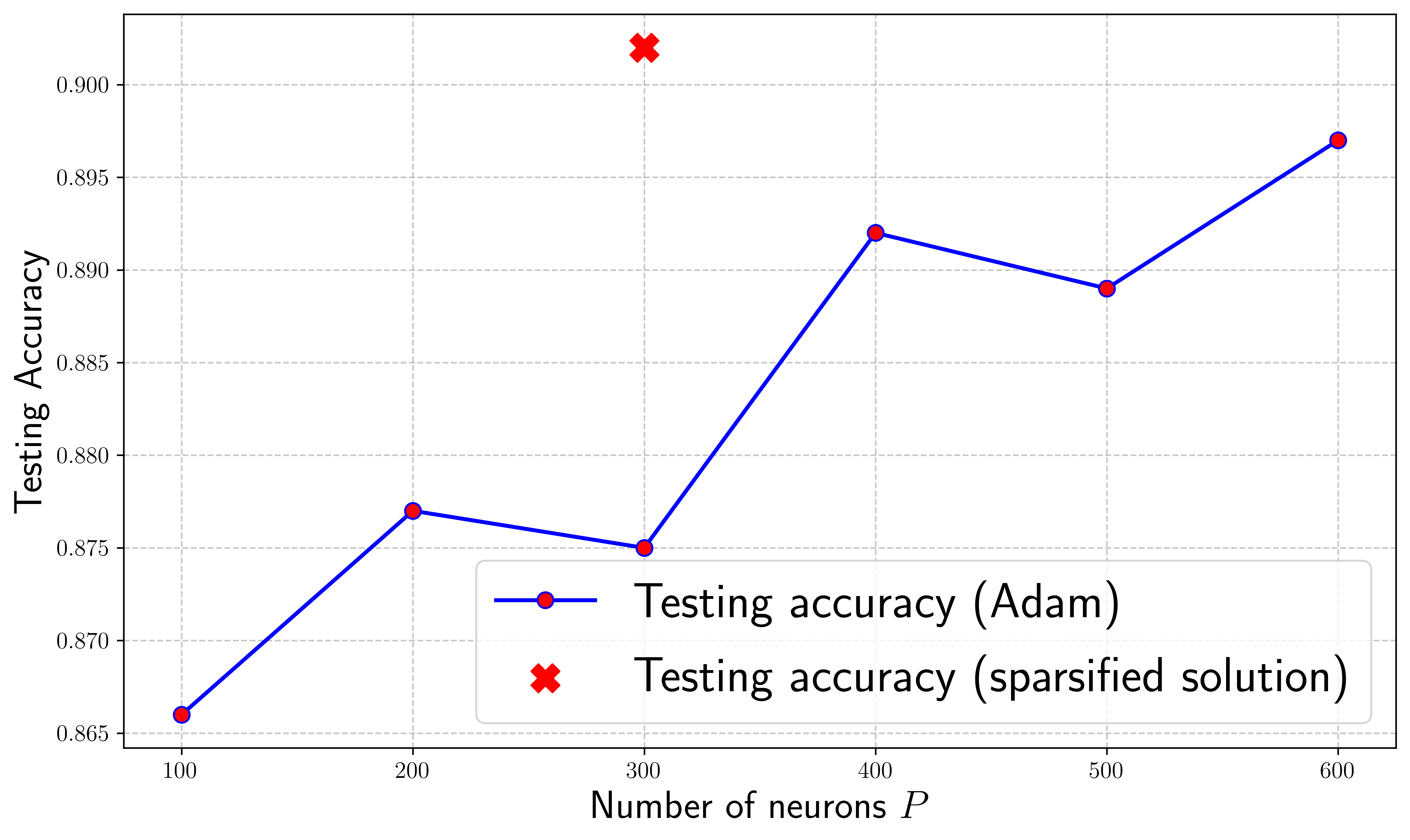}
        \caption{Testing accuracy with different number of neurons $P$.}

    \end{subfigure}
    \hspace{0.01\textwidth} 
    \begin{subfigure}[b]{0.31\textwidth}
        \centering
        \includegraphics[width=\textwidth]{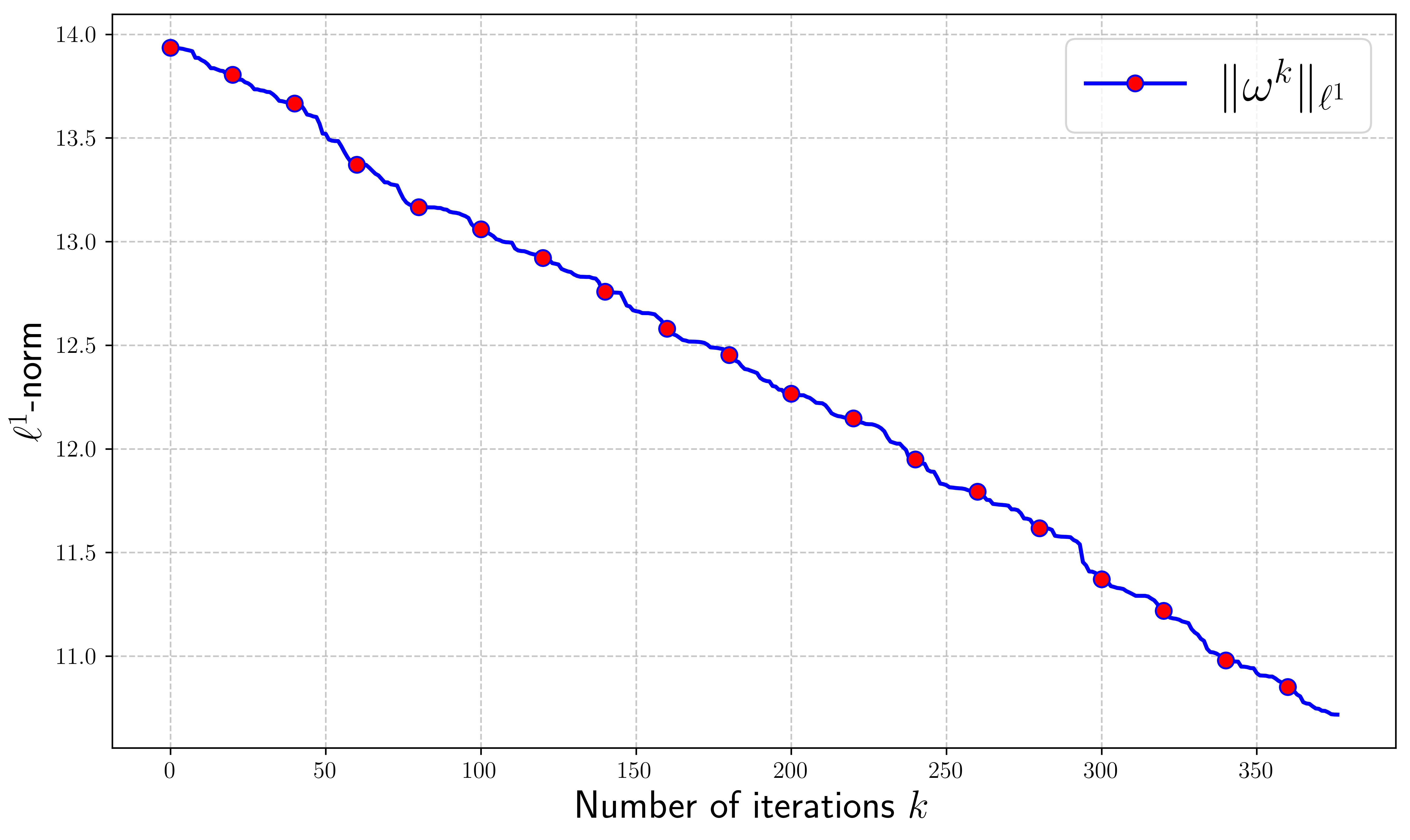}
        \caption{The $\ell^1$-norm of $\omega^k$ during iterations of Algorithm \ref{alg1}.}
    \end{subfigure}
    \hspace{0.01\textwidth} 
    \begin{subfigure}[b]{0.31\textwidth}
        \centering
        \includegraphics[width=\textwidth]{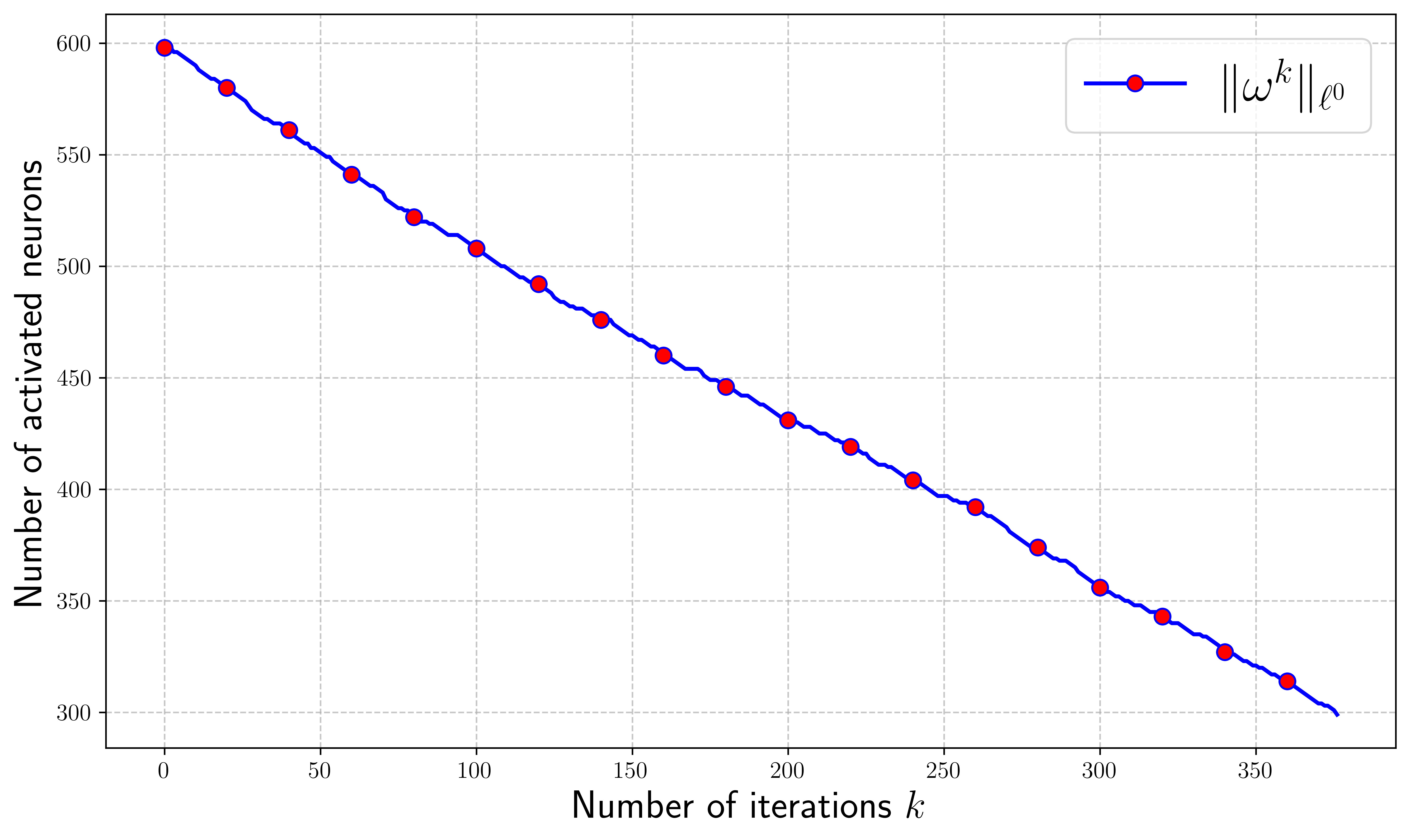}
        \caption{Number of activated neurons during iterations of Algorithm \ref{alg1}.}
    \end{subfigure}
    \caption{Numerical results of training shallow NNs on a subset of the MNIST dataset.}
    \label{fig:sparse}
\end{figure}

\section{Discussion}\label{sec:discussion}
\subsection{$L^2$ minimization problem}
Instead of considering the \(\ell^1\)-norm of \(\omega\), which promotes the sparsity of activated neurons in shallow NNs, we focus here on the \(\ell^2\)-norm. Specifically, we study the following regression problem:
\begin{equation}
    \inf_{\{(\omega_j,a_j,b_j)\in \mathbb{R} \times \Omega\}_{j=1}^P } \sum_{j=1}^P |\omega_j|^2 + \frac{\lambda}{N} \sum_{i=1}^N \left|  \sum_{j=1}^P \omega_j \sigma(\langle a_j, x_i \rangle + b_j ) - y_i \right|^2.
\end{equation}
We then consider its mean-field relaxation in \(L^2(\Omega)\). By the Riesz representation theorem in Hilbert spaces, the linear mapping \(\phi_i\) (defined in \eqref{eq:phi}) can be identified with the following functions mapping \(\Omega\) to \(\mathbb{R}\):
\begin{equation*}
    \phi_i (a,b) = \sigma(\langle a, x_i\rangle + b), \quad \text{for } i=1,\ldots, N.
\end{equation*}
As a result, the mean-field relaxation problem takes the form:
\begin{equation}\label{pb:L2}\tag{L2}
    \inf_{\mu \in L^2(\Omega)} \| \mu \|_{L^2(\Omega)}^2 +  \frac{\lambda}{N} \sum_{i=1}^N \left|\langle \phi_i, \mu\rangle_{L^2(\Omega)} - y_i \right|^2.
\end{equation}
Recall \( Y = (y_1, \ldots, y_N) \in \mathbb{R}^N \), and let \( Q \) denote the Gram matrix associated with the functions \( (\phi_i)_{i=1,\ldots, N} \):
\begin{equation*}
    Q_{i,j} = \langle \phi_i, \phi_j \rangle_{L^2(\Omega)}, \quad \text{for } i, j = 1, \ldots, N.
\end{equation*}

\begin{prop}\label{prop:L2}
    Under Assumption \ref{ass:sigma}, problem \eqref{pb:L2} admits a unique solution:
    \begin{equation}\label{eq:sol_L2_reg}
        \mu^*_{\lambda} = \sum_{i=1}^N (w_{\lambda}^*)_i \, \phi_i, \qquad \text{with} \quad w_{\lambda}^* = \left(Q + \frac{N}{\lambda} \, \textnormal{Id}\right)^{-1} Y.
    \end{equation}
    Moreover, the matrix \( Q \) is invertible. As \( \lambda \) tends to infinity, \( \mu^*_{\lambda} \) converges in the \( L^2 \)-sense to 
    \begin{equation}\label{eq:sol_L2_exact}
        \mu^* = \sum_{i=1}^N w_i^* \, \phi_i, \qquad \text{with} \quad w^* = Q^{-1} Y,
    \end{equation}
    and \( \mu^* \) is the unique solution of the exact representation counterpart of problem \eqref{pb:L2}.
\end{prop}
\begin{proof}
    The proof is presented in Section \ref{sec:proof}.
\end{proof}

The shallow NNs corresponding to the solution of \eqref{pb:L2} lie in the so-called reproducing kernel Hilbert space (RKHS) \cite[Def.\@ 2.9]{scholkopf2002learning}. In particular, the shallow NN of \eqref{eq:sol_L2_reg} can be rewritten as:
\begin{equation}\label{eq:RKHS}
   f_{\textnormal{shallow}} (x,\, \mu^*_{\lambda}) = \int_{\Omega} \sigma(\langle a, x \rangle + b)\, d\mu^*_{\lambda}(a,b) = \sum_{i=1}^N (w^*_{\lambda})_i\, k(x_i, x),
\end{equation}
where the kernel function \( k(x_i, x) \) is given by:
\begin{equation}\label{eq:kernel}
    k(x_i, x) = \int_{\Omega} \sigma(\langle a, x_i \rangle + b)\, \sigma(\langle a, x \rangle + b) \, d(a,b), \quad \text{for } i = 1, \dots, N.
\end{equation}

\begin{rem}[Comparison between TV and $L^2$ regularization]\label{rem:L1-L2}
Here, we compare the TV-problem \eqref{pb:NN_reg_rel} and the $L^2$-problem \eqref{pb:L2} in the following aspects:

\begin{enumerate}
\setlength{\itemsep}{6pt}
    \item (Structure of the solutions) 
    The solutions of \eqref{pb:NN_reg_rel} reside in the space of measures, and Theorem \ref{thm:NN_exists_0} states that their extreme points are supported on at most \( N \) points. Therefore, the associated shallow NN is a finite sum of activation functions \eqref{eq:shallow}. In contrast, the solution of \eqref{pb:L2} is a nonzero continuous function by \eqref{eq:sol_L2_reg}. Consequently, its support set has a non-empty interior, which precludes representing the corresponding shallow NN in the form of \eqref{eq:shallow}. Instead, it takes the form of a finite sum of kernel functions, as given in \eqref{eq:RKHS}.

    \item (Training complexity) The solutions of \eqref{pb:NN_reg_rel} are obtained via iterative algorithms, as described in Section \ref{sec:algo} (e.g., the simplex method and the gradient descent). In contrast, the solution of \eqref{pb:L2} has an exact formula \eqref{eq:sol_L2_reg}, provided one successfully inverts the penalized Gram matrix \( Q + N \text{Id}/\lambda \), which is analogous to solving finite element schemes for linear partial differential equations. However, inverting this matrix (or solving for \( \omega_{\lambda}^* \)) is computationally expensive when \( N \) is large and is typically handled via iterative algorithms such as the conjugate gradient method.

    \item (Prediction and privacy concerns) Once a solution of \eqref{pb:NN_reg_rel} is obtained in the form of \eqref{eq:solution_eq}, making a prediction for a new feature \( x \in \mathbb{R}^d \) is straightforward: one simply evaluates the formula \eqref{eq:shallow}. In contrast, for the solution of \eqref{pb:L2}, the prediction of this new feature \( x \in \mathbb{R}^d \) is given by \eqref{eq:RKHS}, where the kernel value \( k(x_i, x) \) must be computed through the integral \eqref{eq:kernel} for all $i$. When the dimension \( d \) is high, estimating \eqref{eq:kernel} via Monte Carlo methods incurs significant computational cost. Furthermore, since the prediction by \eqref{eq:RKHS} requires knowledge of all training features \( x_i \), this raises privacy concerns regarding the security of training data.
\end{enumerate}
\end{rem}

\subsection{Double descent for random feature problems}
The double descent phenomenon \cite{belkin2019reconciling,mei2022generalization} is studied in the context of non-penalized (or weakly penalized) regression problems, i.e., for very large values of \(\lambda\) in \eqref{intro_pb:NN_reg}. By Remark \ref{rem:exact-regression2}, as \(\lambda\) tends to infinity, the solutions of \eqref{intro_pb:NN_reg} converge to the solution of the bi-level optimization problem \eqref{pb:bilevel}.

In the analysis of the double descent, we vary the number of neurons \(P\). However, problem \eqref{pb:bilevel} is non-convex for finite values of \(P\). To address this non-convexity, we focus on its \textit{random feature} counterpart (see \cite{mei2022generalization} and \cite[Sec.~12]{bach2024learning}), where the pairs \((a_j, b_j)\) are randomly selected prior to the optimization process.

More concretely, let \((\bar{a}_j, \bar{b}_j)_{j \geq 1}\) be a sequence of random points uniformly distributed in \(\Omega\). We define:
\begin{equation*}
    \Omega_P = \{ (\bar{a}_j, \bar{b}_j) \mid 1 \leq j \leq P \}, \quad \text{for } P \geq 1.
\end{equation*}
Rather than considering the entire domain \(\Omega\), we restrict the bi-level optimization problem \eqref{pb:bilevel} to \(\Omega_P\). In this setting, the training results are given by:
\begin{equation*}
    \Theta_{P} = ((\bar{\omega}_{P})_j, \bar{a}_j, \bar{b}_j)_{j=1}^{P},
\end{equation*}
where \(\bar{\omega}_{P}\) is a solution of the following convex optimization problem (since \((\bar{a}_j, \bar{b}_j)\) are fixed):
\begin{equation}\label{pb:random_feature}
    \inf_{\bar{\omega} \in \mathbb{R}^P} \|\bar{\omega}\|_{\ell^1}, \quad \text{s.t. } \bar{\omega}  \in \argmin_{\omega \in \mathbb{R}^P} \sum_{i=1}^N \left| 
 \sum_{j=1}^P \omega_j \sigma(\langle \bar{a}_j, x_i \rangle + \bar{b}_j) - y_i \right|.
\end{equation}

Then, we evaluate the solution \(\Theta_P\) on a testing dataset \((X', Y')\). For simplicity, we assume that the training and testing sets have the same cardinality, \(N\), aligning with the framework of Remark \ref{rem:mean-lp}.
Let us define the total variation and the mean absolute error (MAE) associated with the solution $\Theta_P$: 
\begin{align*}
    \text{TV} (P) &= \|\bar{\omega}_P\|_{\ell^1},\\
    \text{MAE}_{\text{train}} (P) &= \frac{1}{N} \sum_{i=1}^N \left| f_{\textnormal{shallow}} (x_i, \Theta_P) - y_i \right|,\\
    \text{MAE}_{\text{test}} (P) &= \frac{1}{N} \sum_{i=1}^N \left| f_{\textnormal{shallow}} (x'_i, \Theta_P) - y'_i \right|.
\end{align*}
From \eqref{eq:mlp1}, we obtain the following upper bound on the testing MAE:
\begin{equation}\label{eq:MAE}
    \text{MAE}_{\text{test}} (P) \leq \underbrace{W_1(m_{\text{train}}, m_{\text{test}})}_{\text{Irreducible error from datasets}} +  \underbrace{\text{MAE}_{\text{train}} (P)}_{\text{Bias from training}} +  \underbrace{W_1(m_{\text{train}}, m_{\text{test}})\, \text{TV} (P)}_{\text{Standard deviation}}.
\end{equation}
We have the following results on the monotonicity of $\text{MAE}_{\text{train}} (P)$ and $ \text{TV} (P)$.
\begin{prop}\label{prop:double_descent}
   Let Assumption \ref{ass:sigma} hold true. Suppose that \( \Omega \) has strictly positive lower density at each point, i.e.,  
\[
\liminf_{\delta \to 0^+} \frac{\textnormal{Vol}(B(x, \delta) \cap \Omega)}{\delta^{d+1}} >0, \quad \textnormal{for all } x \in \Omega,
\]
where \( B(x, \delta) \) denotes the closed ball centered at \( x \) with radius \( \delta \).
    Consider random points $(\bar{a}_j,\bar{b}_j)_{j\geq 1}$ uniformly distributed in \(\Omega\). 
   Define $P_0$ as the smallest integer $P$ such that
    \begin{equation*}
        \min_{\omega \in \mathbb{R}^P} \sum_{i=1}^N \left| 
 \sum_{j=1}^P \omega_j \sigma(\langle \bar{a}_j, x_i \rangle + \bar{b}_j) - y_i \right| = 0.
    \end{equation*}
    Then, the following statements hold:
    \begin{enumerate}
        \item Almost surely, the number $P_0$ is finite;
        \item For \( P \geq 1 \), the function \( \textnormal{MAE}_{\textnormal{train}}(P) \) is decreasing and vanishes at \( P = P_0 \);
        \item For \( P \geq P_0 \), the function \( \textnormal{TV}(P) \) is decreasing and converges almost surely to \(\textnormal{val} \eqref{pb:NN_exact_rel} \).
    \end{enumerate}
\end{prop}
\begin{proof}
    The proof is presented in Section \ref{sec:proof}.
\end{proof}

\begin{rem}[Double descent]\label{rem:double_descent}
    The value \(P_0\) in Proposition \ref{prop:double_descent} represents the smallest number of neurons (in our sampling sequence) required to achieve an exact representation of the training set. In the literature, \(P_0\) is referred to as the interpolation threshold  \cite{belkin2019reconciling,mei2022generalization}. This threshold plays a crucial role in the double descent phenomena explained below:
    \begin{enumerate}
        \item When \(P \ll P_0\), the training error is high due to the insufficient number of parameters. In this regime, the decrease in training error (as described in point (2) of Proposition \ref{prop:double_descent}) plays a dominant role in the variation of the testing error, resulting in the initial descent of the testing error.
        \item When \(P\) increases (but remains below \(P_0\)), the decrease in training error is not sufficient to counterbalance the increase of the total variation (this increase is observed numerically in Figure \ref{fig:Double_descent}), causing the testing error curve to rise.
        \item When \(P \geq P_0\), the right-hand side of \eqref{eq:MAE} decreases (as indicated by points (2) and (3) of Proposition \ref{prop:double_descent}), which leads to the second descent in the testing error, eventually converging to a constant.
    \end{enumerate}
\end{rem}

\textit{Numerical simulation}. We perform numerical simulations of the random feature problem \eqref{pb:random_feature} with varying $P$, across three classification scenarios associated with the data in Figure \ref{fig_data}. The results are presented in Figure \ref{fig:Double_descent}. In all three simulations, we observe that MAE$_{\text{train}}(P)$ decreases as $P$ increases, consistent with point (2) of Proposition \ref{prop:double_descent}. Additionally, the interpolation threshold $P_0$ is approximately 700. It is natural that $P_0$ exceeds $N = 500$, the theoretical number required for exact representation as established in Theorem \ref{thm:NN_exists}, because the choices of $(a, b)$ are constrained within our random feature domain $\Omega_P$. Regarding the total variation, TV$(P)$, we observe an increasing trend before reaching $P_0$, followed by a decrease, which is consistent with point (3) of proposition \ref{prop:double_descent}. Consequently, as discussed in Remark \ref{rem:double_descent}, we observe the double descent phenomenon for MAE$_{\text{test}}(P)$, illustrated in the last row of Figure \ref{fig:Double_descent}.

\begin{rem}[Comparison of two regimes]
    We emphasize that while the double descent phenomenon appears in all scenarios, the quality of the two valleys (corresponding to the underparameterized and overparameterized regimes) varies significantly. In the low-noise case (i.e., when the difference between the training and testing sets is small), the second valley exhibits superior performance, indicating that overparameterization enhances results. In the moderate-noise scenario, both valleys achieve comparable predictive performance. However, in the high-noise case, the underparameterized regime significantly outperforms the overparameterized regime. This behavior is analogous to what has been recently observed in large language models. For instance, ChatGPT-4o is estimated to have 1.2 trillion parameters, while DeepSeek-V3 has 0.67 trillion parameters (with 0.03 trillion used for predictions, thanks to its \textit{Mixture of Experts} architecture). Despite the size difference, their performances are comparable for many tasks. For tasks that closely resemble existing text, ChatGPT-4o, representing the overparameterized regime, typically performs better. On the other hand, for tasks that significantly deviate from seen data, DeepSeek-V3, representing the underparameterized regime, can deliver superior results.
\end{rem}


\begin{figure}[htbp]
    \centering
    \begin{subfigure}[b]{0.31\textwidth}
        \centering
        \includegraphics[width=\textwidth]{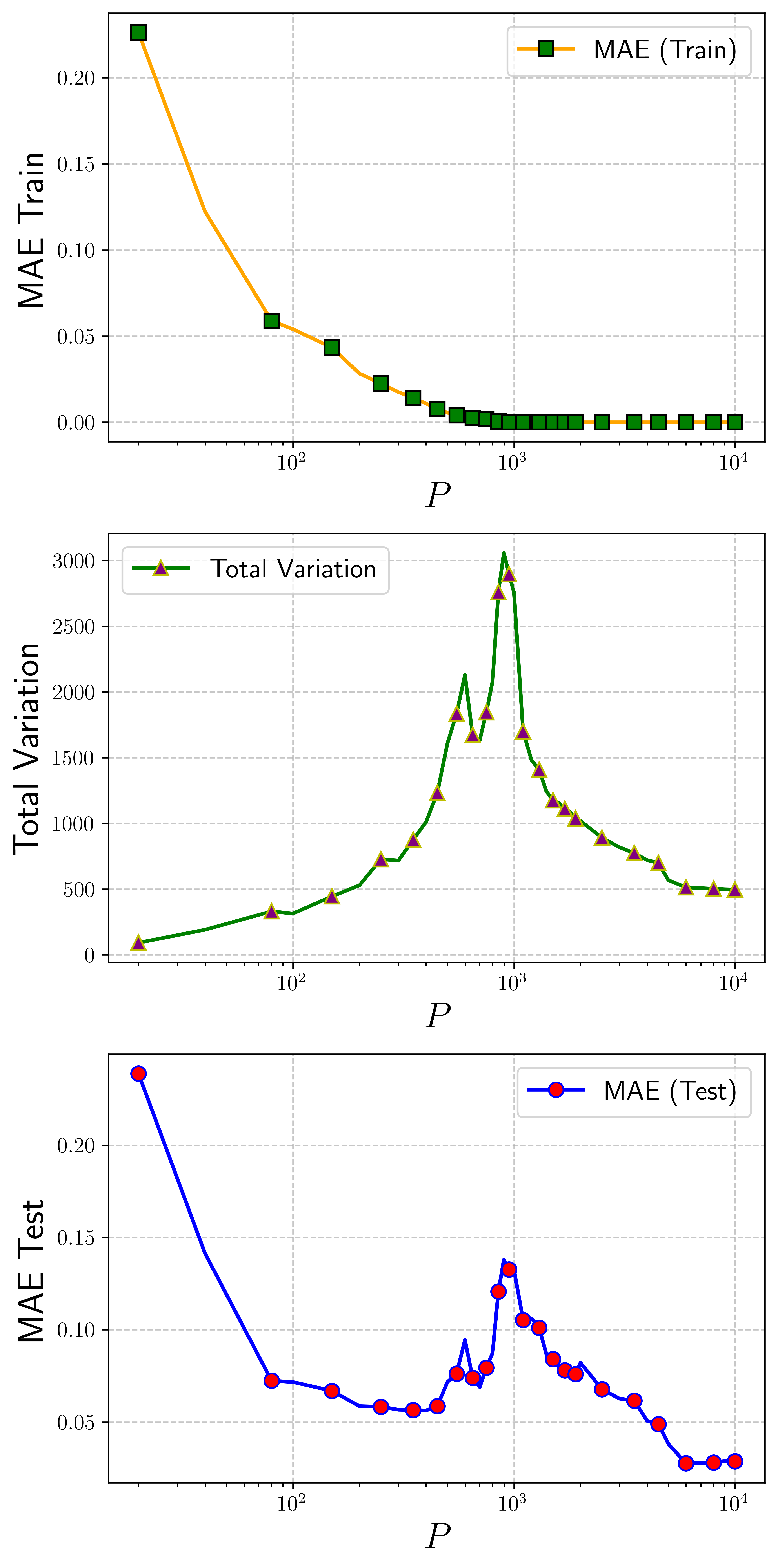}
        \caption{Noise std = 0.1.}

    \end{subfigure}
    \hspace{0.01\textwidth} 
    \begin{subfigure}[b]{0.31\textwidth}
        \centering
        \includegraphics[width=\textwidth]{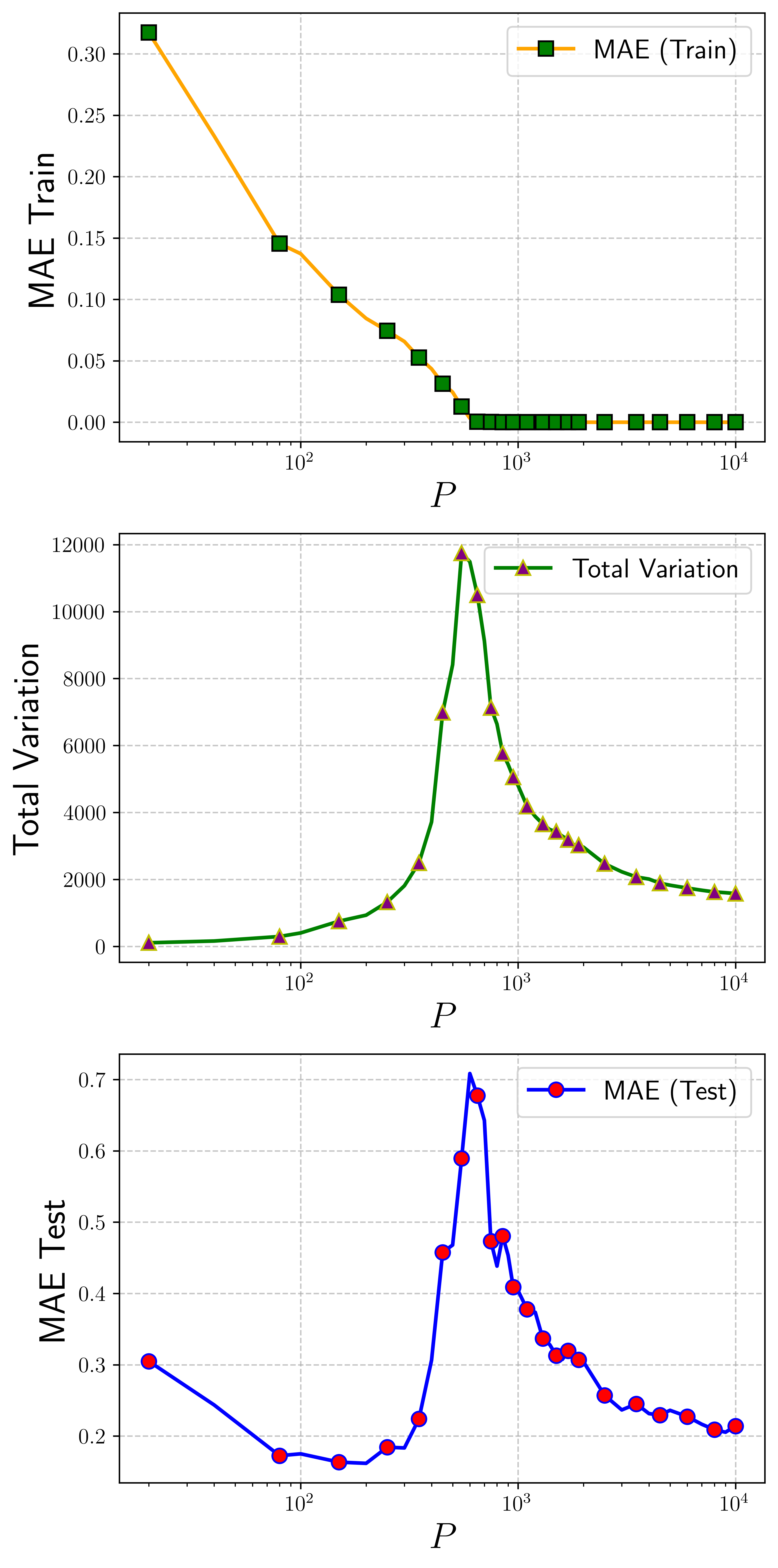}
        \caption{Noise std = 0.22.}
    \end{subfigure}
    \hspace{0.01\textwidth} 
    \begin{subfigure}[b]{0.31\textwidth}
        \centering
        \includegraphics[width=\textwidth]{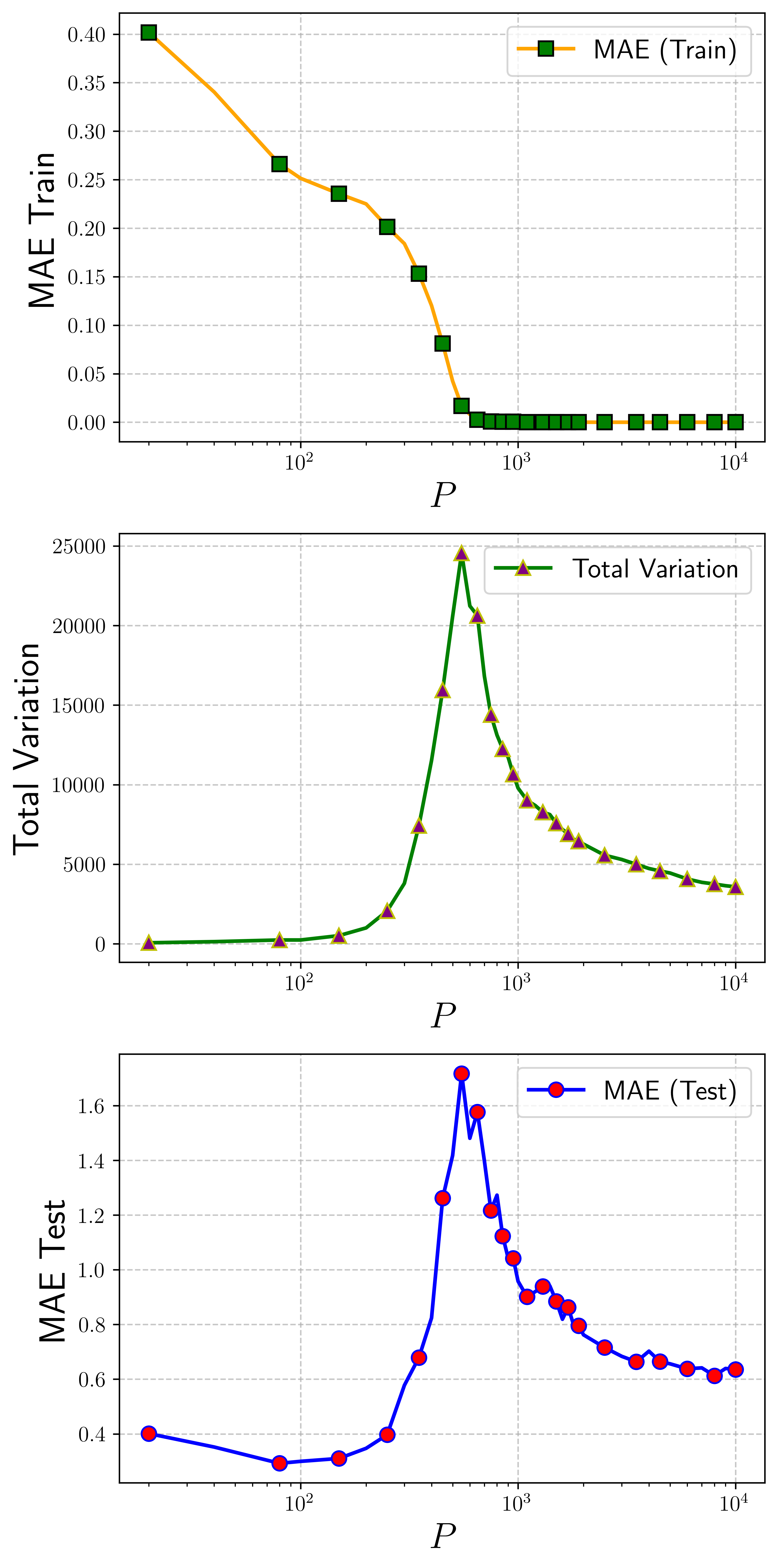}
        \caption{Noise std = 0.4.}
    \end{subfigure}
    \caption{Curves of MAE$_{\text{train}}(P)$, TV$(P)$, and MAE$_{\text{test}}(P)$ for the datasets (1000 points) in Figure \ref{fig_data}, corresponding to three noise levels: small (left), moderate (middle), and large (right). Each scenario consists of 500 points for training and 500 points for testing ($N=N'=500$).}
    \label{fig:Double_descent}
\end{figure}

\section{Technical proofs}\label{sec:proof}
\begin{proof}[Proof of Theorem \ref{thm:NN_exists}]
     Fix any dataset $\{(x_i, y_i) \in \R^{d+1}\}_{i=1}^N$ such that $x_i\neq x_j$ for $i\neq j$.
     By the formula of shallow NNs \eqref{eq:shallow}, it suffices to prove the conclusion for $P=N$ (if $P>N$, we can take $\omega_j=0$ for $j>N$). 
     Let us prove it by induction. 

\medskip
   \noindent\textbf{Step 1} (Base step).
   For $N=1$, since $\Omega$ contains a ball centered at $0$ in $\R^{d+1}$, there exists $(a_1,b_1)\in \Omega$ such that $\langle a_1, x_1 \rangle + b_1 >0$. As a consequence, $\sigma(\langle a_1, x_1 \rangle + b_1 )\neq 0$. Taking
   \begin{equation*}
       \omega_1 = \frac{y_1}{\sigma(\langle a_1, x_1 \rangle + b_1 )},
   \end{equation*}
   it is easy to verify that $ (\omega_1,a_1,b_1)$
   is a solution for the base step.

\medskip
\noindent\textbf{Step 2} (Induction step).
   Let us assume that the conclusion holds for some $N \geq 1$. Then, consider a new data point $(x_{N+1},y_{N+1})\in \R^{d+1}$ such that $x_{N+1}\neq x_i$ for $i=1,\ldots, N$. 
   By Milman's Theorem \cite[Thm.\@ 3.25]{rudin91functional}, we have that $ \text{Ext} (\conv (\{x_1,\ldots, x_{N+1}\}))\subseteq  \{x_1,\ldots,x_{N+1}\}$. Let $x_{i^{*}} \in  \text{Ext} (\conv (\{x_1,\ldots, x_{N+1}\})) $ with $i^{*}\in \{1,\ldots, N+1\}$. It follows that 
      \begin{equation}\label{eq:conv}
          x_{i^{*}} \notin \conv (\{x_1, \ldots, x_{i^{*}-1},x_{i^{*}+1},\ldots, x_{N+1} \}).
      \end{equation}
   By assumption, there exists $(\omega_j,a_j,b_j)\in \R\times \Omega$ with $j\neq i^{*}$ such that 
    \begin{equation}\label{eq:control_N}
        \sum_{j\neq i^{*}} \omega_{j} \sigma(\langle  a_j, x_i \rangle + b_j) = y_i, \quad \text{for }i\neq i^{*}.
    \end{equation}
    Since $\Omega$ contains a ball centered at $0$ in $\R^{d+1}$, by the Hahn-Banach theorem, we deduce from \eqref{eq:conv} that there exists $(a_{i^{*}},b_{i^*})\in \Omega$ such that
   \begin{align*}
       \langle a_{i^*} , x_i \rangle + b_{i^*} & < 0, \quad i \neq i^{*};\\
      \langle a_{i^*} , x_{i^{*}} \rangle + b_{i^*} & > 0.
   \end{align*}
   It follows from Assumption \ref{ass:sigma}(1) that
   \begin{align}
         \sigma (\langle a_{i^*} , x_i \rangle -b_{i^*} ) &=0 , \quad i \neq i^{*}; \label{eq:Hahn-Banach1} \\
         \sigma (\langle a_{i^*} , x_{i^{*}} \rangle + b_{i^*}) &> 0. \label{eq:Hahn-Banach2}
   \end{align}
   Next, we take 
   \begin{equation}\label{eq:omega_new}
       \omega_{i^{*}} = \frac{y_{i^{*}} -  \sum_{j\neq i^{*}} \omega_{j} \sigma(\langle  a_j, x_{i^{*}} \rangle + b_j)}{\sigma (\langle a_{i^*} , x_{i^{*}} \rangle + b_{i^*})}.
   \end{equation}
   We deduce from \eqref{eq:control_N}-\eqref{eq:omega_new} that
   \begin{equation*}
        \sum_{j=1}^{N+1} \omega_{j} \sigma(\langle  a_j, x_i \rangle + b_j) = y_i, \quad \text{for }i = 1,\ldots, N+1.
    \end{equation*}
   The conclusion follows by induction.
\end{proof}

\begin{proof}[Proof of Theorem \ref{thm:generalization}]
\textbf{Step 1} (Decomposition).
     By the triangle inequality, we have
    \begin{equation*}
        d_{\textnormal{KR}}(m_{\text{test}}, m_{\text{pred}}) \leq d_{\textnormal{KR}}(m_{\text{test}}, m_{\text{train}}) + d_{\textnormal{KR}}(m_{\text{train}}, m_{\text{auxi}}) + d_{\textnormal{KR}}(m_{\text{auxi}}, m_{\text{pred}}),
    \end{equation*}
    where $m_{\text{auxi}}$ is an auxiliary distribution defined as:
    \begin{equation*}
        m_{\text{auxi}} = \frac{1}{N}\sum_{i=1}^N \delta_{(x_i,f_{\text{shallow}}(x_i, \Theta))}.
    \end{equation*}
    The term $d_{\textnormal{KR}}(m_{\text{test}}, m_{\text{train}})$ is independent of $\Theta$. Let us estimate the other two terms in the following steps. 
    
    \medskip
    \noindent\textbf{Step 2} (Estimate on $d_{\textnormal{KR}}(m_{\text{train}}, m_{\text{auxi}})$).
    By the definition of $d_{\textnormal{KR}}$, we have 
    \begin{equation*}
        d_{\textnormal{KR}}(m_{\text{train}}, m_{\text{auxi}}) = \sup_{F\in \text{Lip}_1(\R^{d+1}) } \frac{1}{N} \sum_{i=1}^N \left( F(x_i,y_i) - F(x_i, f_{\text{shallow}} (x_i, \Theta)) \right),
    \end{equation*}
    Since $F$ is $1$-Lipschitz, it follows that
   \begin{equation*}
        d_{\textnormal{KR}}(m_{\text{train}}, m_{\text{auxi}}) \leq \frac{1}{N} \sum_{i=1}^N |y_i- f_{\text{shallow}} (x_i, \Theta)|.
    \end{equation*}

    \medskip
    \noindent\textbf{Step 3} (Estimate on $d_{\textnormal{KR}}(m_{\text{auxi}}, m_{\text{pred}})$).
     By their definitions, $ m_{\text{auxi}}$ and $ m_{\text{pred}}$ can be re-written as:
    \begin{equation*}
        m_{\text{auxi}} = (id,\, f_{\text{shallow}} (\cdot, \Theta))\# m_X, \qquad  m_{\text{pred}} = (id,\, f_{\text{shallow}} (\cdot, \Theta))\# m_{X'},
    \end{equation*}
    where $id$ is the identity mapping in $\R^d$ and $\#$ denotes the push-forward operator. For any function $F \in \text{Lip}_1(\R^{d+1})  $, we obtain that
    \begin{align*}
        \int_{\R^{d+1}} F \, d  (m_{\text{auxi}} - m_{\text{pred}}) & = 
        \int_{\R^d} F\circ  (id,\,f_{\text{shallow}} (\cdot, \Theta)) \, d  (m_X - m_{X'} ) 
        \\
        & \leq \text{Lip}\left(F\circ  (id,\,f_{\text{shallow}} (\cdot, \Theta)) \right)\, d_{\textnormal{KR}} (m_X, m_{X'}) \\
        & \leq (1+\text{Lip}(  f_{\text{shallow}} (\cdot, \Theta)))\, d_{\textnormal{KR}} (m_X, m_{X'}),
    \end{align*}
    where the notation $\operatorname{Lip}(h(\cdot))$ denotes the Lipschitz constant of the function $h(\cdot)$. Since $F$ is arbitary, we deduce that 
    \begin{equation*}
        d_{\textnormal{KR}}(m_{\text{auxi}}, m_{\text{pred}}) \leq (1+\text{Lip}(  f_{\text{shallow}} (\cdot, \Theta)))\, d_{\textnormal{KR}} (m_X, m_{X'}).
    \end{equation*}
Recall the formula of $f_{\text{shallow}} (\cdot, \Theta)$ from \eqref{eq:shallow}. We obtain that for any $x_1,x_2\in \R^d$,
\begin{equation*}
\begin{split}
     |f_{\text{shallow}} (x_1, \Theta) - f_{\text{shallow}} (x_2, \Theta)| & \leq \sum_{j=1}^P | \omega_j  | \, |\sigma(\langle a_j,x_1\rangle + b_j) - \sigma(\langle a_j,x_2\rangle + b_j)| \\ 
    & \leq  L \|x_1-x_2\| \sum_{j=1}^P | \omega_j  | \|  a_j \|,
\end{split}
\end{equation*}
where the second line is from the Lipschitz continuity of $\sigma$. The conclusion follows.
\end{proof}

\begin{proof}[Proof of Lemma \ref{lm:non-empty}]
    It suffices to prove that the admissible set of problem \eqref{pb:NN_M_eq} is non-empty. By Theorem \ref{thm:NN_exists_0}, the following problem,
    \begin{equation}\label{pb:NN_epsilon_inter}
     \inf_{\mu\in \mathcal{M}(\Omega)} \|\mu\|_{\text{TV}}, \quad \text{s.t. }  \|\phi \, \mu - Y\|_{\ell^{\infty}} \leq \epsilon - V_0 LD_X h(\Omega_M),
\end{equation}
    has a solution in the form $\mu^{*} = \sum_{j=1}^N \omega^{*}_j \delta_{(a^{*}_j,b^*_j)}$. By the definition of $h(\Omega_M)$, there exists $\{(\bar{a}_j,\bar{b}_j) \in \Omega_{M}\}_{j=1}^N$ such that 
    \begin{equation*}
        \| (\bar{a}_j,\bar{b}_j) - (a^{*}_j,b^*_j)\| \leq h(\Omega_M), \quad \text{for } j=1,\ldots, N.
    \end{equation*}
As a consequence,
    \begin{equation*}
       | \sigma(\langle \bar{a}_j, x_i\rangle + \bar{b}_j) - \sigma(\langle {a}_j^*, x_i\rangle + {b}_j^*) | \leq L D_X h(\Omega_M), \quad \text{for }j=1,\ldots, N;\; i= 1,\ldots, N.
    \end{equation*}
    Let us define $ \bar{\mu} = \sum_{j=1}^N \omega^{*}_{j}\delta_{( \bar{a}_j,\bar{b}_j)}$. Then, for any $i =1,\ldots, N$,
\begin{equation*}
    |\phi_i\, ( \bar{\mu} - \mu^{*})|  = \left| \sum_{j=1}^N \omega_j^{*} ( \sigma(\langle \bar{a}_j, x_i\rangle + \bar{b}_j) - \sigma(\langle a_j^*, x_i\rangle + b_j^*)) \right| \leq L D_X h(\Omega_M)\,  \text{val}\eqref{pb:NN_epsilon_inter} \leq  V_0 L D_X h(\Omega_M). 
    \end{equation*}
     Since $\text{supp}(\bar{\mu}) \subseteq \Omega_M$, recalling the definition of A from \eqref{eq:A}, there exists $\bar{\omega}\in \R^M$ such that $A\, \bar{\omega} = \phi\, \bar{\mu}$. Applying the triangle inequality, we obtain that
    \begin{equation*}
        \|A \, \bar{\omega} - Y \|_{\ell^{\infty}} = \| \phi\, \bar{\mu} - Y \|_{\ell^{\infty}} \leq \|\phi\, \mu^{*} - Y \|_{\ell^{\infty}} + \|\phi\, (\bar{\mu} - \mu^{*}  ) \|_{\ell^{\infty}} \leq \epsilon.
    \end{equation*}
    The conclusion follows.
\end{proof}

\begin{proof}[Proof of Theorem \ref{thm:discrete}]
\textbf{Step 1} (Proof of \eqref{eq:regression_dis}). The left side of \eqref{eq:regression_dis} is obvious by the definition of \eqref{pb:NN_reg_M_eq}. By Theorem \ref{thm:NN_exists_0}, let the empirical measure $\mu^{*} = \sum_{j=1}^N \omega^{*}_j \delta_{(a^{*}_j,b^*_j)}$  be a solution of \eqref{pb:NN_reg_rel}. Take $\{(\bar{a}_j,\bar{b}_j) \in \Omega_{M}\}_{j=1}^N$ such that $ \| (\bar{a}_j,\bar{b}_j) - (a^{*}_j,b^*_j)\| \leq h(\Omega_M)$ for $j=1,\ldots, N$, and let $ \bar{\mu} = \sum_{j=1}^N \omega^{*}_{j}\delta_{( \bar{a}_j,\bar{b}_j)}$. Recalling the objective function of \eqref{pb:NN_reg_rel}, by the triangle inequality, we have
\begin{equation*}
\begin{split}
     \|\bar{\mu}\|_{\text{TV}} + \frac{\lambda}{N} \left\| \phi\, \bar{\mu} - Y\right\|_{\ell^1} & \leq \|\bar{\mu}\|_{\text{TV}} + \frac{\lambda}{N} \left\| \phi\, \mu^{*} -Y  \right\|_{\ell^1} + \frac{\lambda}{N} \left\| \phi\, (\bar{\mu}-\mu^{*}) \right\|_{\ell^1}\\
     & = \text{val}\eqref{pb:NN_reg_rel} + \frac{\lambda}{N} \left\| \phi\, (\bar{\mu}-\mu^{*}) \right\|_{\ell^1},
\end{split}
\end{equation*}
where the second line is by definitions of $\bar{\mu}$ and $\mu^{*}$. Moreover, we have
\begin{equation*}
\begin{split}
     \left\| \phi\, (\bar{\mu}-\mu^{*}) \right\|_{\ell^1} &= \sum_{i=1}^N \left| \sum_{j=1}^N \omega_j^{*} (\sigma(\langle \bar{a}_j, x_i\rangle + \bar{b}_j) - \sigma(\langle {a}_j^*, x_i\rangle + {b}_j^*)) \right| \leq N  L D_X h(\Omega_M) \|\omega^{*}\|_{\ell^1}.
\end{split}
\end{equation*}
    Combining with the fact that $ \|\omega^{*}\|_{\ell^1} = \|\mu^{*}\|_{\text{TV}} \leq \text{val}\eqref{pb:NN_reg_rel}$, we obtain that
    \begin{equation*}
          \|\bar{\mu}\|_{\text{TV}} + \frac{\lambda}{N} \left\| \phi\, \bar{\mu} - Y \right\|_{\ell^1} \leq (1+ \lambda L D_X h(\Omega_M) ) \text{val}\eqref{pb:NN_reg_rel}.
    \end{equation*}
    Since $\text{supp}(\bar{\mu}) \subseteq \Omega_M$, the right side of \eqref{eq:regression_dis} follows.

\medskip
\noindent\textbf{Step 2} (Proof of \eqref{eq:epsilon_dis}).
The left side of \eqref{eq:epsilon_dis} is obvious. 
By the proof of Lemma \ref{lm:non-empty}, we have
\begin{equation*}
    \text{val}\eqref{pb:NN_M_eq} \leq \text{val}\eqref{pb:NN_epsilon_inter}.
\end{equation*}
Let $V\colon \R_{+} \to \R$, $\epsilon \mapsto \textnormal{val}\eqref{intro_pb:NN_epsilon}$. By Lemma \ref{lem:danskin}, $V$ is convex. Therefore,
\begin{equation*}
    \text{val}\eqref{pb:NN_epsilon_inter} = V(\epsilon- V_0 LD_X h(\Omega_M)) \leq V(\epsilon) - V_0 LD_X h(\Omega_M) \, p,\quad \text{with }\, p \in \partial V(\epsilon - V_0 LD_X h(\Omega_M)).
\end{equation*}
By \eqref{eq:right_derivative}, we have $V'(0^{+}) = - c_0$. The convexity of $V$ implies that $p\geq -c_0$. Substituting it to the inequality above, we obtain the conclusion.
\end{proof}

\begin{proof}[Proof of Lemma \ref{lm:lp}]
    Points (1) and (2) can be deduced from the definition of linear programming problems \eqref{pb:lp} and \eqref{pb:lp2} by a contradiction argument. 
 
 Let us prove \eqref{eq:extreme_lp}.
     Suppose that \((u^+, u^-)\) is a basic feasible solution of \eqref{pb:lp} and \(\| u^+ - u^- \|_{\ell^0} > N\). Let \(\omega = u^+ - u^-\). We first prove by contradiction that 
\begin{equation}\label{eq:u_w}
    u^+ = (\omega)_{+}, \quad u^- = (\omega)_{-}.
\end{equation}
Assume the previous equality does not hold. Then, there exists some \(j \in \{1, \ldots, M\}\) such that \(u^+_j > 0\) and \(u^-_j > 0\). In this case, we obtain a strictly better solution with respect to \(u^+, u^-\) by replacing \(u^+_j\) and \(u^-_j\) with \(u^+_j - \min\{u^+_j, u^-_j\}\) and \(u^-_j - \min\{u^+_j, u^-_j\}\), respectively. Thus, \eqref{eq:u_w} holds.

Since \(\|\omega\|_{\ell^0} > N\), by the proof of Lemma \ref{lem:monotone}, there exists \(0 \neq \eta \in \mathbb{R}^M\) such that \(A \, \eta = 0\) and \(|\eta_j| \leq |\omega_j|\) for all \(j = 1, \ldots, M\). Consequently, \(((\omega + \eta)_+, (\omega + \eta)_-)\) and \(((\omega - \eta)_+, (\omega - \eta)_-)\) are in the feasible set of \eqref{pb:lp}. Moreover, \(\omega + \eta\) and \(\omega - \eta\) have the same sign as \(\omega\) for each coordinate. Applying \eqref{eq:u_w}, we have
\[
    u^+ = (\omega)_+ = \frac{(\omega + \eta)_+ + (\omega - \eta)_+}{2}, \quad 
    u^- = (\omega)_- = \frac{(\omega + \eta)_- + (\omega - \eta)_-}{2}.
\]
Since \(\eta \neq 0\), this contradicts the assumption that \((u^+, u^-)\) is a basic feasible solution. Hence, \(\|u^+ - u^-\|_{\ell^0} \leq N\). The inequality \(\|v^+ - v^-\|_{\ell^0} \leq N\) follows from the same proof.
\end{proof}

\begin{proof}[Proof of Lemma \ref{lem:monotone}]
Suppose that $\Theta^k = (\omega^k,a^k,b^k)$ is not the output of Algorithm \ref{alg1}.
   By \eqref{eq:alg_sparse_1}-\eqref{eq:alg_sparse_2} in Algorithm \ref{alg1}, we obtain $\hat{\omega}^k$ and $\bar{\omega}^k$, and it follows that 
   \begin{equation*}
       A^k \hat{\omega}^k = A^k \bar{\omega}^k = A^k {\omega}^k.
   \end{equation*}
   Updating $\Theta^{k+1}$ by \eqref{eq:alg_sparse_3}, we have
   \begin{equation*}
        f_{\textnormal{shallow}}(x_i, \Theta^{k+1}) = f_{\textnormal{shallow}}(x_i, \Theta^k),\quad \text{for }i=1,\ldots, N; \quad \|\omega^{k+1}\|_{\ell^1} \leq \|\omega^{k}\|_{\ell^1}.
   \end{equation*}
   Recalling $\alpha$ and $\beta$ from \eqref{eq:alg_sparse_1}, it follows that
   \begin{equation*}
       1\pm \beta \alpha_j \geq 0,\quad \text{for } j=1,\ldots, p^k.
   \end{equation*}
   This implies that
    \begin{align*}
      \|\hat{\omega}^{k+1}\|_{\ell^1} = \sum_{j=1}^{p_k} |\hat{\omega}^{k+1}_j| = \sum_{j=1}^{p_k} (1+\beta \alpha_j)|\omega^{k}_j| = \|\omega^k\|_{\ell^1} + \beta \sum_{j=1}^{p_k} \alpha_j |\omega_j^k|.\\
       \|\bar{\omega}^{k+1}\|_{\ell^1} = \sum_{j=1}^{p_k}|\bar{\omega}^{k+1}_j| = \sum_{j=1}^{p_k} (1-\beta \alpha_j)|\omega^{k}_j| = \|\omega^k\|_{\ell^1} - \beta \sum_{j=1}^{p_k} \alpha_j |\omega_j^k|.
  \end{align*}
  It follows that $\|\omega^k\|_{\ell^1} = (\|\hat{\omega}^k\|_{\ell^1}+\|\bar{\omega}^k\|_{\ell^1})/2$.
  If $\|\omega^{k+1}\|_{\ell^1} = \|\omega^k\|_{\ell^1}$, by \eqref{eq:alg_sparse_3}, we have $\|\omega^k\|_{\ell^1} = \|\hat{\omega}^k\|_{\ell^1}= \|\bar{\omega}^k\|_{\ell^1}$. Again, from \eqref{eq:alg_sparse_3}, it follows that $\omega^{k+1} = \hat{\omega}^k$. Given the construction of $\hat{\omega}^k$, it is evident that $\|\hat{\omega}^k\|_{\ell^0} \leq \|\omega^k\|_{\ell^0} - 1$. Thus, the conclusion is drawn.
\end{proof}

\begin{proof}[Proof of Proposition \ref{prop:L2}]
\textbf{Step 1} (Linear independence of \(\{\phi_i\}_{i=1}^N\)).  Suppose, for contradiction, that $\{\phi_i\}_{i=1}^N$ are linearly dependent. Then there exists $(\alpha_1, \ldots, \alpha_N) \neq 0$ such that
\[
\sum_{i=1}^N \alpha_i \, \phi_i(a,b) = 0 \quad \text{for all } (a,b) \in \Omega.
\]
For any $(\omega_j, a_j, b_j) \in \mathbb{R} \times \Omega$, $j=1, \ldots, N$, it follows that
\[
\sum_{j=1}^N \omega_j \sum_{i=1}^N \alpha_i \, \phi_i(a_j, b_j) = \sum_{i=1}^N \alpha_i \sum_{j=1}^N \omega_j \, \phi_i(a_j, b_j) = 0.
\]
By Theorem \ref{thm:NN_exists}, setting $y_i = \alpha_i$ for all $i$, there exists $(\omega_j, a_j, b_j)$ such that
\[
\sum_{j=1}^N \omega_j \, \phi_i(a_j, b_j) = \alpha_i \quad \text{for all } i = 1, \ldots, N.
\]
Substituting into the previous equation yields $\|\alpha\|^2 = 0$, contradicting $\alpha \neq 0$. Thus, $\{\phi_i\}_{i=1}^N$ are linearly independent. As a consequence, the Gram matrix $Q$ of $\{\phi_i\}_{i=1}^N$ is strictly positive definite.

\medskip
\noindent\textbf{Step 2} (Proof of \eqref{eq:sol_L2_reg}). The first-order optimality condition for \eqref{pb:L2} is given by
\[
\mu = -\frac{\lambda}{N} \sum_{i=1}^N \left( \langle \phi_i, \mu \rangle_{L^2} - y_i \right) \phi_i.
\]
Assuming \(\mu = \sum_{i=1}^N \omega_i \phi_i\) and substituting into the above equation, we obtain
\[
\omega_i = -\frac{\lambda}{N} \sum_{j=1}^N \left( \langle \phi_i, \phi_j \rangle_{L^2} \, \omega_j - y_i \right), \quad \text{for all } i,
\]
where we have used the linear independence of \(\{\phi_i\}_{i=1}^N\). Recalling the definition of the Gram matrix \(Q\), equation \eqref{eq:sol_L2_reg} follows immediately.

\medskip
\noindent\textbf{Step 3} (Proof of \eqref{eq:sol_L2_exact}). Since $Q$ is invertible (proved in step 1), the convergence of $\mu^*_{\lambda}$ to $\mu^*$ is straightforward. The Lagrangian associated with the exact representation counterpart of problem \eqref{pb:L2} is given by:
for \( (\mu, p) \in L^2(\Omega) \times \mathbb{R}^N \),
\begin{equation*}
    \mathcal{L}(\mu, p) = \|\mu\|_{L^2(\Omega)}^2 + \sum_{i=1}^N p_i \big( \langle \phi_i, \mu \rangle_{L^2(\Omega)} - y_i \big).
\end{equation*}
The optimality condition yields:
\begin{equation*}
    \begin{cases}
        \frac{\partial \mathcal{L}}{\partial \mu}(\mu, p) = 0, \\[0.6em]
        \frac{\partial \mathcal{L}}{\partial p}(\mu, p) = 0,
    \end{cases} \qquad \Rightarrow \qquad  \begin{cases}
        \mu = -\frac{1}{2} \sum_{i=1}^N p_i \phi_i, \\[0.6em]
        \langle \phi_i, \mu \rangle_{L^2(\Omega)} = y_i, \quad \text{for } i = 1, \dots, N.
    \end{cases}
\end{equation*}
Substituting the expression for \( \mu \) into the second equation gives:
\begin{equation*}
    -\frac{1}{2} Q p = Y.
\end{equation*}
Since \( Q \) is invertible, we solve for \( p \):
\begin{equation*}
    p = -2 Q^{-1} Y.
\end{equation*}
Substituting \( p \) back into the expression for \( \mu\), we deduce that $\mu = \mu^*$ satisfies the optimality condition. By the convexity, $\mu^*$ is the solution of the exact representation problem. 
\end{proof}

\begin{proof}[Proof of Proposition \ref{prop:double_descent}]
Since \(\{x_i\}_{i=1}^{N}\) are disjoint, we can prove by induction that there exist \(\{(a_i^*, b_i^*) \in \Omega\}_{i=1}^{N}\) and a permutation \(s\) of \(\{1, \ldots, N\}\) such that
\[
\langle a_{s(j)}^*, x_{s(i)} \rangle + b_{s(j)}^* < 0 \quad \text{if } j < i, \quad \text{and} \quad \langle a_{s(j)}^*, x_{s(i)} \rangle + b_{s(j)}^* > 0 \quad \text{if } j \geq i.
\]
Moreover, for any \(\{(a_i, b_i) \in \Omega\}_{i=1}^{N}\) satisfies the previous inequalities (possibly with different $s$), the functions \( \{\sigma(\langle a_i, x \rangle + b_i)\}_{i=1}^N \) can exactly interpolate the training data by appropriately choosing the weight \( \omega \).  

Since \( \langle a, x \rangle + b \) is continuous with respect to \( (a, b) \), it follows that a sufficiently small perturbation of \( (a, b) \) does not change the sign of \(\langle  a, x \rangle + b\) if it is non-zero. In this context, there exists $\delta>0$ such that if
\begin{equation}\label{eq:P0}
    \{(\bar{a}_j, \bar{b}_j)\}_{j=1}^{P} \cap B((a_i^*, b_i^*), \delta) \neq \emptyset \quad \text{for all } i = 1, \ldots, N,
\end{equation}
then \(\{\sigma(\langle \bar{a}_j,x\rangle +\bar{b}_j)\}_{j=1}^P\) can exactly interpolate the data points \((x_i, y_i)\) for all \(i\).

For any $i=1,\ldots,N$ and any $j\geq 1$, let \(A_{j}^i\) denote the following event:
\[
A_{j}^i = \left\{ (\bar{a}_j, \bar{b}_j) \in B((a_i^*, b_i^*), \delta) \right\}.
\]
Since the lower density at the point $(a_i^*, b_i^*)$ is strictly positive, then there exists $c>0$ such that the probability of $A_j^i$ satisfies \(\mathbb{P}(A_j^i) \geq c\, \delta^d\). Consequently,
\[
\sum_{j \geq 1} \mathbb{P}(A_j^i) = \infty.
\]
Since the events \(\{A_j^i\}_{j \geq 1}\) are independent, the second Borel-Cantelli Lemma implies that, almost surely, there exists some \(j\) such that \(A_j^i\) occurs. As \(i\) ranges over the finite set \(\{1, \ldots, N\}\), we conclude that there exists, almost surely, a threshold \(P_0\) such that condition \eqref{eq:P0} holds for all \(P \geq P_0\). Point (1) follows.

Point (2) is straightforward since the set \( \Omega_P \) is increasing, and we have an exact representation of the training set when \( P \geq P_0 \).

For point (3), when \( P \geq P_0 \), problem \eqref{pb:random_feature} is equivalent to \eqref{pb:NN_exact_rel}, with \( \Omega \) replaced by \( \Omega_P \). As \( \Omega_P \) grows with \( P \), the objective value decreases accordingly but remains greater than \( \text{val}(\eqref{pb:NN_exact_rel}) \) since \( \Omega_P \subseteq \Omega \).
On the other hand, consider a solution of \eqref{pb:NN_exact_rel} in the form 
\(
\sum_{j=1}^N \tilde{\omega}_j \delta_{(\tilde{a}_j, \tilde{b}_j)}.
\)
By an argument similar to the proof of point (1), we can show that each \( (\tilde{a}_j, \tilde{b}_j) \) can be approximated almost surely by points in \( \Omega_P \) as \( P \to \infty \).
The convergence of TV$(P)$ (val\eqref{pb:random_feature}) follows.
\end{proof}

\section{Conclusion and perspectives}\label{sec:conclusion}
\textit{Conclusion}.
In this paper, we have presented and analyzed  three large-scale non-convex optimization problems related to shallow neural networks: \eqref{intro_pb:NN_exact}, \eqref{intro_pb:NN_epsilon}, and \eqref{intro_pb:NN_reg}. By applying the mean-field relaxation technique, we successfully convexified them in  an infinite-dimensional measure setting. Leveraging a representer theorem, we demonstrated the absence of relaxation gaps when the number of neurons exceeds the number of training data. Additionally, we explored a generalization bound, providing a qualitative analysis of the optimal hyperparameters, which offers significant guidance in their selection.

On the computational side, we  introduced a discretization of the relaxed problems, obtaining error estimates. We established that the discretized problems are equivalent to linear programming problems, which can be efficiently addressed using the simplex method,  benefiting the primal problems. However, when the feature dimensionality is high, this discretization suffers from the curse of dimensionality. To mitigate this, we proposed a sparsification algorithm, which,  combined with gradient descent-type algorithms for overparameterized shallow NNs, allows to obtain effective solutions for the primal problems. Our numerical experiments validate the efficacy of our analytical framework and computational algorithms.

\medskip
\textit{Perspectives}. The first perspective pertains to Remark \ref{rem:exact-regression2}. The case where $P < N$ is particularly relevant in real-world applications, as it represents scenarios where the dataset is large, but the number of parameters is limited due to model constraints. In this situation, we focus exclusively on the regression problem, since the existence of solutions to the representation problem is not guaranteed. Additionally, there is no equivalence between the primal problem and its relaxation. Nonetheless, by employing the techniques of proof from \cite{bonnans2023large} and \cite{ma2022barron}, we can derive the following results:
\[
    0 \leq \text{val}\eqref{intro_pb:NN_reg} - \text{val}\eqref{pb:NN_reg_rel} \leq \frac{C}{P}, \quad \text{ for } P < N,
\]
where $C$ is a constant independent of $P$ and $N$. Together with \eqref{eq:value_eq} for the case $P \geq N$, we obtain:
\[
     0 \leq \text{val}\eqref{intro_pb:NN_reg} - \text{val}\eqref{pb:NN_reg_rel} \leq \frac{C \, \mathbf{I}_{[1,N)}(P)}{P}, \quad \text{ for } P \in \mathbb{N}_{+},
\]
where $\mathbf{I}_{[1,N)}$ is the indicator function of the set $[1, N)$. The above inequality improves the result in \cite[Prop.\@ 1]{mei2018mean}, where the authors demonstrate only a gap of $\mathcal{O}(1/P)$, without addressing the fact that the gap vanishes when $P$ approaches  $N$. 
For the case $P<N$, numerical algorithms face significant challenges, as there is no equivalence between the primal and relaxed problems.

\medskip
The second perspective refers to the asymptotic behavior of the problems studied in this article as \( N \) approaches infinity. This extension is closely connected to the exact representation of a continuous function by an infinitely wide shallow NN \cite{barron1993universal,ma2022barron}. Specifically, given some \( f \in \mathcal{C}(X) \), where \( X \) is a compact set, the following question arises: does there exist a measure \( \mu \in \mathcal{M}(\R^{d+1}) \) such that
\begin{equation}\label{eq:representation_continuous}
    f(x) = \int_{\R^{d+1}} \sigma(\langle a, x\rangle + b)\, d\mu(a,b), \quad \text{for all } x \in X\, ?
\end{equation}
The existence of such a measure \( \mu \) depends on the regularity of \( f \), the form of the activation function \( \sigma \), and the geometry of the compact set \( X \). To the best of our knowledge, the strongest result is given in \cite{klusowski2018approximation}, where \( X \) is a hypercube, \( \sigma \) is the ReLU function, and the Fourier transform of an extension of \( f \) lies in \( L^1(\mathbb{R}^d, \|\xi\|^2 \, d\xi) \). A sufficient condition for \( f \) to satisfy this requirement is that \( f \in H^{k}(X) \) for some \( k > d/2 + 2 \), which is a very strong regularity assumption (stronger than \( \mathcal{C}^2(X) \)). Whether \eqref{eq:representation_continuous} holds for more general activation functions and for functions \( f \) with less regularity remains an open problem.

We may be able to make progress on this open problem by building on the work presented in this article. To explore this, suppose that the set \( \{ x_i \}_{i \geq 1} \) is dense in \( X \) and that \( y_i \) represents the observation of \( f(x_i) \). Thanks to Theorem \ref{thm:NN_exists_0}, there exists a measure \( \mu_N \in \mathcal{M}_N(\Omega) \), which is a solution to \eqref{pb:NN_exact_rel}, such that the equality in \eqref{eq:representation_continuous} holds for \( x = x_1, \ldots, x_N \).
We conjecture that the value of \eqref{pb:NN_exact_rel}, given by \( \|\mu_N\|_{\text{TV}} \), is controlled by $\|f\|_{L^{\infty}(X)}$ and the Lipschitz constant of \( f \), and is independent of $N$. If this holds, we demonstrate that the sequence \( \{\|\mu_N\|_{\text{TV}}\}_{N \geq 1} \) remains bounded when \( f \) is Lipschitz. Consequently, the sequence \( \{\mu_N\}_{N \geq 1} \) has a weak-* limit, which serves as a solution to \eqref{eq:representation_continuous}. To prove the previous conjecture, we need to conduct a more detailed dual analysis of the relaxed problem \eqref{pb:NN_exact_rel}, as outlined in Appendix \ref{app:dual}.


\medskip
The third perspective is to extend the relaxation method and analysis  in this article to more complex neural network architectures, such as deep multilayer perceptrons, ResNets, and Neural ODEs. The mean-field relaxations of deep NNs pose significant challenges, as discussed in \cite{fernandez2022continuous,nguyen2023rigorous}.
We are particularly interested in investigating the following two problems: 
\begin{itemize}
    \item Representation property of multilayer NNs. A multilayer NNs can be expressed as a composition of shallow networks:
\begin{equation*}
    X(k) = g\big(X(k-1), f_{\text{shallow}} (X(k-1), \Theta(k-1))\big), \quad \text{for } k=1, \ldots, K.
\end{equation*}
If each layer has the same width \(P\), then the \(N\)-sample representation property has been studied in several specific architectures (see \cite{alvarez2024interplay} for the residual structure and \cite{hernandez2024deep} for the case \(g= Id\) and \(P=2\)). In these studies, a critical condition required is:
\begin{equation*}
    KP \geq \alpha  N, \quad \text{with } \alpha \geq 1.
\end{equation*}
Theorem \ref{thm:NN_exists} also falls within this framework, as the total number of neurons must exceed \(N\). 
However, numerous simulations in machine learning suggest that deeper networks exhibit greater expressive power \cite{eldan2016power}. This raises the intriguing possibility of identifying a new criterion such as
\begin{equation*}
    KP \geq \alpha(K)N,
\end{equation*}
where \(\alpha(K) < 1\) for some specific values of \(K\), which would provide significant insights into designing NN architectures.
\item Mean-field relaxation of semi-autonomous Neural ODE. A direct mean-field relaxation of the classical Neural ODE \cite{ruiz2023neural} is challenging, as it requires a relaxation at each time \( t \), resulting in a stochastic process \( (\mu_t)_{t \geq 0} \). In \cite{li2024universal}, the authors introduce the semi-autonomous Neural ODE (SA-NODE) and demonstrate its UAP for dynamic systems. The SA-NODE has a natural formulation for mean-field relaxation as follows:
\begin{equation}\label{eq:SA-NODE}
    \dot{X} = \int_{\mathbb{R}^3} \sigma( a^1 X + a^2 t + b ) \, d\mu(a^1, a^2, b), \quad \text{for } t \in [0, T],
\end{equation}
where we consider only the state $X$ in \( \mathbb{R} \) for simplicity of notation. The mean-field technique provides the advantage that \(\mu\) functions as an affine control in \eqref{eq:SA-NODE}.
The representation property and the UAP, whether for the final targets or the entire trajectory, of the SA-NODE can be characterized as the simultaneous controllability of \eqref{eq:SA-NODE} and the optimal control problems governed by its associated continuity equation, similar to the approach in \cite{ruiz2023control}.
\end{itemize}


\appendix

\section{Dual analysis of representation problems}\label{app:dual}
Recall the definition of $\mathcal{U}(\epsilon)$ from Proposition \ref{prop:lambda} and $C_{X,X'} = d_{\text{KR}}(m_X,m_{X'})$. We are interested in the following optimization problem on the hyperparameter $\epsilon$:
\begin{equation}\label{pb:parameter}\tag{UB}
\inf_{\epsilon\geq 0} \, \mathcal{U}(\epsilon) = \epsilon  + C_{X,X'}\,\text{val}\eqref{intro_pb:NN_epsilon}.
    \end{equation}
To investigate the existence of solutions and the first-order optimality condition of \eqref{pb:parameter}, we need to perform a dual analysis for problems \eqref{intro_pb:NN_epsilon} and \eqref{intro_pb:NN_exact}. Recall the subgradient of the total variation norm \cite[Eq.\@ 9]{duval2015exact}: for any $\mu \in \mathcal{M}(\Omega)$,
\begin{equation*}\label{eq:subgradient}
    \partial \|\mu\|_{\text{TV}} = \left\{ v \in \mathcal{C}(\Omega)\, \Big| \, \int_{\Omega} v (\theta) d\mu (\theta) = \|\mu\|_{\text{TV}} \text{ and } \|v\|_{\mathcal{C}(\Omega)} \leq 1  \right\}.
\end{equation*}

\begin{lem}\label{lm:dual}
Let Assumption \ref{ass:sigma} hold true. Let $\epsilon>0$. The dual problem (in the sense of  Fenchel) of \eqref{pb:NN_epsilon_rel} is
\begin{equation}\label{pb:NN_epsilon_dual}\tag{DR$_\epsilon$}
    \sup_{\|\phi^{*} \, p\|_{\mathcal{C}(\Omega)} \leq 1}  \langle Y, p \rangle - \epsilon \| p\|_{\ell^{1}},
\end{equation}
where $\phi^{*} \colon \R^N \to \mathcal{C}(\Omega)$, $ p\mapsto \sum_{i=1}^N p_i\sigma(\langle a, x_i \rangle + b)$, is the adjoint operator of $\phi$.
Then, the strong duality holds: 
\begin{equation*}
    \textnormal{val}\eqref{pb:NN_epsilon_rel} = \textnormal{val}\eqref{pb:NN_epsilon_dual}.
\end{equation*}
The solution set $S\eqref{pb:NN_epsilon_dual}$ is non-empty, convex, and compact. 
Moreover,  $(\mu^{*}_{\epsilon}, p^{*}_{\epsilon})$ is a couple of solutions of \eqref{pb:NN_epsilon_rel} and  \eqref{pb:NN_epsilon_dual} if and only if
\begin{equation}\label{eq:primal_dual}
    \begin{cases}
        \phi^{*}  \, p^{*}_{\epsilon} \in \partial \|\mu^{*}_{\epsilon}\|_{\textnormal{TV}},\\
        \phi_i\, \mu^{*}_{\epsilon} = Y_i -\epsilon \cdot \textnormal{sign}((p^{*}_{\epsilon})_i ) ,\quad \textnormal{for all }i \text{ such that }(p^{*}_{\epsilon})_i \neq 0.
    \end{cases}
\end{equation}
\end{lem}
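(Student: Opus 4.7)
The plan is to apply Fenchel-Rockafellar duality to the decomposition of \eqref{pb:NN_epsilon_rel} as $\inf_{\mu\in \mathcal{M}(\Omega)} F(\mu) + G(\phi\mu)$, where $F(\mu) = \|\mu\|_{\textnormal{TV}}$ on $\mathcal{M}(\Omega)$ (viewed as the dual of $\mathcal{C}(\Omega)$) and $G\colon \R^N \to \R\cup\{+\infty\}$ is the indicator function of the $\ell^\infty$-ball of radius $\epsilon$ centered at $Y$. First I would compute the Fenchel conjugates. In the $(\mathcal{C}(\Omega), \mathcal{M}(\Omega))$ duality, a standard computation gives $F^{*}(v) = 0$ if $\|v\|_{\mathcal{C}(\Omega)} \leq 1$ and $+\infty$ otherwise (the support-function characterization of the TV norm). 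On the finite-dimensional side, $G^{*}(q) = \sup_{\|z-Y\|_{\ell^\infty}\le\epsilon} \langle q,z\rangle = \langle q, Y \rangle + \epsilon \|q\|_{\ell^1}$. Since $\phi^{*}$ as introduced is precisely the adjoint of $\phi$ for this pairing, substituting into the Fenchel-Rockafellar dual $\sup_{p\in \R^N} -F^{*}(\phi^{*} p) - G^{*}(-p)$ produces exactly \eqref{pb:NN_epsilon_dual}.

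Next I would establish strong duality via a qualification condition. Because $\epsilon>0$, the point $Y$ lies in the interior of $\operatorname{dom}(G)$, and Theorem \ref{thm:NN_exists_0} (using the $N$-sample representation property of Theorem \ref{thm:NN_exists}) provides a $\mu_0 \in \mathcal{M}(\Omega)$ with $\phi\mu_0 = Y$. Thus $G$ is continuous at $\phi\mu_0$ and $F$ is proper convex, triggering the Fenchel-Rockafellar theorem (formulated for a dual pair of Banach spaces) and yielding $\textnormal{val}\eqref{pb:NN_epsilon_rel} = \textnormal{val}\eqref{pb:NN_epsilon_dual}$ with attainment of the dual.

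For existence, convexity, and compactness of $S\eqref{pb:NN_epsilon_dual}$: the dual objective $p\mapsto \langle Y,p\rangle - \epsilon\|p\|_{\ell^1}$ is concave and continuous, and the feasible set $K = \{p : \|\phi^{*}p\|_{\mathcal{C}(\Omega)} \leq 1\}$ is closed, convex, and contains $0$. Since $\epsilon>0$, the objective is coercive on $\R^N$ (the term $-\epsilon\|p\|_{\ell^1}$ dominates $\langle Y,p\rangle$ for large $p$), so a supremum is attained. The solution set is convex (maximizers of a concave function), closed (a level set of a continuous concave function), and bounded by the same coercivity estimate, hence compact.

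The optimality condition \eqref{eq:primal_dual} comes from the extremality relations of Fenchel-Rockafellar: $(\mu^{*}_\epsilon, p^{*}_\epsilon)$ is primal-dual optimal iff $\phi^{*} p^{*}_\epsilon \in \partial F(\mu^{*}_\epsilon) = \partial \|\mu^{*}_\epsilon\|_{\textnormal{TV}}$ and $-p^{*}_\epsilon \in \partial G(\phi\mu^{*}_\epsilon)$. Since $G$ is the indicator of the product $\prod_i [Y_i-\epsilon, Y_i+\epsilon]$, its subdifferential at $z$ splits coordinatewise: the normal cone at $z_i$ is $\{0\}$ if $|z_i - Y_i| < \epsilon$, $\R_+$ if $z_i = Y_i+\epsilon$, and $\R_-$ if $z_i = Y_i-\epsilon$. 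Reading this off gives the stated equivalence: whenever $(p^{*}_\epsilon)_i\neq 0$, the $i$-th constraint is active on the correct side and $\phi_i\mu^{*}_\epsilon = Y_i - \epsilon\,\textnormal{sign}((p^{*}_\epsilon)_i)$. The main subtlety I anticipate is justifying Fenchel-Rockafellar in the non-reflexive setting $\mathcal{M}(\Omega) = \mathcal{C}(\Omega)^{*}$; this is resolved by invoking a version of the theorem formulated for dual Banach pairs, together with the interior-point qualification provided by Theorem \ref{thm:NN_exists_0}, which guarantees a primal feasible point sitting strictly inside $\operatorname{dom}(G)$.
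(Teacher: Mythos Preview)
Your proposal is correct and follows essentially the same route as the paper: the same $F+G\circ\phi$ decomposition with $F=\|\cdot\|_{\textnormal{TV}}$ and $G=\chi_{B_\infty(Y,\epsilon)}$, the same conjugate computations, the same qualification via a feasible $\mu_0$ with $\phi\mu_0=Y$ from Theorem~\ref{thm:NN_exists}, and coercivity in $p$ (from the $-\epsilon\|p\|_{\ell^1}$ term) for compactness of $S\eqref{pb:NN_epsilon_dual}$. The only cosmetic difference is that the paper verifies the optimality condition \eqref{eq:primal_dual} by an explicit Fenchel-relation computation (sufficiency then necessity), whereas you read it off directly from the extremality relations $\phi^{*}p^{*}_\epsilon\in\partial F(\mu^{*}_\epsilon)$ and $-p^{*}_\epsilon\in\partial G(\phi\mu^{*}_\epsilon)$ using the coordinatewise normal cone of the box; and the paper resolves the non-reflexive issue by equipping $\mathcal{M}(\Omega)$ with the weak-$*$ topology so that its dual is $\mathcal{C}(\Omega)$, which is exactly the ``dual pair'' formulation you allude to.
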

\begin{proof}
\textbf{Step 1} (Strong duality). 
    Let us rewrite problem \eqref{pb:NN_epsilon_rel} as
    \begin{equation}\label{pb:primal_epsilon}
        \inf_{\mu\in \mathcal{M}(\Omega)} J(\mu) + \chi_{B_{\infty}(Y,\epsilon)} (\phi \, \mu),
    \end{equation}
    where $J\colon \mathcal{M}(\Omega) \to \R$, $\mu\mapsto \|\mu\|_{\text{TV}}$, $\chi_{B_{\infty}(Y,\epsilon)}$ is the indicator function of $B_{\infty}(Y,\epsilon)$, and $B_{\infty}(Y,\epsilon) = \{z\in \R^N \mid \|z-Y\|_{\ell^{\infty}}\leq \epsilon\} $. Let us consider $\mathcal{M}(\Omega)$ equipped with the weak-$*$ topology, then its dual space is $C(\Omega)$ equipped with its weak topology \cite[Prop.\@ 3.14]{brezis2011functional}. In this sense, we calculate the Fenchel conjugate of $J$:
    \begin{equation*}
        J^{*}(v) = \begin{cases}
            0,  \quad & \text{if } \|v\|_{\mathcal{C}(\Omega)} \leq 1 ,\\
            \infty, & \text{otherwise}.
        \end{cases}
    \end{equation*}
    Following \cite{rockafellar1967duality}, the dual problem of \eqref{pb:primal_epsilon} is
    \begin{equation}\label{pb:dual_epsilon}
    \begin{split}
        \sup_{\|\phi^{*} \, p\|_{\mathcal{C}(\Omega)} \leq 1} \inf_{z\in B_{\infty}(Y,\epsilon)} \langle z , p\rangle & = \sup_{\|\phi^{*} \, p\|_{\mathcal{C}(\Omega)} \leq 1} \left(\langle Y, p\rangle +
        \inf_{z\in B_{\infty}(0,\epsilon)} \langle z , p\rangle \right)\\
       & = \sup_{\|\phi^{*} \, p\|_{\mathcal{C}(\Omega)} \leq 1} \langle Y, p\rangle - \epsilon \|p\|_{\ell^1}.
    \end{split}
    \end{equation}
    By Theorem \ref{thm:NN_exists}, there exists $\mu_0$ such that $ \phi \, \mu_0 = Y$. Furthermore, $\|\mu_0\|_{\text{TV}}$ is finite and $ \chi_{B_{\infty}(Y,\epsilon)} $ is continuous at $  \phi \, \mu_0$. It follows the qualification condition for problem \eqref{pb:primal_epsilon}. Combining with the convexity and l.s.c.\@ property of problem \eqref{pb:primal_epsilon}, we obtain the strong duality by the Fenchel-Rockafellar Theorem \cite{rockafellar1967duality}. This implies $\text{val}\eqref{pb:primal_epsilon} = \text{val}\eqref{pb:dual_epsilon}$ and the existence of the dual solution. Therefore, $S\eqref{pb:NN_epsilon_dual}$ is non-empty. The convexity and compactness of $S\eqref{pb:NN_epsilon_dual}$ is straightforward by the concavity and the coercivity of problem \eqref{pb:NN_epsilon_dual}.

    \medskip
    \noindent\textbf{Step 2} (Sufficiency of \eqref{eq:primal_dual}). By the strong duality, $(\mu^{*}_{\epsilon}, p^{*}_{\epsilon}) \in \mathcal{M}(\Omega)\times \R^N$ is a couple of solutions of \eqref{pb:NN_epsilon_rel} and  \eqref{pb:NN_epsilon_dual} if and only if 
    \begin{equation}\label{eq:strong_duality}
       J(\mu_{\epsilon}^{*}) + \chi_{B_{\infty}(Y,\epsilon)} (\phi \,\mu_{\epsilon}^{*}) = - J^{*}( \phi^{*}\,p^{*}_{\epsilon}  ) - \chi^{*}_{B_{\infty}(Y,\epsilon)} (-p^{*}_{\epsilon}).
    \end{equation}
    
    Now, assume that $(\mu^{*}_{\epsilon}, p^{*}_{\epsilon})$ satisfies \eqref{eq:primal_dual}. By the first line of \eqref{eq:primal_dual} and the Fenchel's relation, we deduce that
    \begin{equation*}
         J(\mu_{\epsilon}^{*}) + J^{*}( \phi^{*}\,p^{*}_{\epsilon}  ) = \langle \phi^{*}\,p^{*}_{\epsilon}, \mu_{\epsilon}^{*}\rangle = \langle p^{*}_{\epsilon}, \phi\, \mu_{\epsilon}^{*}\rangle = \langle Y, p^{*}_{\epsilon} \rangle - \epsilon \|p^{*}_{\epsilon}\|_{\ell^{1}},
    \end{equation*}
    where the last equality is by the second line of \eqref{eq:primal_dual}. We also obtain that
    \begin{equation*}
        \chi_{B_{\infty}(Y,\epsilon)} (\phi \,\mu_{\epsilon}^{*}) = 0,\quad   - \chi^{*}_{B_{\infty}(Y,\epsilon)} (-p^{*}_{\epsilon})=  \inf_{z\in B_{\infty}(Y,\epsilon)} \langle z , p^{*}_{\epsilon}\rangle =  \langle Y, p^{*}_{\epsilon} \rangle - \epsilon \|p^{*}_{\epsilon}\|_{\ell^{1}}.
    \end{equation*}
    Therefore, \eqref{eq:strong_duality} holds true for $(\mu^{*}_{\epsilon}, p^{*}_{\epsilon})$.

 \medskip
    \noindent\textbf{Step 3} (Necessity of \eqref{eq:primal_dual}). 
    Assume that \eqref{eq:strong_duality} is true. This implies that
    \begin{equation*}
          J(\mu_{\epsilon}^{*}) + J^{*}( \phi^{*}\,p^{*}_{\epsilon}  ) = -\chi_{B_{\infty}(Y,\epsilon)} (\phi \,\mu_{\epsilon}^{*}) -\chi^{*}_{B_{\infty}(Y,\epsilon)} (-p^{*}_{\epsilon}) = \langle Y, p^{*}_{\epsilon} \rangle - \epsilon \|p^{*}_{\epsilon}\|_{\ell^{1}}.
    \end{equation*}
    By the definition of the Fenchel conjugate,
    \begin{equation*}
        J(\mu_{\epsilon}^{*}) + J^{*}( \phi^{*}\,p^{*}_{\epsilon}  ) \geq  \langle \phi^{*}\,p^{*}_{\epsilon}, \mu_{\epsilon}^{*}\rangle = \langle p^{*}_{\epsilon}, \phi\, \mu_{\epsilon}^{*}\rangle.
    \end{equation*}
Comparing the previous two inequalities, we obtain
    \begin{equation*}
         \epsilon \|p^{*}_{\epsilon}\|_{\ell^{1}} \leq \langle p^{*}_{\epsilon},  Y - \phi\, \mu_{\epsilon}^{*}\rangle.
    \end{equation*}
    Since $ \mu_{\epsilon}^{*}$ is a solution of \eqref{pb:NN_epsilon_rel}, we have $ \phi\, \mu_{\epsilon}^{*} \in B_{\infty}(Y,\epsilon)$. Combining this with the inequality above, we obtain the second line of \eqref{eq:primal_dual} along with the equality $ \epsilon \|p^{*}_{\epsilon}\|_{\ell^{1}} = \langle p^{*}_{\epsilon}, Y - \phi\, \mu_{\epsilon}^{*}\rangle$, which implies that
\[
    J(\mu_{\epsilon}^{*}) + J^{*}( \phi^{*}\,p^{*}_{\epsilon} ) = \langle \phi^{*}\,p^{*}_{\epsilon}, \mu_{\epsilon}^{*}\rangle.
\]
The first line of \eqref{eq:primal_dual} follows from Fenchel’s relation.
\end{proof}

For the case $\epsilon=0$, we have the following similar result to Lemma \ref{lm:dual} from \cite[Prop.\@ 13]{duval2015exact}. 
\begin{lem}\label{lm:dual_0}
    Let Assumption \ref{ass:sigma} hold true. The dual problem  of \eqref{pb:NN_exact_rel} is
    \begin{equation}\label{pb:NN_exact_dual}\tag{DR$_0$}
    \sup_{\|\phi^{*} \, p\|_{\mathcal{C}(\Omega)} \leq 1}  \langle Y, p \rangle.
\end{equation}
Then, the strong duality holds:
\begin{equation*}
\textnormal{val}\eqref{pb:NN_exact_rel} = \textnormal{val}\eqref{pb:NN_exact_dual}.
\end{equation*}
The solution set $S\eqref{pb:NN_exact_dual}$ is non-empty, convex, and compact.
\end{lem}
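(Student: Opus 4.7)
The plan is to mirror the argument of Lemma \ref{lm:dual} specialized to the case $\epsilon = 0$. I rewrite \eqref{pb:NN_exact_rel} as
\begin{equation*}
    \inf_{\mu \in \mathcal{M}(\Omega)} J(\mu) + \chi_{\{Y\}}(\phi\, \mu), \qquad J(\mu) = \|\mu\|_{\textnormal{TV}},
\end{equation*}
with $\mathcal{M}(\Omega)$ endowed with the weak-$*$ topology and $\mathcal{C}(\Omega)$ with its weak topology. The Fenchel conjugate of $J$ is $J^*(v) = 0$ if $\|v\|_{\mathcal{C}(\Omega)} \leq 1$ and $+\infty$ otherwise, exactly as in Step 1 of the proof of Lemma \ref{lm:dual}. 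The Fenchel-Rockafellar dual then formally yields $\sup_{\|\phi^* p\|_{\mathcal{C}(\Omega)} \leq 1} \inf_{z \in \{Y\}} \langle z, p\rangle = \sup_{\|\phi^* p\|_{\mathcal{C}(\Omega)} \leq 1} \langle Y, p\rangle$, which is \eqref{pb:NN_exact_dual}.

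For strong duality, the qualification used in Lemma \ref{lm:dual} fails because the singleton $\{Y\}$ has empty interior, so I cannot reproduce the continuity argument verbatim. I plan to invoke instead the finite-dimensional image refinement of the Fenchel-Rockafellar theorem: since $\phi$ takes values in the finite-dimensional space $\mathbb{R}^N$, the qualification reduces to $Y \in \textnormal{ri}(\textnormal{Im}(\phi)) \cap \textnormal{ri}(\{Y\})$, and Theorem \ref{thm:NN_exists} forces $\textnormal{Im}(\phi) = \mathbb{R}^N$, making the condition trivially satisfied. An alternative route, if one prefers to avoid the refined qualification, is to pass to the limit $\epsilon \to 0^+$ in Lemma \ref{lm:dual}: the function $\epsilon \mapsto \textnormal{val}\eqref{pb:NN_epsilon_rel}$ is convex and nonincreasing (hence right-continuous at $0$), and the dual values converge provided the dual feasible sets remain uniformly bounded, which is established in the next step.

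Next I would establish that
\begin{equation*}
    K = \{p \in \mathbb{R}^N : \|\phi^* p\|_{\mathcal{C}(\Omega)} \leq 1\}
\end{equation*}
is compact in $\mathbb{R}^N$. Surjectivity of $\phi$ onto $\mathbb{R}^N$ (Theorem \ref{thm:NN_exists}) implies injectivity of $\phi^*$. Since $\phi^*$ is a continuous linear map defined on the finite-dimensional space $\mathbb{R}^N$, injectivity yields a constant $c > 0$ with $\|\phi^* p\|_{\mathcal{C}(\Omega)} \geq c \|p\|$ for every $p$, so $K \subseteq \{p : \|p\| \leq 1/c\}$ is bounded. It is also closed by continuity of $p \mapsto \|\phi^* p\|_{\mathcal{C}(\Omega)}$, hence compact. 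The objective $p \mapsto \langle Y, p\rangle$ is linear and continuous, so the supremum in \eqref{pb:NN_exact_dual} is attained: $S\eqref{pb:NN_exact_dual}$ is non-empty, convex as the intersection of $K$ with the affine hyperplane $\{p : \langle Y, p\rangle = \textnormal{val}\eqref{pb:NN_exact_dual}\}$, and compact.

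The main obstacle is the qualification step for strong duality, since the equality constraint $\phi\mu = Y$ encodes a singleton with empty interior; the resolution rests crucially on Theorem \ref{thm:NN_exists}, which both delivers the relative-interior qualification in finite-dimensional image and supplies, via injectivity of $\phi^*$, the compactness of the dual feasible set in a single unified argument.
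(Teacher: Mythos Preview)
Your proof is correct and follows essentially the same route as the paper: both rely on Theorem \ref{thm:NN_exists} to deduce surjectivity of $\phi$, which simultaneously supplies the qualification for strong duality and, via injectivity of $\phi^*$, the boundedness of the dual feasible set. The only cosmetic differences are that the paper outsources strong duality to \cite[Prop.\@ 13]{duval2015exact} rather than invoking the relative-interior refinement you spell out, and for boundedness of $K$ the paper cites the open mapping theorem \cite[Cor.\@ 1.30]{bonnans2019convex} where you give the more elementary finite-dimensional argument (injective linear map on $\mathbb{R}^N$ is bounded below); your version is arguably cleaner here since the open mapping theorem is overkill when the domain of $\phi^*$ is finite-dimensional.
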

\begin{proof}
    The strong duality is from \cite[Prop.\@ 13]{duval2015exact}.
    It is easy to see that the feasible set of \eqref{pb:NN_exact_dual} is non-empty, convex, and closed. Let us prove that the feasible set is bounded. For any $\tilde{Y}\in \R^N $, by Theorem \ref{thm:NN_exists}, there exists $\mu\in \mathcal{M}(\Omega)$ such that $\phi\, \mu = \tilde{Y}$. By the arbitrariness of $\tilde{Y}$, we obtain that $\phi\colon \mathcal{M}(\Omega) \to \R^N$ is surjective. We deduce from the open mapping theorem that the set $\{p\in \R^N \,\mid\, \|\phi^{*} p\|_{\mathcal{C}(\Omega)} \leq 1\} $ is bounded (see \cite[Cor.\@ 1.30]{bonnans2019convex}), thus compact. Combining with the concavity and the continuity of the objective function, the conclusion follows.
\end{proof}

By Lemmas \ref{lm:dual} and \ref{lm:dual_0}, for any $\epsilon \geq 0$, the set
$ \mathcal{S}\eqref{pb:NN_epsilon_dual}$ is non-empty, compact, and convex. To simplify notations in the following results, we introduce the following variables, give any $\epsilon \geq 0$:
\begin{equation}\label{eq:c_epsilon}
    c_{\epsilon} = \min_{p} \left\{ \, \|p\|_{\ell^1}\, \mid\, p\in \mathcal{S}\eqref{pb:NN_epsilon_dual} \, \right\}, \quad C_{\epsilon} = \max_{p} \left\{ \, \|p\|_{\ell^1}\, \mid\, p\in \mathcal{S}\eqref{pb:NN_epsilon_dual} \, \right\}.
\end{equation}

\begin{lem}\label{lem:danskin}
    Let Assumption \ref{ass:sigma} hold true. Let $V\colon \R_{+} \to \R$, $\epsilon \mapsto \textnormal{val}\eqref{intro_pb:NN_epsilon}$.
    Then, the function $V$ is convex and l.s.c. Moreover,
    \begin{equation}\label{eq:subgradient_V}
        \partial V(\epsilon) = [-C_{\epsilon}, -c_{\epsilon}],\quad \text{for }\epsilon >0,
    \end{equation}
    where $\partial V$ is the subgradient of $V$, and $C_{\epsilon}$ and $c_{\epsilon}$ are defined in \eqref{eq:c_epsilon}. The right  derivative of $V$ at the point $0$ is \begin{equation}\label{eq:right_derivative}
        V'(0^{+}) = - c_0.
    \end{equation}
    where $c_0$ is defined in \eqref{eq:c_epsilon} with $\epsilon=0$.
\end{lem}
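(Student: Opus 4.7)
The plan is to exploit the dual representation provided by Theorem \ref{thm:NN_exists_0} together with Lemmas \ref{lm:dual} and \ref{lm:dual_0}, and then invoke Danskin's theorem. By these results, for every $\epsilon \geq 0$,
\begin{equation*}
    V(\epsilon) = \sup_{p \in K} \bigl( \langle Y, p \rangle - \epsilon \|p\|_{\ell^1} \bigr), \qquad K \coloneqq \{p \in \R^N : \|\phi^{*} p\|_{\mathcal{C}(\Omega)} \leq 1\},
\end{equation*}
where $K$ is convex, independent of $\epsilon$, and compact (this was shown in the proof of Lemma \ref{lm:dual_0}). As a supremum of a family of affine functions of $\epsilon$, $V$ is immediately convex and l.s.c.\@ on $\R_+$ (in fact continuous, because $K$ is compact and the integrand is jointly continuous).

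For $\epsilon > 0$, I would establish \eqref{eq:subgradient_V} via Danskin's theorem. Setting $f(p,\epsilon) \coloneqq \langle Y, p\rangle - \epsilon \|p\|_{\ell^1}$, which is jointly continuous and affine in $\epsilon$ with $\partial_{\epsilon} f(p,\epsilon) = \{-\|p\|_{\ell^1}\}$, compactness of $K$ together with nonemptiness and compactness of $S\eqref{pb:NN_epsilon_dual}$ (Lemma \ref{lm:dual}) yields
\begin{equation*}
    \partial V(\epsilon) = \operatorname{conv}\bigl\{ -\|p\|_{\ell^1} : p \in S\eqref{pb:NN_epsilon_dual} \bigr\}.
\end{equation*}
Since $S\eqref{pb:NN_epsilon_dual}$ is convex and compact and $\|\cdot\|_{\ell^1}$ is continuous, its image is a connected compact subset of $\R$, hence a closed interval. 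By the definition \eqref{eq:c_epsilon} this interval is $[c_\epsilon, C_\epsilon]$, giving $\partial V(\epsilon) = [-C_\epsilon, -c_\epsilon]$.

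For \eqref{eq:right_derivative}, I would use a sandwich argument. Choose $p_0^{*} \in S\eqref{pb:NN_exact_dual}$ with $\|p_0^{*}\|_{\ell^1} = c_0$; since $p_0^{*}$ is feasible for \eqref{pb:NN_epsilon_dual}, $V(\epsilon) \geq V(0) - \epsilon c_0$. Conversely, any $p_\epsilon^{*} \in S\eqref{pb:NN_epsilon_dual}$ is feasible for \eqref{pb:NN_exact_dual}, so $\langle Y, p_\epsilon^{*}\rangle \leq V(0)$ and $V(\epsilon) - V(0) \leq -\epsilon \|p_\epsilon^{*}\|_{\ell^1}$. Combining,
\begin{equation*}
    -c_0 \;\leq\; \frac{V(\epsilon) - V(0)}{\epsilon} \;\leq\; -\|p_\epsilon^{*}\|_{\ell^1},
\end{equation*}
which in particular forces $\|p_\epsilon^{*}\|_{\ell^1} \leq c_0$. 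To close the sandwich I would show $\|p_\epsilon^{*}\|_{\ell^1} \to c_0$ as $\epsilon \to 0^+$: any cluster point $\bar p$ of $\{p_\epsilon^{*}\}$ exists by compactness of $K$ and satisfies $\langle Y, \bar p\rangle = V(0)$ by passing to the limit in $V(\epsilon) = \langle Y, p_\epsilon^{*}\rangle - \epsilon \|p_\epsilon^{*}\|_{\ell^1}$, so $\bar p \in S\eqref{pb:NN_exact_dual}$ and hence $\|\bar p\|_{\ell^1} \geq c_0$; combined with $\|p_\epsilon^{*}\|_{\ell^1} \leq c_0$ and lower semicontinuity of $\|\cdot\|_{\ell^1}$ this forces the claimed convergence.

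The main technical point is this selection argument at the boundary $\epsilon = 0$: Danskin's theorem cleanly delivers $\partial V(\epsilon)$ on the interior of $\R_+$, but the one-sided derivative $V'(0^{+})$ requires tracking how maximizers of \eqref{pb:NN_epsilon_dual} concentrate on the minimal-$\ell^1$-norm solutions of \eqref{pb:NN_exact_dual} in the limit. The remaining steps (Danskin, connectedness of continuous images, lower semicontinuity of $V$) are routine.
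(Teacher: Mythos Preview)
Your proposal is correct and follows essentially the same route as the paper: both identify $V(\epsilon)$ with the dual value $\sup_{p\in K}\bigl(\langle Y,p\rangle-\epsilon\|p\|_{\ell^1}\bigr)$ over the compact set $K=\{p:\|\phi^*p\|_{\mathcal{C}(\Omega)}\le 1\}$, and then invoke Danskin's theorem to obtain convexity, lower semicontinuity, and the subdifferential formula $\partial V(\epsilon)=\operatorname{conv}\{-\|p\|_{\ell^1}:p\in S\eqref{pb:NN_epsilon_dual}\}=[-C_\epsilon,-c_\epsilon]$ via connectedness.

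The only methodological difference is at the boundary point $\epsilon=0$. You stay on $\R_+$, so Danskin does not directly apply at the endpoint, and you recover $V'(0^+)=-c_0$ by a compactness/selection sandwich (showing $\|p_\epsilon^*\|_{\ell^1}\le c_0$ and that cluster points land in $S\eqref{pb:NN_exact_dual}$). The paper instead extends $V$ to a function $\bar V$ defined on all of $\R$ by the same sup formula; then $0$ is an interior point and the directional-derivative part of Danskin's theorem immediately gives $\bar V'(0;1)=\max_{p\in S\eqref{pb:NN_exact_dual}}(-\|p\|_{\ell^1})=-c_0$. This extension trick is shorter and sidesteps the limiting argument, while your approach has the merit of being self-contained and not relying on the directional-derivative clause of Danskin.
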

\begin{proof}
  Let $\mathcal{Z}\coloneqq \{p\in \R^N \,\mid\, \|\phi^{*} p\|_{\mathcal{C}(\Omega)} \leq 1\}$. We know that $\mathcal{Z}$ is non-empty, convex, and compact by the proof of Lemma \ref{lm:dual_0}.  Consider the function $\bar{V}\colon \R \to \R,\, \epsilon \mapsto \sup_{p\in \mathcal{Z}} \langle Y, p \rangle -\epsilon \|p\|_{\ell^1}$.
    By the strong duality in Lemmas \ref{lm:dual} and \ref{lm:dual_0}, we have that $V(\epsilon) = \bar{V}(\epsilon)$ for $\epsilon \geq 0$.
    The assumptions outlined in Danskin's Theorem \cite[Prop.\@ A.3.2]{bertsekas2009convex} hold true for $\bar{V}$.
     We then deduce that $\bar{V}$ is convex and l.s.c., and 
     \begin{equation*}
         \partial \bar{V}(\epsilon)= \conv\{-\|p\|_{\ell^1}\, \mid\, p\in S\eqref{pb:NN_epsilon_dual}\} = [-C_{\epsilon},-c_{\epsilon}], \quad \text{for } \epsilon >0,
     \end{equation*} 
     where the second equality comes from the fact that $S\eqref{pb:NN_epsilon_dual}$ is connected, which is a result of its convexity.
     The same conclusion holds true for $V$ by its equivalence to $\bar{V}$. From the formula for the directional derivative in Danskin's Theorem, we obtain \eqref{eq:right_derivative}.
\end{proof}

The following theorem gives the optimality condition for problem \eqref{pb:parameter}. Recall that $C_{X,X'} = d_{\text{KR}}(m_X,m_{X'})$ and the definition of $c_{\epsilon}$ and $C_{\epsilon}$ from \eqref{eq:c_epsilon}.

\begin{thm}\label{thm:epsilon}
    Let Assumption \ref{ass:sigma} hold true. Then problem \eqref{pb:parameter} is convex and has solutions. 
    Moreover, the following holds: \begin{enumerate}
        \item If $C_{X,X'} < c_0^{-1}$,
        then $ S\eqref{pb:parameter}=\{0\}$.
        \item If $C_{X,X'} \geq c_0^{-1}$, then $ \epsilon \in  S\eqref{pb:parameter}$ if and only if
        \begin{equation}\label{eq:optimality}
            C_{X,X'}^{-1}\in [c_{\epsilon}, C_{\epsilon}].
        \end{equation}
    \end{enumerate}
\end{thm}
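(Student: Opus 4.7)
The plan is to treat \eqref{pb:parameter} as a one-dimensional convex program on $[0,\infty)$ and extract everything from the subdifferential formula for $V(\epsilon):=\textnormal{val}\eqref{intro_pb:NN_epsilon}$ supplied by Lemma \ref{lem:danskin}. First I would verify the structural claims: $\mathcal{U}(\epsilon)=\epsilon+C_{X,X'}V(\epsilon)$ is the sum of a linear term and a nonnegative multiple of the convex l.s.c.\ function $V$, hence convex and l.s.c.\ on $[0,\infty)$; since $V\geq 0$, we have $\mathcal{U}(\epsilon)\geq\epsilon$, so $\mathcal{U}$ is coercive on the closed half-line and its infimum is attained.

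The heart of the argument is the first-order optimality condition on $[0,\infty)$. For an interior minimizer $\epsilon^*>0$, Lemma \ref{lem:danskin} yields
\[
\partial\mathcal{U}(\epsilon^*)=[\,1-C_{X,X'}C_{\epsilon^*},\,1-C_{X,X'}c_{\epsilon^*}\,],
\]
so $0\in\partial\mathcal{U}(\epsilon^*)$ is exactly $C_{X,X'}^{-1}\in[c_{\epsilon^*},C_{\epsilon^*}]$, i.e.\ condition \eqref{eq:optimality}. For the boundary candidate $\epsilon^*=0$, the one-sided condition $\mathcal{U}'(0^+)\geq 0$ together with $V'(0^+)=-c_0$ from \eqref{eq:right_derivative} reduces to $C_{X,X'}\leq c_0^{-1}$.

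The main technical ingredient I expect to invoke is the monotonicity bound $C_{\epsilon^*}\leq c_0$ valid for every $\epsilon^*>0$. This comes from the standard convex-analysis inequality $V'_+(a)\leq V'_-(b)$ whenever $a<b$: applied with $a=0$ and $b=\epsilon^*$ it yields $-c_0=V'_+(0)\leq V'_-(\epsilon^*)=-C_{\epsilon^*}$. This estimate, which requires carefully distinguishing left and right derivatives of $V$ rather than treating it as differentiable, is the bridge between the interior optimality condition and the threshold $c_0^{-1}$ appearing in the statement, and I anticipate it to be the most delicate step of the argument.

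With these ingredients the two cases close cleanly. If $C_{X,X'}<c_0^{-1}$, any interior minimizer would force $C_{X,X'}^{-1}\leq C_{\epsilon^*}\leq c_0$, contradicting $C_{X,X'}^{-1}>c_0$; moreover $\mathcal{U}'(0^+)=1-C_{X,X'}c_0>0$ together with convexity gives $\mathcal{U}(\epsilon)>\mathcal{U}(0)$ for every $\epsilon>0$, so $S\eqref{pb:parameter}=\{0\}$. If $C_{X,X'}\geq c_0^{-1}$, interior minimizers are exactly characterized by \eqref{eq:optimality}, while the boundary point $\epsilon=0$ satisfies \eqref{eq:optimality} (read at $\epsilon=0$) iff $C_{X,X'}^{-1}=c_0$, which by the boundary analysis is precisely when $\epsilon=0$ is optimal in this regime; thus \eqref{eq:optimality} provides a unified characterization of $S\eqref{pb:parameter}$ over $\epsilon\geq 0$.
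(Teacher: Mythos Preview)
Your proposal is correct and follows essentially the same route as the paper: reduce everything to one-dimensional convex analysis on $[0,\infty)$, invoke Lemma \ref{lem:danskin} for $\partial V$ and $V'(0^+)$, and read off the optimality conditions. One remark: the monotonicity bound $C_{\epsilon^*}\leq c_0$ you flag as ``the most delicate step'' is in fact unnecessary---the paper dispatches case (1) directly from $\mathcal{U}'(0^+)=1-C_{X,X'}c_0>0$ together with convexity (which you also state), without ever comparing $C_{\epsilon^*}$ to $c_0$.
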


\begin{proof}
It is easy to observe that \eqref{intro_pb:NN_epsilon} has the unique solution $0$ for $\epsilon \geq \|Y\|_{\ell^{\infty}}$, implying that the objective function $\mathcal{U}$  is coercive. By Lemma \ref{lem:danskin}, $\mathcal{U}$ is convex and l.s.c. The existence of solutions follows.

From \eqref{eq:right_derivative}, we obtain that $\mathcal{U}'(0^{+}) > 0$ when \(C_{X,X'} < c_0^{-1}\). This implies that $0$ is the unique minimizer by the convexity of $\mathcal{U}$. Thus, point (1) holds.
    
Let us prove statement (2). Assume that \( C_{X,X'} \geq c_0^{-1} \). It follows that \(\mathcal{U}'(0^{+}) \leq 0\).
Then, any point \(\epsilon > 0\) is a minimizer of \eqref{pb:parameter} if and only if \(0 \in \partial \mathcal{U}(\epsilon)\), which is equivalent to the inclusion relation in point (2) by \eqref{eq:subgradient_V}.
Additionally, \(0 \in S\eqref{pb:parameter}\) if and only if \(C_{X,X'}  = d_{\textnormal{KR}}(m_X, m_{X'}) = c_0^{-1}\). This is covered by point (2) by taking \(\epsilon = 0\).
\end{proof}

\end{document}